\def\eqref#1{equation~\ref{#1}}
\def\1{\bm{1}}
\DeclareMathAlphabet{\mathsfit}{\encodingdefault}{\sfdefault}{m}{sl}
\SetMathAlphabet{\mathsfit}{bold}{\encodingdefault}{\sfdefault}{bx}{n}
\newtheorem{theorem}{Theorem}
\newtheorem{lemma}{Lemma}
\newtheorem{definition}{Definition}
\newtheorem{proof}{Proof}
\newtheorem{corollary}{Corollary}
\newcommand{\std}[1]{\scriptsize{$\pm$#1}}
\newcommand{\gray}[0]{\cellcolor{gray!20}}
\title{Certified Signed Graph Unlearning}
\author{Junpeng Zhao\textsuperscript{1} \quad Lin Li\textsuperscript{*1} \quad Kaixi Hu\textsuperscript{2} \quad Kaize Shi\textsuperscript{3} \quad Jingling Yuan\textsuperscript{1} \\
\textsuperscript{1}Wuhan University of Technology \quad \textsuperscript{2}Wuhan Textile University \\ \textsuperscript{3}University of Southern Queensland 
\\
\texttt{311422@whut.edu.cn} \quad \texttt{cathylilin@whut.edu.cn} \quad \texttt{kxhu@wtu.edu.cn} \quad \\ \texttt{Kaize.Shi@unisq.edu.au} \quad
\texttt{yjl@whut.edu.cn}
\\
$^*$ indicates co-second authors
}
\begin{document}

\maketitle

\begin{abstract}

Data protection has become increasingly stringent, and the reliance on personal behavioral data for model training introduces substantial privacy risks, rendering the ability to selectively remove information from models a fundamental requirement. This issue is particularly challenging in signed graphs, which incorporate both positive and negative edges to model privacy information, with applications in social networks, recommendation systems, and financial analysis. While graph unlearning seeks to remove the influence of specific data from Graph Neural Networks (GNNs), existing methods are designed for conventional GNNs and fail to capture the heterogeneous properties of signed graphs. Direct application to Signed Graph Neural Networks (SGNNs) leads to the neglect of negative edges, which undermines the semantics of signed structures. To address this gap, we introduce \textbf{\underline{C}ertified \underline{S}igned \underline{G}raph \underline{U}nlearning} (CSGU), a method that provides provable privacy guarantees underlying SGNNs. CSGU consists of three stages: (1) efficiently identifying minimally affected neighborhoods through triangular structures, (2) quantifying node importance for optimal privacy budget allocation by leveraging the sociological theories of SGNNs, and (3) performing weighted parameter updates to enable certified modifications with minimal utility loss. Extensive experiments show that CSGU outperforms existing methods and achieves more reliable unlearning effects on SGNNs, which demonstrates the effectiveness of integrating privacy guarantees with signed graph semantic preservation. Codes and datasets are available at https://anonymous.4open.science/r/CSGU-94AF.

\end{abstract}

\section{Introduction}

\setlength{\intextsep}{0pt}
\begin{wrapfigure}{br}{0.4\linewidth}
  \includegraphics[width=\linewidth]{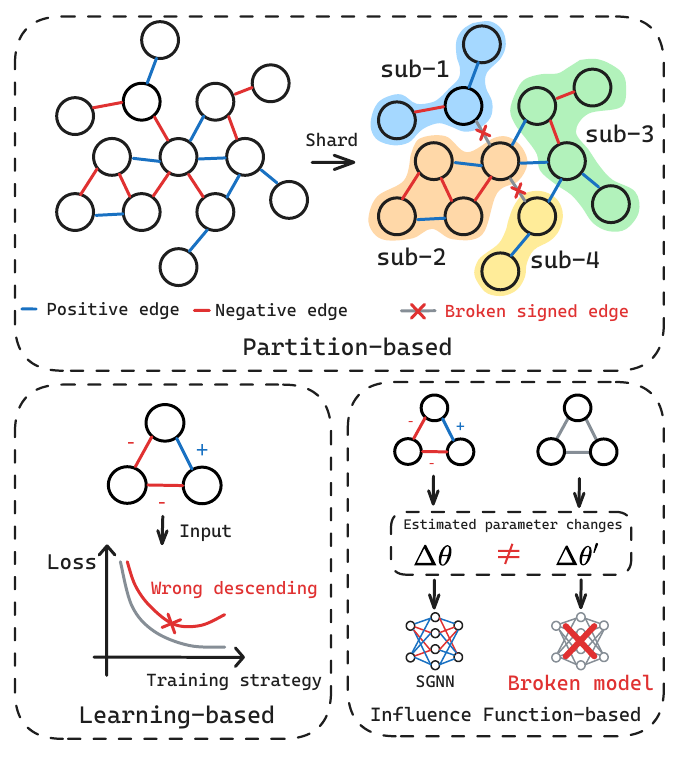}
  \caption{The limitations of directly applying existing graph unlearning methods to signed graphs.}
  \label{fig:GU}
\end{wrapfigure}

Signed graphs have become important data structures for modeling complex relationships across diverse domains, spanning trust–distrust social networks~\citep{diaz2025signed}, financial networks~\citep{ren2025shape}, biological systems~\citep{he2025signed}, and recommender systems with positive and negative feedback~\citep{huang2023negative}. Conventional GNNs assume that connected nodes share similar features, but negative edges in signed graphs encode dissimilar relationships that often connect nodes with oppositional or conflicting characteristics. To address this challenge, Signed Graph Neural Networks (SGNNs) leverage established sociological theories~\citep{heider1946attitudes, leskovec2010signed} to effectively model information propagation along signed edges~\citep{he2025signed}. Despite their effectiveness, the widespread deployment of SGNNs raises privacy concerns as they process sensitive personal information, including individual behavior patterns and preferences~\citep{zhao2024learning}. With the growing emphasis on privacy protection, users increasingly demand the deletion of their data from trained models. Graph unlearning~\citep{bourtoule2021machine} enables selective data removal from trained GNNs without requiring complete retraining. 

Despite recent advances in graph unlearning, existing methods face challenges when applied to signed graphs, primarily due to violations of the homophily assumption~\citep{zheng2024missing}. As shown in Figure~\ref{fig:GU}, current graph unlearning includes: (1) \textit{Partition-based} methods~\citep{chen2022gu} achieve exact unlearning by dividing the training set into shards and training separate models, but fail to maintain the distinction between positive and negative edges; (2) \textit{Learning-based} methods~\citep{cheng2023gnndelete, li2025towards} adjust model parameters through specialized training strategies, yet their loss functions ignore edge signs, causing gradient directions that amplify rather than eliminate the influence of deleted negative edges; (3) \textit{Influence function (IF)-based} methods~\citep{wu2023gif}, including \textit{certified graph unlearning}~\citep{wu2023ceu,dong2024idea}, estimate the impact of data removal to update parameters, but their Hessian computation treats all edges uniformly, yielding incorrect update magnitudes and directions for sign-dependent message passing. 

Given the diverse application scenarios of signed graphs and the complexity of SGNNs, we leverage the theoretical guarantees of certified unlearning to ensure both model utility and unlearning effectiveness. Therefore, we propose \textbf{Certified Signed Graph Unlearning} (CSGU), which extends certified graph unlearning from standard GNNs to SGNNs while providing provable privacy guarantees. As shown in Figure~\ref{fig:overview}, CSGU employs a three-stage method: (1) \textit{Triadic Influence Neighborhood} (TIN) efficiently identifies minimal influenced neighborhoods via triangular structures, reducing  the computational scope while preserving complete influence propagation; (2) \textit{Sociological Influence Quantification} (SIQ) applies sociological theories—balance theory and status theory~\citep{heider1946attitudes,leskovec2010signed}—to quantify node importance for optimal privacy budget allocation within TIN; (3) \textit{Weighted Certified Unlearning} (WCU) performs importance-weighted parameter updates using weighted binary cross-entropy loss and influence functions, achieving certified modifications with $(\epsilon, \delta)$-differential privacy (DP) guarantees while minimizing utility degradation. We conduct experiments on four signed graph datasets with four SGNN backbones, comparing our method against advanced graph unlearning methods. We evaluate utility retention using edge sign prediction tasks~\citep{huang2021sdgnn} and assess unlearning effectiveness using membership inference (MI) attacks~\citep{shamsabadi2024confidential}. Experimental results show that, compared with the representative graph unlearning methods, CSGU achieves up to 13.9\% relative improvement in utility retention and 10.9\% in unlearning effectiveness.

\textbf{Contributions}. Our contributions are as follows: (1) We formulate the problem of certified unlearning on signed graphs, highlighting the limitations of existing methods. In response, we propose a certified unlearning method specifically tailored for SGNNs. (2) We devise a novel influence quantification mechanism, grounded in sociological principles, to accurately quantify the node influence during unlearning. (3) We develop a weighted certified unlearning method that uses the sociological influence metric to guide parameter updates, providing provable and fine-grained privacy guarantees. (4) Through extensive experiments on multiple signed graph datasets, we demonstrate that our method achieves superior performance in both unlearning effectiveness and utility retention compared to existing graph unlearning methods.

\section{Related Work}

\textbf{Signed Graph Neural Networks.}
Signed graphs have become increasingly prevalent in modeling real-world networks~\citep{pougue2023learning,diaz2025signed} that involve both positive and negative relationships. These include social networks~\citep{leskovec2010signed} with trust and distrust dynamics, financial systems~\citep{park2024graph} capturing investment correlations, and recommendation systems processing mixed user feedback~\citep{ren2025shape}. Due to the heterogeneous nature of signed graphs, conventional GNN information aggregation mechanisms are inadequate for SGNNs, as neighboring nodes may have adversarial relationships. Therefore, SGNNs employ sociological theories for modeling: (1) \textit{Balance theory}~\citep{heider1946attitudes} addresses relational consistency between individuals; (2) \textit{Status theory}~\citep{leskovec2010signed} measures hierarchical consistency between individuals and their neighbors. The application of SGNNs in these sensitive domains raises privacy concerns, as users may require data deletion when relationships change, regulations evolve, or privacy breaches occur~\citep{huang2023negative}.

\textbf{Graph Unlearning.} 
Existing graph unlearning methods, such as GraphEraser~\citep{chen2022gu}, GNNDelete~\citep{cheng2023gnndelete}, and GIF~\citep{wu2023gif}, have developed effective methods for conventional GNNs based on different principles. Methods like IDEA~\citep{dong2024idea} achieve provable privacy guarantees through differential privacy (DP)~\citep{dwork2006differential, zhang2024towards} mechanisms, establishing theoretical foundations for the \textit{certified unlearning}~\citep{chien2022cgu,wu2023ceu} of conventional GNNs. However, these methods assume unsigned graphs and fail to account for the heterogeneous nature of positive and negative edges in signed graphs, thereby necessitating specialized approaches for signed graphs.

\section{Preliminaries}

Let $\mathcal{G} = (\mathcal{V}, \mathcal{E}^+, \mathcal{E}^-, \mathbf{X})$ be a signed graph with $n = |\mathcal{V}|$ nodes, where $\mathcal{E}^+ \subseteq \mathcal{V} \times \mathcal{V}$ and $\mathcal{E}^- \subseteq \mathcal{V} \times \mathcal{V}$ denote the positive and negative edge sets, respectively, and $\mathbf{X} = \{\mathbf{x}_1, \mathbf{x}_2, \ldots, \mathbf{x}_n\} \in \mathbb{R}^{n \times d_f}$ represents $d_f$-dimensional node features with $\mathbf{x}_i \in \mathbb{R}^{d_f}$. The complete edge set is $\mathcal{E} = \mathcal{E}^+ \cup \mathcal{E}^-$. We use $\text{deg}_{\mathcal{G}}(v)$ to denote the degree of node $v$. The signed adjacency matrix $\mathbf{A}^s \in \mathbb{R}^{n \times n}$ is defined by its elements $\mathbf{A}^s_{ij} = +1$ if $(v_i, v_j) \in \mathcal{E}^+$, $\mathbf{A}^s_{ij} = -1$ if $(v_i, v_j) \in \mathcal{E}^-$, and $\mathbf{A}^s_{ij} = 0$ otherwise. We use $\mathcal{S}^k_{uv} = (\mathcal{V}^k_{uv}, \mathcal{E}^k_{uv}, \mathbf{X}^k_{uv})$ to denote the $k$-hop enclosing subgraph around nodes $u$ and $v$.

\textbf{Signed Graph Neural Networks.} 
A signed GNN layer $g^s$ performs the transformation $g^s(\mathcal{G}) = (\text{UPD} \circ \text{AGG} \circ \text{MSG})(\mathcal{G})$~\citep{wu2023graph} to produce $n$ $d$-dimensional node representations $\mathbf{h}_u^{(l)}$ for $u \in \mathcal{V}$. The message function computes $\mathbf{m}_{uv}^{(l)} = \text{MSG}(\mathbf{h}_u^{(l-1)}, \mathbf{h}_v^{(l-1)}, \mathbf{A}^s_{uv})$. The aggregation function combines messages from positive and negative neighbors: $\mathbf{p}_u^{(l)} = \text{AGG}(\{\mathbf{m}_{uv}^{(l)} \mid v \in \mathcal{N}^+_u\}, \{\mathbf{m}_{uv}^{(l)} \mid v \in \mathcal{N}^-_u\})$, where $\mathcal{N}^+_u = \{v \in \mathcal{V} \mid (u, v) \in \mathcal{E}^+\}$ and $\mathcal{N}^-_u = \{v \in \mathcal{V} \mid (u, v) \in \mathcal{E}^-\}$ are the positive and negative neighbor sets. The update function yields $\mathbf{h}_u^{(l)} = \text{UPD}(\mathbf{p}_u^{(l)}, \mathbf{h}_u^{(l-1)})$. The final node representation is $\mathbf{z}_u = \mathbf{h}_u^{(L)}$, where $L$ is the number of layers. A complete SGNN model $f_{\boldsymbol{\theta}}: \mathcal{G} \to \mathbb{R}^{|\mathcal{E}|}$ is formed by composing these layers with a final prediction head.

\textbf{Graph Unlearning.} \label{sec:unlearning_scenarios}
Let $\mathcal{E}_d \subseteq \mathcal{E}$ denote edges to be deleted and $\mathcal{E}_r = \mathcal{E} \setminus \mathcal{E}_d$ the remaining edges. The resulting graph is $\mathcal{G}_r = (\mathcal{V}_r, \mathcal{E}_r, \mathbf{X}_r)$ where $\mathcal{V}_r = \{u \in \mathcal{V} \mid \text{deg}_{\mathcal{G}_r}(u) > 0\}$. Given a trained model $f_{\boldsymbol{\theta}}: \mathcal{G} \to \mathbb{R}^{|\mathcal{E}|}$, graph unlearning seeks an unlearned model $f_{\boldsymbol{\theta}'}: \mathcal{G}_r \to \mathbb{R}^{|\mathcal{E}_r|}$ that behaves as if elements in $\mathcal{E}_d$ were never used during training. 
Graph unlearning mainly has three scenarios: (1) \textit{Node unlearning} removes nodes $\mathcal{V}_d \subseteq \mathcal{V}$ and incident edges, yielding $\mathcal{G}_r = (\mathcal{V} \setminus \mathcal{V}_d, \mathcal{E}_r, \mathbf{X}_r)$. (2) \textit{Edge unlearning} removes edge set $\mathcal{E}_d \subseteq \mathcal{E}$ while preserving nodes. (3) \textit{Node Feature Deletion} removes or modifies node features $\mathbf{X}_d$ while preserving the graph structure, resulting in $\mathcal{G}_r = (\mathcal{V}, \mathcal{E}, \mathbf{X} \setminus \mathbf{X}_d)$. 
Our work extends the principles of certified unlearning~\citep{chien2022cgu} to the domain of signed graphs. Formally, an unlearning algorithm $\mathcal{U}$ is $(\epsilon, \delta)$-certified if for any dataset $\mathcal{G}$, any set of edges to be deleted $\mathcal{E}_d \subseteq \mathcal{E}$, and any set of models $\mathcal{S}$, the following inequality holds:
$\Pr(\mathcal{U}(\mathcal{G}, \mathcal{E}_d) \in \mathcal{S}) \leq e^{\epsilon} \Pr(f_{\boldsymbol{\theta}_r} \in \mathcal{S}) + \delta
$, where $\mathcal{G}_r$ is the graph after deleting, and the probability is over the randomness of $\mathcal{U}$ and the retraining process. The parameters $\epsilon > 0$ and $\delta \in [0, 1)$ represent the privacy budget and failure probability. This definition ensures the output of $\mathcal{U}$ is statistically indistinguishable from a model $f_{\boldsymbol{\theta}_r}$ retrained from scratch on $\mathcal{G}_r$.

\section{Certified Signed Graph Unlearning}

Applying certified unlearning to signed graphs is challenging because differential privacy requirements conflict with the heterophilic nature of negative edges. Existing methods treat all edges uniformly~\citep{mai2024knowledge}, ignoring the complex influence patterns of positive and negative relationship. We propose \textbf{Certified Signed Graph Unlearning} (CSGU), which integrates sociological theories~\citep{heider1946attitudes,leskovec2010signed} with differential privacy mechanisms through three key processes: (1) constructing \emph{triadic influence neighborhoods} to capture influence propagation via triangular structures, (2) \emph{sociological influence quantification} that weights edges by their semantic importance using balance and status theories, and (3) \emph{weighted certified unlearning} with calibrated noise injection to optimize privacy budget allocation while maintaining utility.

\begin{figure}[t]
    \centering
    \includegraphics[width=1.0\linewidth]{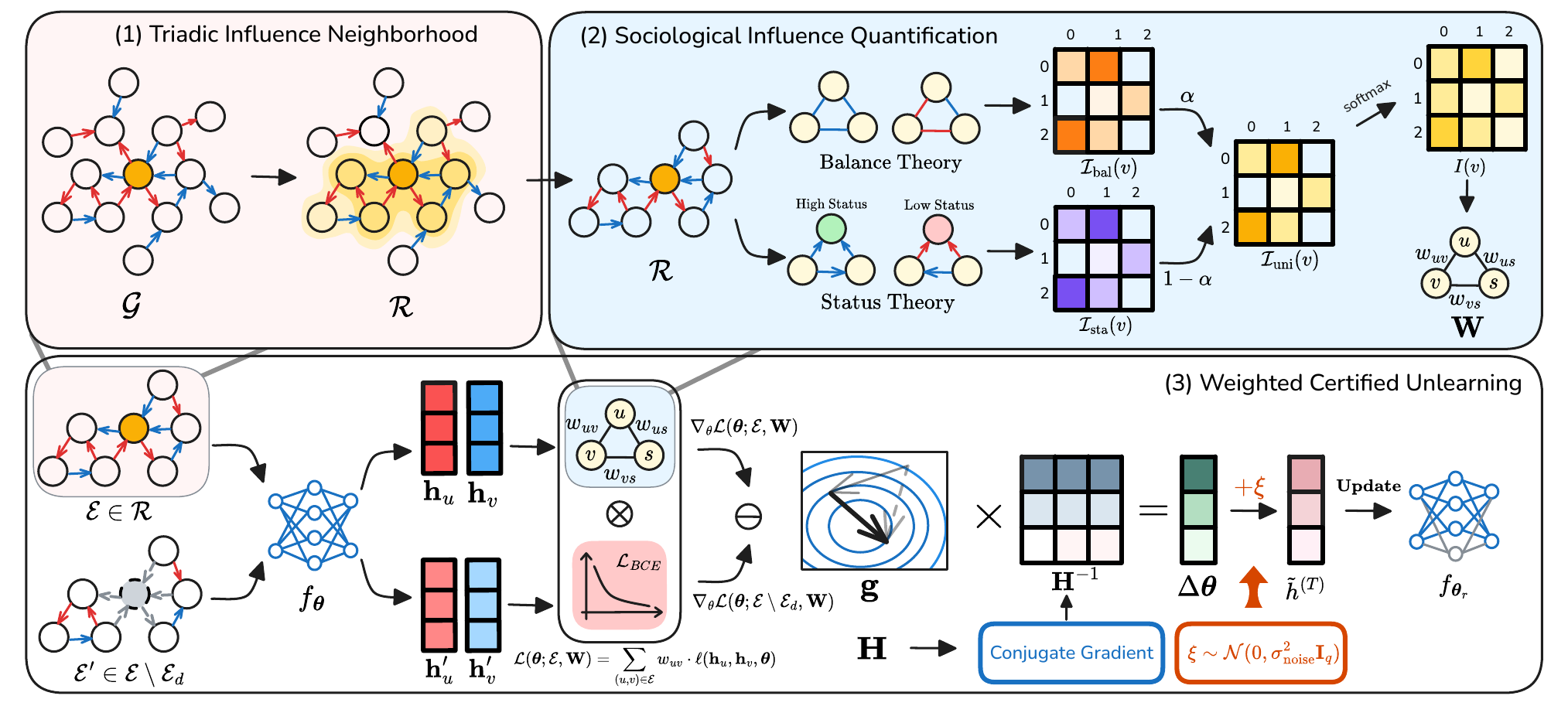}
    \caption{Overview of the proposed Certified Signed Graph Unlearning (CSGU). The influenced neighborhood $\mathcal{R}$ is formed after deleting a node from $\mathcal{G}$.}
    \label{fig:overview}
\end{figure}

\subsection{Triadic Influence Neighborhood}

Conventional certified unlearning methods use fixed $k$-hop neighborhoods~\citep{wu2023ceu,dong2024idea} to identify certification regions affected by deletion. However, this approach causes exponential region growth and ignores the semantic structure of positive/negative relationships in signed graphs. We propose an alternative based on structural balance theory, where influence propagates through triadic closures~\citep{huang2021sdgnn} where triangle stability depends on edge signs. We formalize the requirements of a valid certification region as follows:

\begin{definition}[Influence Completeness]
\label{def:completeness}
Let $\mathcal{R} \subseteq \mathcal{E}$ denote the certification region and $\mathcal{E}_d \subseteq \mathcal{E}$ denote the unlearning set. The certification region $\mathcal{R}$ satisfies the \textbf{completeness property} if, for any edge $(u,v) \notin \mathcal{R}$, the gradient orthogonality condition holds:
\begin{equation}
\label{eq:orthogonality}
\langle \nabla_{\boldsymbol{\theta}} \mathcal{L}(\{(u,v)\}, \boldsymbol{\theta}), \nabla_{\boldsymbol{\theta}} \mathcal{L}(\mathcal{E}_d, \boldsymbol{\theta}) \rangle = 0
\end{equation}
where $\mathcal{L}: \mathcal{E} \times \Theta \to \mathbb{R}$ denotes the loss function and $\boldsymbol{\theta} \in \Theta$ represents the model parameters.
\end{definition}

The orthogonality condition ensures edges outside $\mathcal{R}$ remain statistically independent from the unlearning process. Our TIN construction (Equation~\ref{eq:expansion}) approximates this condition by leveraging triadic closure patterns~\citep{leskovec2010signed}, capturing influence dependencies more precisely than uniform $k$-hop expansion. Starting from the deletion set $\mathcal{E}_d$, we expand the certification region $\mathcal{R}^{(k)}$ at iteration $k$ by including edges that form triadic closures with previously certified edges:
\begin{equation}
\label{eq:expansion}
\mathcal{R}^{(k)} = \mathcal{R}^{(k-1)} \cup \left\{ e \in \mathcal{E} \setminus \mathcal{R}^{(k-1)} : \exists e' \in \mathcal{R}^{(k-1)}, \mathbbm{1}_{\text{TC}}(e, e') = 1 \right\}
\end{equation}
where $\mathcal{R}^{(0)} = \mathcal{E}_d$, and $\mathbbm{1}_{\text{TC}}(e, e')$ indicates whether edges $e$ and $e'$ form a triadic closure, meaning they share a vertex and their remaining endpoints are connected. The process continues until convergence: $\mathcal{R}^{(k)} = \mathcal{R}^{(k-1)}$. The TIN expansion converges in finite steps since $\mathcal{R}^{(k)}$ is monotonically non-decreasing and $|\mathcal{E}|$ is finite. Convergence typically occurs within 2-4 iterations due to sparse triangular structures~\citep{leskovec2010signed}. Complexity bounds and convergence guarantees are provided in Appendix~\ref{thm:complexity} and~\ref{thm:convergence}.

\subsection{Sociological Influence Quantification}

After establishing the certification region $\mathcal{R}$, we address the challenge of applying differential privacy mechanisms within this region. To achieve differential privacy, noise should be added to each edge~\citep{ghazi2021deep} within $\mathcal{R}$. However, existing methods apply uniform noise, which fails to capture the heterogeneous influence patterns in signed graphs. We propose a weighting scheme that allocates the privacy budget based on the sociological importance of each edge. Our strategy is grounded in two sociological theories: \textit{balance theory}, which 
characterizes the stability of triadic configurations, and \textit{status 
theory}, which models hierarchical relationships encoded in signed edges.

\subsubsection{Balance-based Influence}

Nodes participating in many balanced triangles are crucial for network stability~\citep{derr2018sgcn} and their influence should be assigned greater weight. For node $v \in \mathcal{V}$, let $\mathcal{T}(v)$ denote all triangles containing $v$. Each triangle $T_{ijk} = \{(v_i, v_j), (v_j, v_k), (v_k, v_i)\}$ has a balance indicator:
\begin{equation}
\label{eq:balance_indicator}
\mathcal{B}(T_{ijk}) = \mathbbm{1}[(\mathbf{A}^s)_{ij} \cdot (\mathbf{A}^s)_{jk} \cdot (\mathbf{A}^s)_{ki} = +1]
\end{equation}
where $\mathbf{A}^s$ represents the signed adjacency matrix of the graph $\mathcal{G} = (\mathcal{V}, \mathcal{E})$. The balance centrality of node $v$ measures the proportion of balanced triangles:
\begin{equation}
\label{eq:balance_centrality}
\mathcal{I}_{\mathrm{bal}}(v) = \frac{1}{|\mathcal{T}(v)|} \sum_{T \in \mathcal{T}(v)} \mathcal{B}(T)
\end{equation}

\subsubsection{Status-based Influence}

While balance theory focuses on triadic stability, status theory addresses the hierarchical dimension of signed networks~\citep{huang2021sdgnn}. We define a status centrality that captures a node's position in the implied hierarchy. For node $v \in \mathcal{V}$ with neighbor set $\mathcal{N}(v) = \mathcal{N}^+(v) \cup \mathcal{N}^-(v)$, where $\mathcal{N}^+(v)$ and $\mathcal{N}^-(v)$ represent neighbors connected by positive and negative edges respectively:
\begin{equation}
\label{eq:status_centrality}
\mathcal{I}_{\mathrm{sta}}(v) = \frac{1}{\sqrt{|\mathcal{N}(v)|}} \sum_{u \in \mathcal{N}(v)} \mathbf{A}^s_{uv} \cdot \sigma\left(\frac{\text{deg}_{\mathcal{G}}(u)}{\bar{d}}\right)
\end{equation}
where $\sigma(x) = 1/(1 + e^{-x})$ is the sigmoid function and $\bar{d}$ represents the average degree in the graph used for normalization.

\subsubsection{Unified Signed Weights}

We combine balance and status perspectives into a unified influence metric. The hyperparameter $\alpha \in [0,1]$ balances these views:
\begin{equation}
\label{eq:unified_centrality}
\mathcal{I}_{\mathrm{uni}}(v) = \alpha \cdot \phi(\mathcal{I}_{\mathrm{bal}}(v)) + (1-\alpha) \cdot \phi(|\mathcal{I}_{\mathrm{sta}}(v)|)
\end{equation}
where $\phi(\cdot)$ applies normalization to the $[0,1]$ interval, and we use the absolute value $|\mathcal{I}_{\mathrm{sta}}(v)|$ since both high positive and negative status values indicate influence. The node-level scores are converted to normalized influence weights through $\mathcal{I}(v) = \text{softmax}(\mathcal{I}_{\mathrm{uni}}(v))$. Finally, we compute the edge weight for $(u,v) \in \mathcal{R}$ by aggregating the incident node influences:
$w_{uv} = \min\left(\frac{\mathcal{I}(u) + \mathcal{I}(v)}{2}, 1\right)$. 
This formulation ensures $w_{uv} \in [0,1]$, which is crucial for the subsequent differential privacy analysis as it provides a bounded sensitivity for our mechanism. The complete algorithmic details for this sociological influence quantification process are provided in Appendix~\ref{alg:csgu}.

\subsection{Weighted Certified Unlearning}
Having quantified sociological influence through our unified approach, we integrate these insights into the certified unlearning mechanism. After computing the certification region $\mathcal{R}$ and sociological weights $\mathbf{W} = \{w_{uv}\}_{(u,v) \in \mathcal{R}}$, we propose a weighted certified unlearning mechanism based on $(\epsilon, \delta)$-differential privacy that provides formal privacy guarantees.

Let $f_{\boldsymbol{\theta}}: \mathcal{V} \times \mathcal{V} \to [0,1]$ denote the signed link prediction function parameterized by $\boldsymbol{\theta} \in \Theta$. We employ a weighted binary cross-entropy (BCE) loss that incorporates our sociological weights:
\begin{equation}
\label{eq:weighted_loss}
\mathcal{L}(\boldsymbol{\theta}; \mathcal{E}, \mathbf{W}) = \sum_{(u,v) \in \mathcal{E}} w_{uv} \cdot \ell(\mathbf{h}_u, \mathbf{h}_v, \boldsymbol{\theta})
\end{equation}
where $\mathbf{h}_u$ and $\mathbf{h}_v$ are the node embeddings of nodes $u$ and $v$, and the BCE loss function $\ell: \mathbb{R}^d \times \mathbb{R}^d \times \Theta \to \mathbb{R}^+$ is defined as 
$
\ell(\mathbf{h}_u, \mathbf{h}_v, \boldsymbol{\theta}) = -y_{uv} \log f_{\boldsymbol{\theta}}(\mathbf{h}_u, \mathbf{h}_v) - (1-y_{uv}) \log(1-f_{\boldsymbol{\theta}}(\mathbf{h}_u, \mathbf{h}_v))
$
, with binary label encoding $y_{uv} = \frac{1 + \mathbf{A}^s_{uv}}{2} \in \{0, 1\}$ that maps negative edges to 0 and positive edges to 1.
Let $\boldsymbol{\theta}^*$ denote the original parameters trained on $\mathcal{E}$, and $\boldsymbol{\theta}^*_r$ denote the ideal parameters from retraining on $\mathcal{E}_r = \mathcal{E} \setminus \mathcal{E}_d$. Since direct retraining is computationally expensive and provides no privacy protection, we use the influence function approach~\citep{wu2023gif} to approximate the parameter change via first-order Taylor expansion:
\begin{equation}
\label{eq:taylor_approx}
\Delta\boldsymbol{\theta} = \boldsymbol{\theta}^*_r - \boldsymbol{\theta}^* \approx -\mathbf{H}^{-1} \mathbf{g}
\end{equation}
Here, $\mathbf{g} \in \mathbb{R}^d$ is the weighted gradient vector capturing the influence of edges to be unlearned:
\begin{equation}
\label{eq:weighted_gradient}
\mathbf{g} = \nabla_{\boldsymbol{\theta}} \mathcal{L}(\boldsymbol{\theta}^*; \mathcal{E}, \mathbf{W}) - \nabla_{\boldsymbol{\theta}} \mathcal{L}(\boldsymbol{\theta}^*; \mathcal{E} \setminus \mathcal{E}_d, \mathbf{W})
\end{equation}
and $\mathbf{H} \in \mathbb{R}^{d \times d}$ is the Hessian matrix of the loss function computed on the remaining edges:
\begin{equation}
\mathbf{H} = \nabla^2_{\boldsymbol{\theta}} \mathcal{L}(\boldsymbol{\theta}^*; \mathcal{E}_r, \mathbf{W})
\end{equation}

We use conjugate gradient (CG) to solve $\mathbf{H}\boldsymbol{\delta} = \mathbf{g}$ instead of computing $\mathbf{H}^{-1}$ explicitly, reducing complexity from $O(d^3)$ to $O(d^2 \cdot k_{\text{CG}})$ where $k_{\text{CG}} \ll d$. The approximation error is controlled via tolerance $\epsilon_{\text{CG}} = 10^{-6}$ and accounted for the noise scale to maintain privacy guarantees. The robustness of our method against Hessian approximation errors is analyzed in Appendix~\ref{lem:hessian_robustness}.

The sociological weights $w_{uv}$ modulate edge contributions within the deletion set $\mathcal{E}_d$, prioritizing edges with higher social influence. To achieve $(\epsilon, \delta)$-differential privacy, we apply the Gaussian mechanism~\citep{muthukrishnan2025differential} by adding calibrated noise to $\Delta\boldsymbol{\theta}$. This requires bounding the $\ell_2$-sensitivity of our parameter update. Let $\mathbf{h}_{uv} \in \mathbb{R}^d$ denote the edge representation from the SGNN encoder. For binary cross-entropy loss with sigmoid activation, the gradient norm is bounded by:
\begin{equation}
\label{eq:gradient_bound}
\|\nabla_{\boldsymbol{\theta}} \ell(\mathbf{h}_u, \mathbf{h}_v, \boldsymbol{\theta})\|_2 \leq \|\mathbf{h}_{uv}\|_2
\end{equation}

Following established certified unlearning frameworks~\citep{wu2023ceu,dong2024idea}, we assume $\lambda$-strong convexity around $\boldsymbol{\theta}^*$ (via $\ell_2$ regularization with $\lambda = 10^{-4}$) to derive sensitivity bounds for differential privacy, enabling $\|\mathbf{H}^{-1}\|_2 \leq 1/\lambda$. This standard theoretical approach is validated through comprehensive empirical evaluation~\citep{zhang2024towards,huynh2025certified}. The weighted gradient $\mathbf{g}$ represents the difference between full graph gradients and remaining graph gradients, which equals $\sum_{(u,v) \in \mathcal{E}_d} w_{uv} \nabla_{\boldsymbol{\theta}} \ell((u,v), \boldsymbol{\theta}^*)$. Combining these bounds, the global $\ell_2$-sensitivity of our weighted parameter update is:

\begin{equation}
\label{eq:sensitivity}
\Delta_s = \max_{(u,v) \in \mathcal{E}_d} \|\mathbf{H}^{-1} w_{uv} \nabla_{\boldsymbol{\theta}} \ell(\mathbf{h}_u, \mathbf{h}_v, \boldsymbol{\theta}^*)\|_2 \leq \frac{1}{\lambda} \max_{(u,v) \in \mathcal{E}_d} w_{uv} \|\mathbf{h}_{uv}\|_2
\end{equation}
The detailed derivation of individual edge sensitivity bounds is provided in Appendix~\ref{lem:edge_sensitivity}, with the global sensitivity formulation established in Appendix~\ref{cor:global_sensitivity}. Using the sensitivity bound $\Delta_s$, we apply the Gaussian mechanism for $(\epsilon, \delta)$-differential privacy. The final unlearned parameters are:
\begin{equation}
\label{eq:noisy_params}
\tilde{\boldsymbol{\theta}} = \boldsymbol{\theta}^* + \Delta\boldsymbol{\theta} + \bm\xi
\end{equation}
where $\bm\xi \sim \mathcal{N}(\mathbf{0}, \sigma^2 \mathbf{I}_d)$ is independent Gaussian noise with scale:
\begin{equation}
\label{eq:noise_scale}
\sigma = \frac{\sqrt{2 \ln(1.25/ \delta)} \cdot \Delta_s}{\epsilon}
\end{equation}

This calibration ensures $(\epsilon, \delta)$-differential privacy while leveraging sociological weights to minimize utility degradation. By prioritizing edges based on their importance, our method achieves more efficient privacy budget allocation, resulting in better unlearning accuracy compared to uniform methods. Formal privacy and utility guarantees are established in Appendix~\ref{thm:certified} and~\ref{thm:utility}.

\section{Experiments}
We design experiments to answer four key research questions. 
\textbf{RQ1:} How does CSGU perform compared to existing graph unlearning methods in terms of utility retention and unlearning effectiveness on signed graphs?
\textbf{RQ2:} How effectively does CSGU maintain certified privacy guarantees under varying deletion pressures?
\textbf{RQ3:} How does the sign of the unlearned edges affect unlearning performance?
\textbf{RQ4:} How sensitive is CSGU to key hyperparameters, including the balance parameter $\alpha$ and privacy parameters $(\epsilon, \delta)$?

\subsection{Experimental Setup}
\textbf{Datasets.} Experiments are conducted on four widely used signed graph datasets of varying sizes: Bitcoin-Alpha~\citep{kumar2016bitcoin}, Bitcoin-OTC~\citep{kumar2016bitcoin}, Epinions~\citep{massa2005controversial}, and Slashdot~\citep{kunegis2009slashdot}. Appendix~\ref{app:datasets} summarizes their key statistics.

\textbf{Backbones and Baselines.}
Four SGNNs are used as backbone models: SGCN~\citep{derr2018sgcn}, SiGAT~\citep{huang2021sdgnn}, SNEA~\citep{li2020snea}, and SDGNN~\citep{huang2021sdgnn}, representing diverse approaches to signed graph representation learning. 
We compare CSGU with four representative graph unlearning methods: (1) partition-based method: \textbf{GraphEraser}~\citep{chen2022gu}; (2) learning-based method: \textbf{GNNDelete}~\citep{cheng2023gnndelete}; (3) influence function-based method: \textbf{GIF}~\citep{wu2023gif}; (4) certified unlearning method: \textbf{IDEA}~\citep{dong2024idea}. We also include complete retraining (\textbf{Retrain}) as the theoretical upper bound. Details on backbone models and baseline methods can be found in Appendix~\ref{app:baselines} and \ref{app:backbones}.

\textbf{Evaluation Metrics and Setups.}
For the unlearning scenarios mentioned in Section~\ref{sec:unlearning_scenarios}, 0.5\% to 5\% of training data is randomly removed to simulate real-world deletion requests. Model utility is measured by the \textbf{Macro-F1} score of sign prediction~\citep{huang2021sdgnn}. Unlearning effectiveness is evaluated through membership inference attacks~\citep{cheng2023gnndelete} using AUC (represented as \textbf{MI-AUC}) for sign prediction. We also record unlearning time (\textbf{Time}) to measure computational 
overhead. All experiments use the default hyperparameters specified in the original papers and are repeated 10 times. Since the task focuses on sign prediction, edge unlearning and node unlearning are examined in the main paper, with node feature unlearning results are provided in Appendix~\ref{app:exp}. To evaluate the generalizability of CSGU, experiments on homogeneous graphs are also conducted, with results detailed in Appendix~\ref{exp:homogeneous}.

\subsection{\textbf{RQ1:} Performance Comparison}

Table~\ref{tb:performance} demonstrates that CSGU balances model utility and unlearning effectiveness. \textbf{Model Utility.} CSGU achieves higher Macro-F1 scores than baselines across multiple datasets and backbone models. With SGCN on Bitcoin-Alpha, CSGU attains 66.90\% Macro-F1, exceeding the best baseline by 8.73\%. Similarly, on Bitcoin-OTC with SGCN, CSGU reaches 75.65\%, outperforming IDEA's 70.06\%. These results demonstrate CSGU's capacity to maintain model utility post-unlearning. 
\textbf{Unlearning Effectiveness.} CSGU provides stronger privacy protection through lower MI-AUC scores. On Bitcoin-Alpha with SGCN, CSGU achieves 32.71\% MI-AUC, significantly below GraphEraser's 42.26\%. This pattern extends to Bitcoin-OTC, where CSGU records 34.76\% with SGCN and 30.15\% with SNEA, both lower than competing methods. IDEA's lower performance can be attributed to its uniform influence quantification that ignores signed graph heterogeneity, resulting in imprecise noise calibration. \textbf{Unlearning Efficiency.} As shown in Figure~\ref{fig:unlearning_time}, CSGU exhibits computational efficiency across all configurations. The method is more efficient than complete retraining and also outperforms other unlearning methods, including IDEA. CSGU achieves this efficiency by minimizing the influence neighborhood required for certified unlearning.

\begin{table*}[t]
\centering
\caption{Performance evaluation of unlearning methods across four SGNNs on 2.5\% edge unlearning. Macro-F1 (\%) measures model utility for sign prediction (higher is better), while MI-AUC (\%) evaluates privacy protection effectiveness (lower is better). Results represent means and standard deviations over 10 independent runs. \textbf{Bold} indicates best performance, \underline{underlined} shows second-best, and gray shading denotes the theoretical optimum from complete retraining.}
\label{tb:performance}
\resizebox{\textwidth}{!}{
\begin{tabular}{c l rr rr rr rr}
\toprule
\multirow{2}{*}{\textbf{Dataset}} & \multirow{2}{*}{\textbf{Method}} & \multicolumn{2}{c}{\textbf{SGCN}} & \multicolumn{2}{c}{\textbf{SNEA}} & \multicolumn{2}{c}{\textbf{SDGNN}} & \multicolumn{2}{c}{\textbf{SiGAT}} \\
\cmidrule(lr){3-4} \cmidrule(lr){5-6} \cmidrule(lr){7-8} \cmidrule(lr){9-10}
& & \textbf{Macro-F1} $\uparrow$ & \textbf{MI-AUC} $\downarrow$ & \textbf{Macro-F1} $\uparrow$ & \textbf{MI-AUC} $\downarrow$ & \textbf{Macro-F1} $\uparrow$ & \textbf{MI-AUC} $\downarrow$ & \textbf{Macro-F1} $\uparrow$ & \textbf{MI-AUC} $\downarrow$ \\
\midrule
\multirow{6}{*}{\rotatebox{90}{\textbf{Bitcoin-Alpha}}} & \gray Retrain & \gray 67.67\std{0.17} & \gray 55.73\std{0.48} & \gray 70.89\std{0.21} & \gray 55.30\std{0.51} & \gray 72.82\std{0.15} & \gray 37.22\std{0.98} & \gray 68.07\std{0.28} & \gray 59.77\std{0.42} \\
& GraphEraser & 58.02\std{0.42} & \underline{42.26\std{1.18}} & 63.13\std{0.51} & \underline{43.25\std{1.02}} & 70.53\std{0.47} & 61.44\std{0.63} & 61.62\std{0.58} & 72.80\std{0.74} \\
& GNNDelete & 51.24\std{0.49} & 51.42\std{0.78} & 54.63\std{0.65} & 51.93\std{0.82} & \underline{70.95\std{0.44}} & \textbf{45.01\std{0.95}} & 64.89\std{0.53} & \underline{52.35\std{0.86}} \\
& GIF & \underline{58.17\std{0.38}} & 65.46\std{0.67} & \underline{70.07\std{0.35}} & 55.48\std{0.71} & 66.60\std{0.59} & \underline{51.88\std{0.83}} & 66.23\std{0.46} & 57.34\std{0.76} \\
& IDEA & 56.40\std{0.73} & 46.65\std{0.94} & 69.39\std{0.52} & 61.33\std{0.67} & 56.99\std{0.81} & 53.42\std{0.89} & \underline{66.49\std{0.55}} & 59.54\std{0.72} \\
& \textbf{Ours} & \textbf{66.90\std{0.12}} & \textbf{32.71\std{0.65}} & \textbf{71.26\std{0.14}} & \textbf{41.30\std{0.58}} & \textbf{72.46\std{0.11}} & 54.44\std{0.37} & \textbf{66.94\std{0.16}} & \textbf{43.36\std{0.52}} \\
\midrule
\multirow{6}{*}{\rotatebox{90}{\textbf{Bitcoin-OTC}}} & \gray Retrain & \gray 74.99\std{0.19} & \gray 44.26\std{0.78} & \gray 78.81\std{0.15} & \gray 44.92\std{0.76} & \gray 80.87\std{0.12} & \gray 42.75\std{0.82} & \gray 74.36\std{0.22} & \gray 54.56\std{0.52} \\
& GraphEraser & 66.56\std{0.45} & \underline{35.37\std{1.32}} & 69.33\std{0.48} & \underline{33.84\std{1.38}} & \underline{78.25\std{0.34}} & 61.24\std{0.67} & 71.78\std{0.42} & 76.08\std{0.59} \\
& GNNDelete & 65.95\std{0.56} & 55.64\std{0.73} & 74.49\std{0.39} & 41.95\std{1.12} & 77.81\std{0.41} & \underline{54.63\std{0.78}} & 73.66\std{0.44} & 60.50\std{0.68} \\
& GIF & 69.51\std{0.43} & 54.74\std{0.82} & 75.90\std{0.36} & 83.02\std{0.29} & 74.26\std{0.54} & 64.85\std{0.61} & \textbf{73.69\std{0.47}} & \underline{55.40\std{0.79}} \\
& IDEA & \underline{70.06\std{0.41}} & 63.36\std{0.62} & \underline{76.49\std{0.33}} & 80.36\std{0.35} & 64.08\std{0.76} & 65.23\std{0.58} & 71.13\std{0.51} & 64.12\std{0.69} \\
& \textbf{Ours} & \textbf{75.65\std{0.13}} & \textbf{34.76\std{0.71}} & \textbf{77.36\std{0.12}} & \textbf{30.15\std{0.84}} & \textbf{80.54\std{0.09}} & \textbf{50.11\std{0.42}} & \textbf{73.69\std{0.38}} & \textbf{53.77\std{0.35}} \\
\midrule
\multirow{6}{*}{\rotatebox{90}{\textbf{Epinions}}} & \gray Retrain & \gray 79.15\std{0.25} & \gray 53.20\std{0.54} & \gray 86.13\std{0.18} & \gray 54.21\std{0.52} & \gray 86.23\std{0.15} & \gray 42.56\std{0.83} & \gray 80.30\std{0.28} & \gray 32.02\std{1.18} \\
& GraphEraser & \underline{77.87\std{0.52}} & \textbf{35.37\std{1.45}} & 77.93\std{0.58} & 37.09\std{1.33} & \underline{82.82\std{0.38}} & 71.78\std{0.49} & 70.66\std{0.67} & 83.61\std{0.31} \\
& GNNDelete & 70.37\std{0.73} & 44.77\std{1.18} & 76.79\std{0.64} & 59.42\std{0.71} & 81.93\std{0.43} & \textbf{43.17\std{1.26}} & \underline{78.34\std{0.56}} & \underline{42.74\std{1.09}} \\
& GIF & 65.42\std{0.87} & 39.97\std{1.38} & \underline{78.21\std{0.54}} & \underline{35.60\std{1.42}} & 47.53\std{1.68} & 49.88\std{0.91} & 74.17\std{0.62} & 47.02\std{1.15} \\
& IDEA & 64.91\std{0.94} & 39.75\std{1.29} & \textbf{78.33\std{0.51}} & 44.54\std{1.03} & 68.49\std{0.89} & \underline{45.03\std{1.18}} & 73.72\std{0.68} & 45.70\std{1.07} \\
& \textbf{Ours} & \textbf{78.38\std{0.16}} & \underline{37.66\std{0.69}} & 78.18\std{0.19} & \textbf{33.04\std{0.78}} & \textbf{85.70\std{0.12}} & 47.44\std{0.45} & \textbf{80.59\std{0.15}} & \textbf{35.22\std{0.73}} \\
\midrule
\multirow{6}{*}{\rotatebox{90}{\textbf{Slashdot}}} & \gray Retrain & \gray 68.36\std{0.35} & \gray 45.02\std{0.75} & \gray 78.63\std{0.28} & \gray 44.15\std{0.78} & \gray 78.83\std{0.25} & \gray 35.22\std{1.07} & \gray 72.12\std{0.33} & \gray 33.41\std{1.13} \\
& GraphEraser & \underline{58.91\std{0.84}} & 75.87\std{0.46} & \underline{74.92\std{0.63}} & \textbf{23.67\std{1.89}} & 70.23\std{0.71} & 57.38\std{0.78} & 70.40\std{0.67} & 67.98\std{0.58} \\
& GNNDelete & 46.11\std{1.25} & \underline{42.44\std{1.16}} & 73.66\std{0.67} & 40.10\std{1.28} & \underline{77.52\std{0.54}} & \underline{44.23\std{1.07}} & 69.88\std{0.72} & \underline{45.11\std{1.18}} \\
& GIF & 50.92\std{1.14} & 45.28\std{1.09} & 71.65\std{0.74} & 46.53\std{1.05} & 48.44\std{1.47} & 67.35\std{0.61} & \underline{71.42\std{0.65}} & 70.69\std{0.53} \\
& IDEA & 50.91\std{1.17} & 46.04\std{1.02} & 71.32\std{0.79} & 52.16\std{0.91} & 43.60\std{1.58} & 50.14\std{0.95} & 71.12\std{0.68} & 67.27\std{0.59} \\
& \textbf{Ours} & \textbf{60.87\std{0.29}} & \textbf{35.76\std{0.73}} & \textbf{75.25\std{0.21}} & \underline{29.77\std{0.85}} & \textbf{77.75\std{0.18}} & \textbf{34.70\std{0.76}} & \textbf{72.99\std{0.19}} & \textbf{42.18\std{0.58}} \\
\bottomrule
\end{tabular}}
\end{table*}

\begin{figure}[htbp!]
    \centering
    \includegraphics[width=0.8\textwidth]{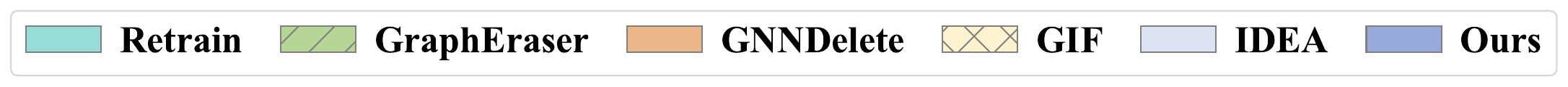}
    \begin{subfigure}[t]{0.325\textwidth}
        \centering
        \includegraphics[width=\textwidth]{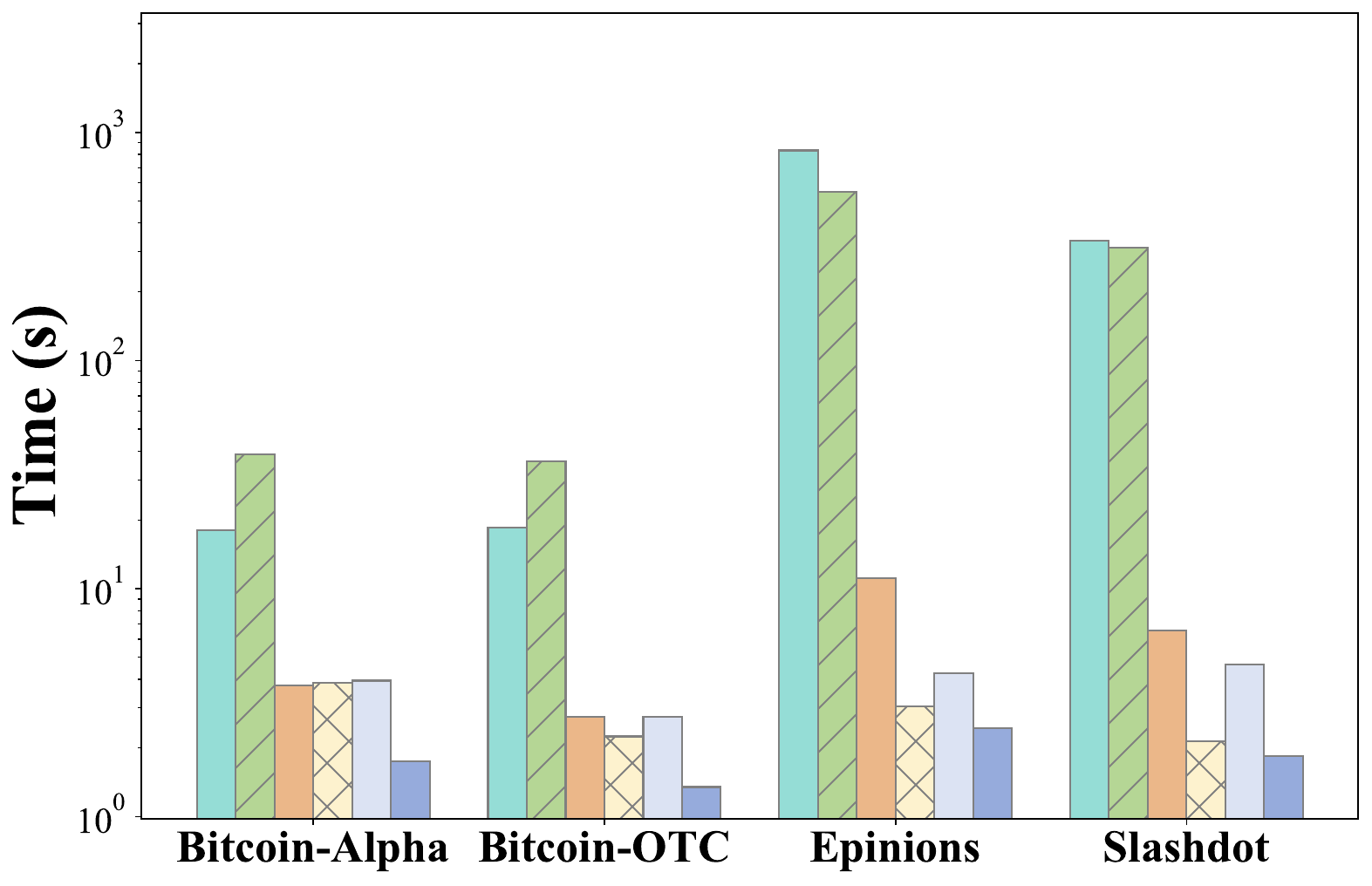}
        \caption{SNEA}
        \label{fig:unlearning_time_snea}
    \end{subfigure}
    \begin{subfigure}[t]{0.325\textwidth}
        \centering
        \includegraphics[width=\textwidth]{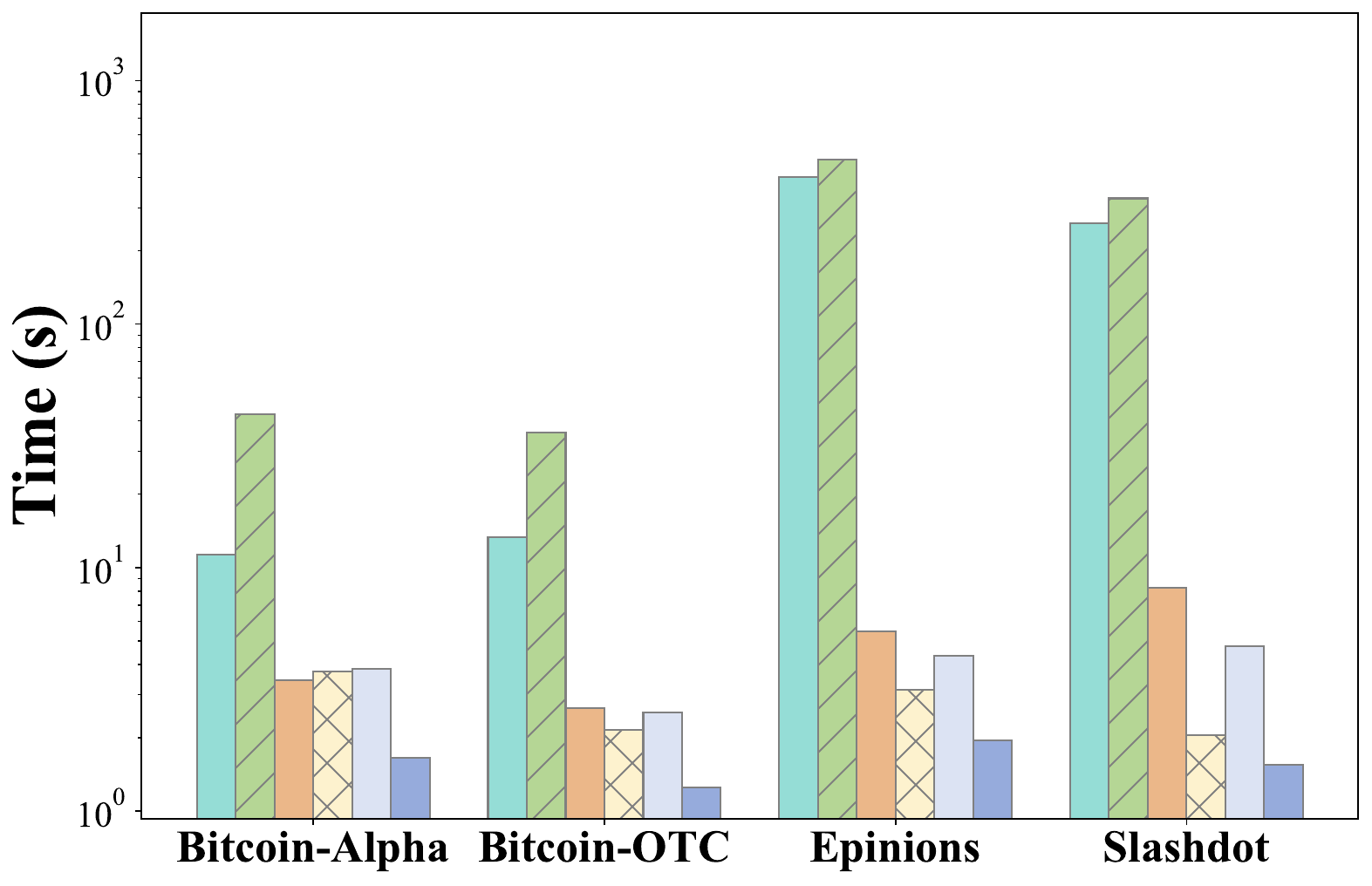}
        \caption{SDGNN}
        \label{fig:unlearning_time_sdgnn}
    \end{subfigure}
    \begin{subfigure}[t]{0.325\textwidth}
        \centering
        \includegraphics[width=\textwidth]{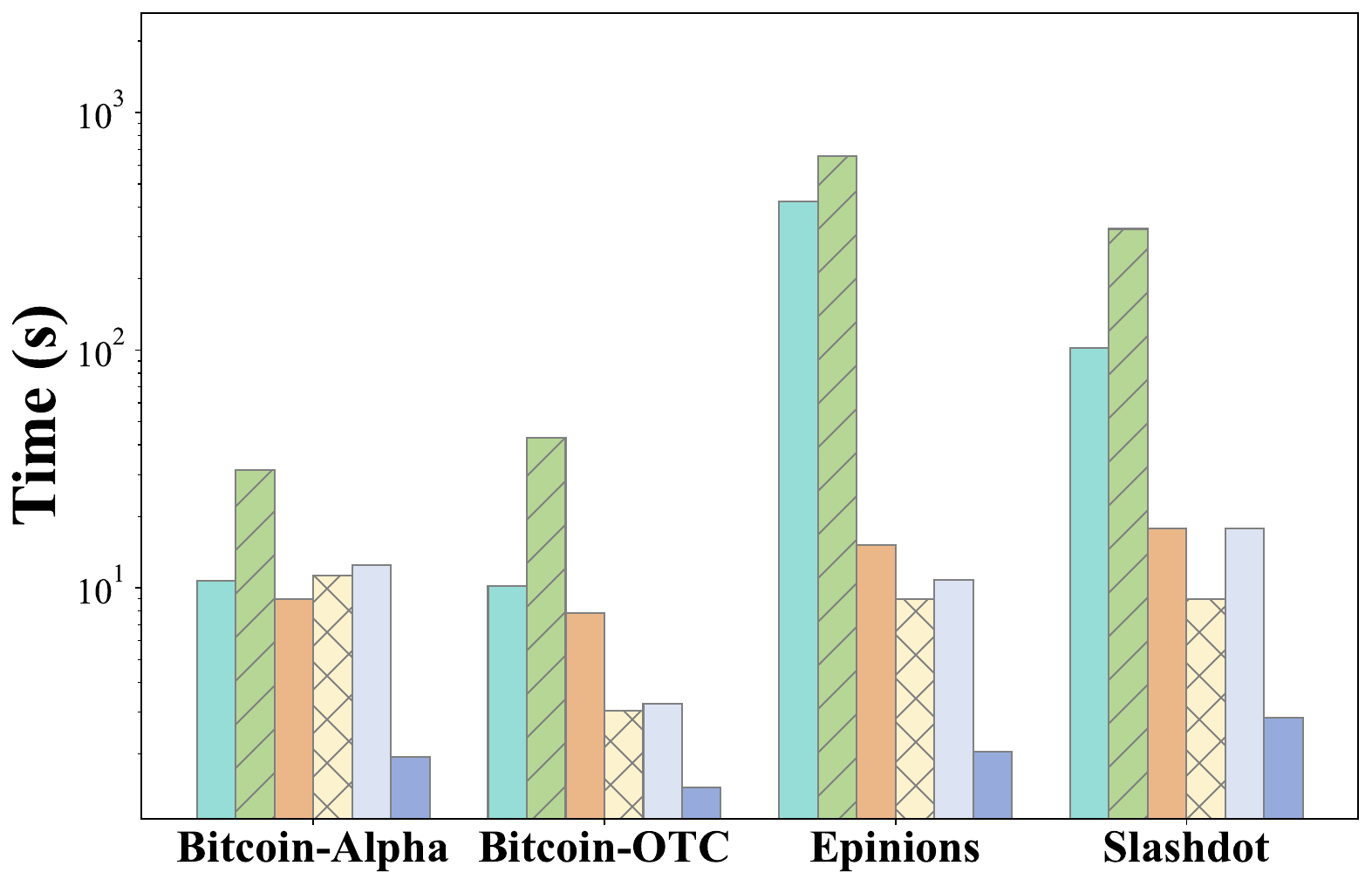}
        \caption{SiGAT}
        \label{fig:unlearning_time_sigat}
    \end{subfigure}
    \caption{Comparison of the unlearning efficiency (Time) between CSGU and baselines for 2.5\% node unlearning on Slashdot. The results show that our method consistently achieves the lowest unlearning time across the majority of experimental settings.}
    \label{fig:unlearning_time}
\end{figure}

\subsection{\textbf{RQ2:} Stability Under Different Deletion Amounts}

Figure~\ref{fig:unlearning_ratio_sgdnn_bitcoin-otc} illustrates the privacy protection performance of different methods with SDGNN on the Bitcoin-OTC, and similar trends are observed in Figures~\ref{fig:unlearning_ratio_sgdnn_epinions} and~\ref{fig:unlearning_ratio_sgdnn_slashdot}. As the deletion ratio increases from 0.5\% to 5.0\%, CSGU maintains the lowest MI-AUC, indicating stable privacy protection. While CSGU's MI-AUC shows only a increase, other methods exhibit higher and more volatile scores. For example, GraphEraser and GIF show higher MI-AUC values, which increase with the deletion ratio, suggesting a decline in privacy protection. 

\begin{figure}[htbp!]
    \centering
    \includegraphics[width=0.8\textwidth]{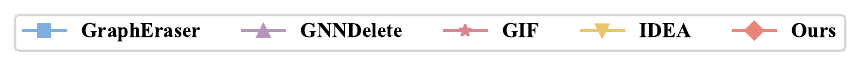}
    \begin{subfigure}[t]{0.325\textwidth}
        \centering
        \includegraphics[width=\textwidth]{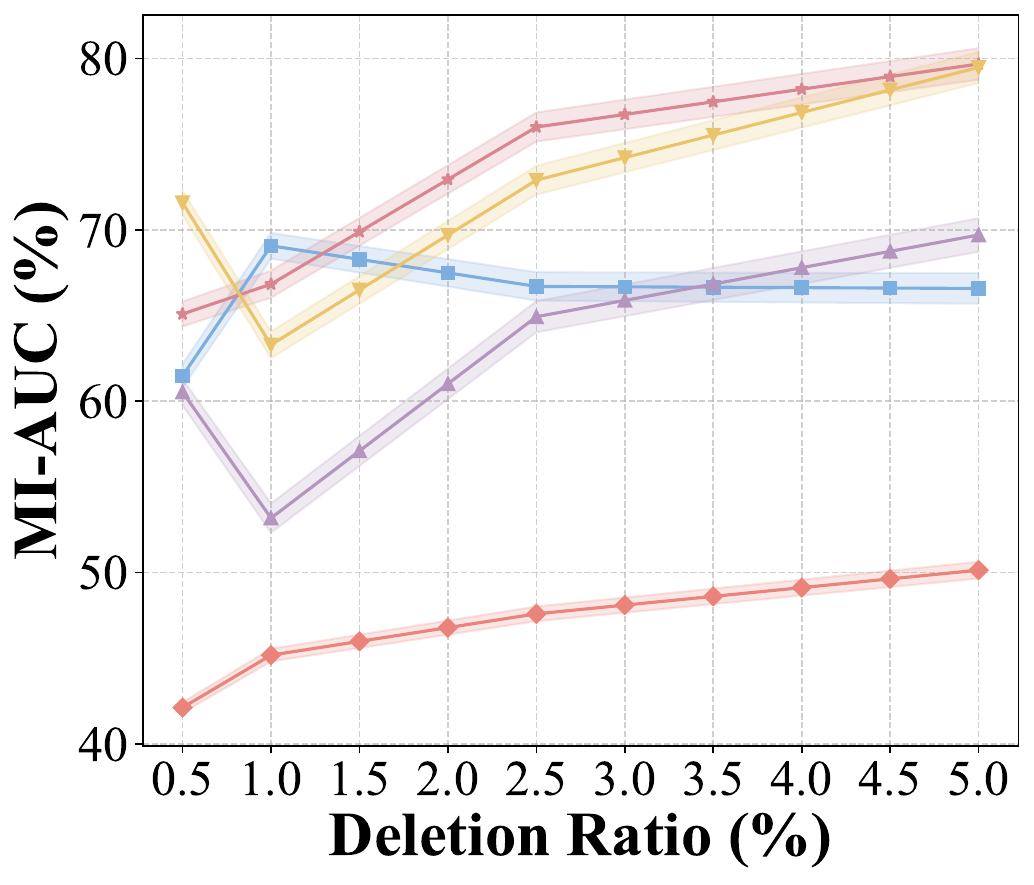}
        \caption{\textbf{Bitcoin-OTC}}
        \label{fig:unlearning_ratio_sgdnn_bitcoin-otc}
    \end{subfigure}
    \hfill
    \begin{subfigure}[t]{0.325\textwidth}
        \centering
        \includegraphics[width=\textwidth]{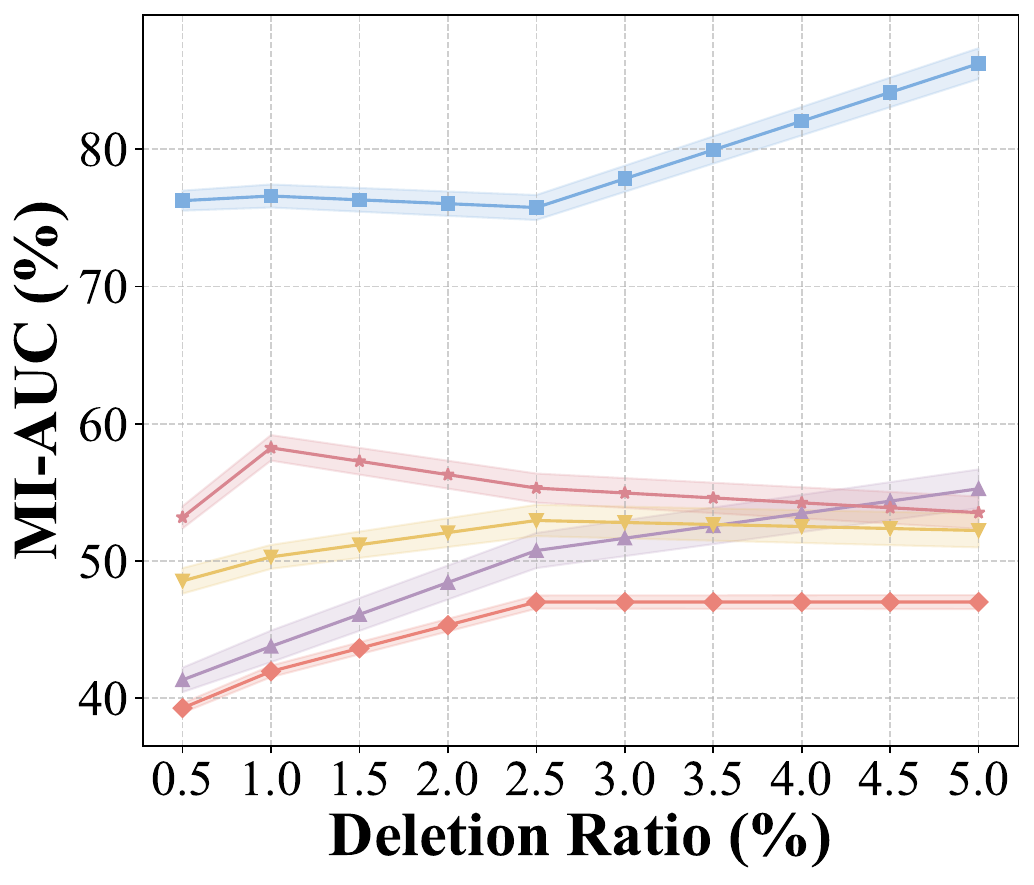}
        \caption{\textbf{Epinions}}
        \label{fig:unlearning_ratio_sgdnn_epinions}
    \end{subfigure}
    \hfill
    \begin{subfigure}[t]{0.325\textwidth}
        \centering
        \includegraphics[width=\textwidth]{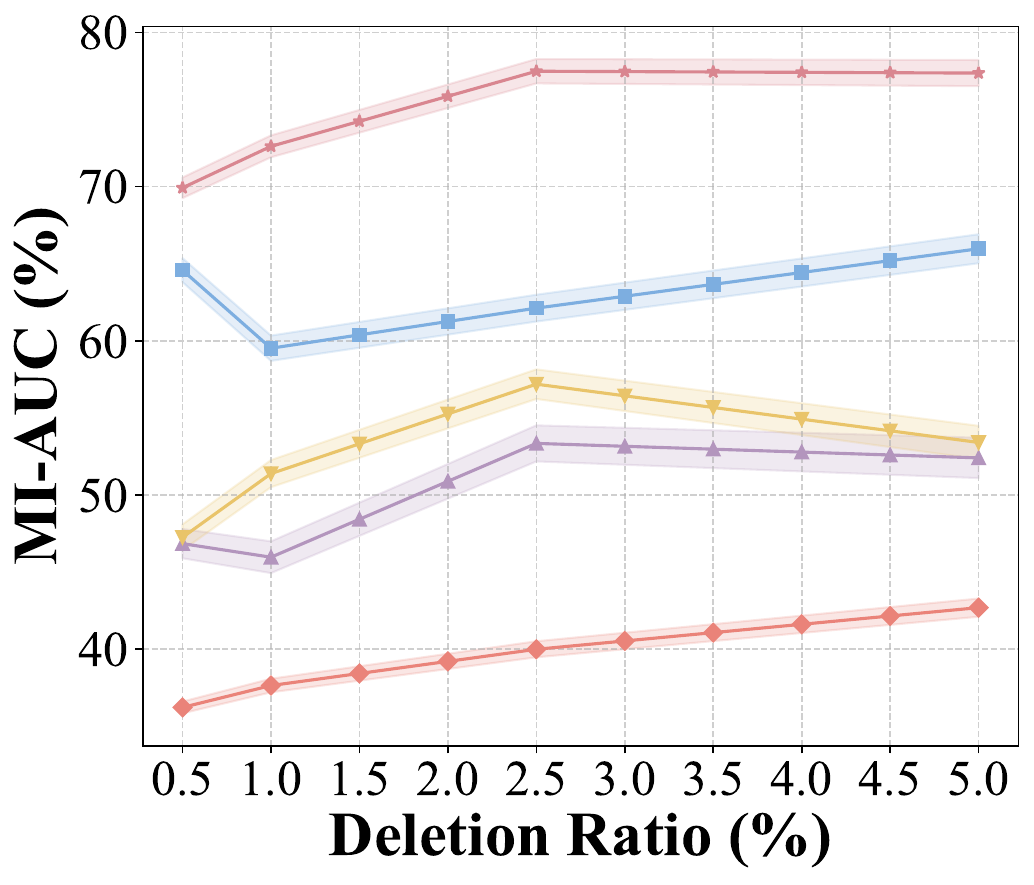}
        \caption{\textbf{Slashdot}}
        \label{fig:unlearning_ratio_sgdnn_slashdot}
    \end{subfigure}
    \caption{Comparison of the unlearning effectiveness (MI-AUC $\downarrow$) between CSGU and baselines under different ratios for edge unlearning with SDGNN backbone.} 
    \label{fig:unlearning_ratio_sgdnn}
\end{figure}

\subsection{\textbf{RQ3:} Impact of Edge Sign on Unlearning}
Table~\ref{tb:edge_sign_impact} reveals performance variations across edge types in signed graph unlearning. \textbf{Positive Edge.} Most methods maintain utility for positive edge unlearning. CSGU achieves 88.92\% Macro-F1, closely approaching the retraining baseline of 89.45\%. However, GraphEraser exhibits increased privacy leakage with MI-AUC rising to 64.25\%, indicating compromised information protection. \textbf{Negative Edge.} Negative edge unlearning poses greater challenges for all methods. GIF decreases to 41.27\% Macro-F1, while GraphEraser's MI-AUC increases to 82.43\%. This difficulty stems from three factors: (\emph{i}) negative edges violate homophily assumptions in traditional graph learning, (\emph{ii}) their sparsity increases each edge's unique information content, and (\emph{iii}) their removal disrupts balance patterns in signed triangular structures. \textbf{Mixed Edge.} Mixed edge scenarios represent realistic deployment conditions. CSGU maintains a favorable performance balance with 85.70\% Macro-F1 and 47.44\% MI-AUC, while traditional methods like GIF decrease to 47.53\% Macro-F1. These results demonstrate that conventional unlearning methods show limitations in addressing signed graph semantics. CSGU's performance across different edge types demonstrates the benefits of integrating sociological theories into certified unlearning methods.

\begin{table}[t]
\centering
\footnotesize
\caption{Impact of edge sign on unlearning performance across different edge signs. Experiments are conducted on the Epinions using SDGNN with 2.5\% edge unlearning. Results are reported separately for positive edges, negative edges, and mixed edge.} 
\label{tb:edge_sign_impact}
\resizebox{\textwidth}{!}{
\begin{tabular}{l cc cc cc}
\toprule
\multirow{2}{*}{\textbf{Method}} & \multicolumn{2}{c}{\textbf{Positive Edges}} & \multicolumn{2}{c}{\textbf{Negative Edges}} & \multicolumn{2}{c}{\textbf{Mixed Edges}} \\
\cmidrule(lr){2-3} \cmidrule(lr){4-5} \cmidrule(lr){6-7}
& \textbf{Macro-F1} $\uparrow$ & \textbf{MI-AUC} $\downarrow$ & \textbf{Macro-F1} $\uparrow$ & \textbf{MI-AUC} $\downarrow$ & \textbf{Macro-F1} $\uparrow$ & \textbf{MI-AUC} $\downarrow$ \\
\midrule
\gray Retrain & \gray 89.45\std{0.18} & \gray 38.92\std{0.76} & \gray 83.67\std{0.21} & \gray 46.84\std{0.89} & \gray 86.23\std{0.15} & \gray 42.56\std{0.83} \\
GraphEraser & \underline{87.34\std{0.29}} & 64.25\std{0.52} & 76.18\std{0.47} & 82.43\std{0.61} & \underline{82.82\std{0.38}} & 71.78\std{0.49} \\
GNNDelete & 85.72\std{0.35} & 47.84\std{1.15} & \underline{77.25\std{0.56}} & \underline{51.76\std{1.34}} & 81.93\std{0.43} & \textbf{43.17\std{1.26}} \\
GIF & 52.89\std{1.45} & 43.15\std{0.85} & 41.27\std{1.89} & 58.94\std{1.07} & 47.53\std{1.68} & 49.88\std{0.91} \\
IDEA & 73.85\std{0.76} & \textbf{39.67\std{1.08}} & 61.73\std{1.12} & 52.89\std{1.31} & 68.49\std{0.89} & 45.03\std{1.18} \\
Ours & \textbf{88.92\std{0.09}} & \underline{42.18\std{0.38}} & \textbf{82.14\std{0.16}} & \textbf{51.27\std{0.52}} & \textbf{85.70\std{0.12}} & \underline{47.44\std{0.45}} \\
\bottomrule
\end{tabular}}
\end{table}

\subsection{\textbf{RQ4:} Hyperparameter Sensitivity Analysis}

\textbf{Sociological Balance Parameter Analysis.} Figure~\ref{fig:alpha} shows that the balance and status weighting parameter $\alpha$ achieves best performance when $\alpha \in [0.4, 0.8]$. Both extremes—pure balance theory ($\alpha=0$) and pure status theory ($\alpha=1$)—yield lower results, as balance theory alone fails to capture hierarchical structures while status theory alone overlooks triadic stability patterns~\citep{leskovec2010signed}. This suggests that signed graph unlearning benefits from integrating both sociological theories. \textbf{Privacy Parameter Sensitivity.} Table~\ref{tb:privacy_params} demonstrates the privacy-utility trade-off: as $\epsilon$ decreases from 1.0 to 0.1, MI-AUC increases from 32.45\% to 58.73\% for Bitcoin-Alpha, reflecting the inverse relationship in Equation~\ref{eq:noise_scale} where stronger privacy guarantees require larger noise injection. The parameter $\delta$ shows limited influence as a failure probability bound. Notably, CSGU outperforms IDEA across all configurations, achieving up to 29\% lower MI-AUC values, indicating the benefits of the sociological weighting scheme in optimizing noise allocation.

\begin{figure}[htbp!]
    \centering
    \begin{minipage}{0.46\textwidth}
        \centering
        \includegraphics[width=0.8\textwidth]{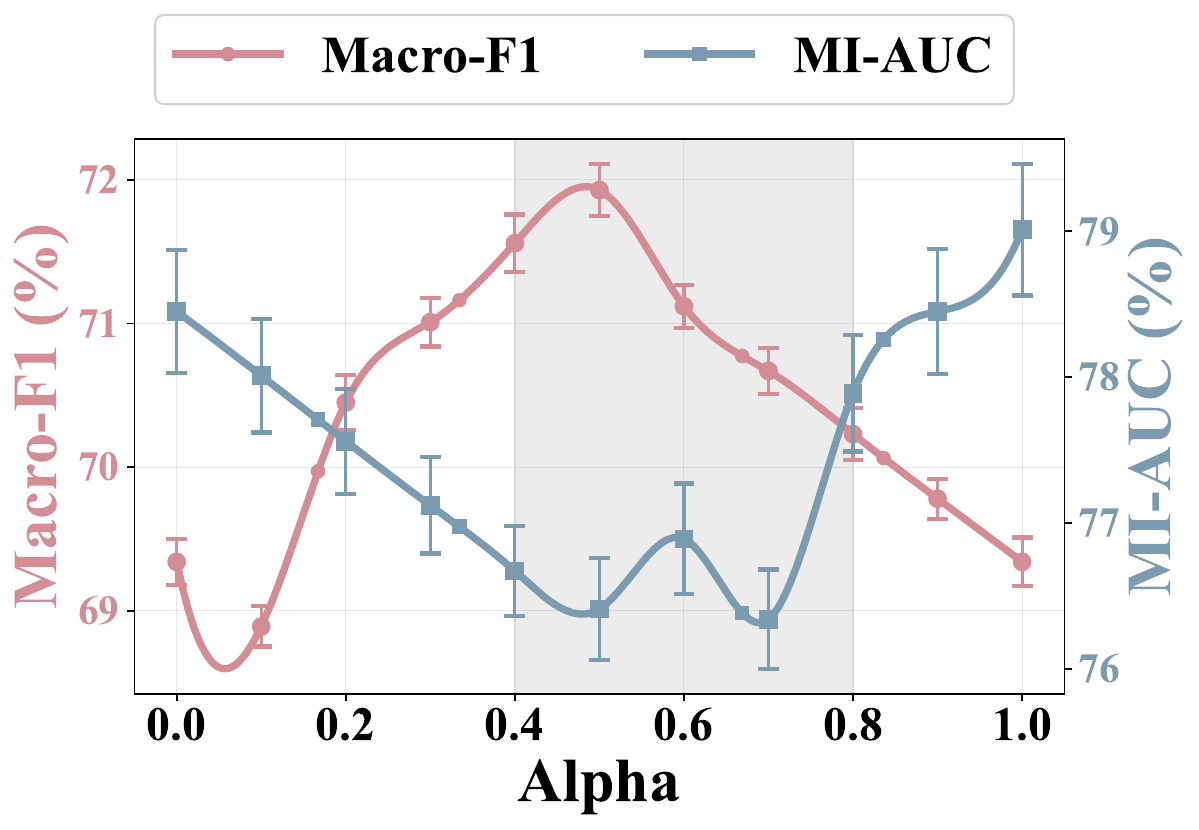}
        \caption{Effect of the hyperparameter $\alpha$ on the trade-off between utility retention and unlearning performance for 2.5\% edge unlearning on Slashdot with SiGAT. $\alpha$ balances the principles of balance and status theories.}
        \label{fig:alpha}
    \end{minipage}
    \hfill
    \begin{minipage}{0.52\textwidth}
        \centering
        \captionof{table}{Impact of differential privacy parameters $(\epsilon, \delta)$ on unlearning effectiveness (MI-AUC \%) for 2.5\% edge unlearning. Experiments conducted on Bitcoin-Alpha and Epinions datasets with SGCN.}
        \label{tb:privacy_params}
        \resizebox{\textwidth}{!}{
        \begin{tabular}{cc cc cc}
        \toprule
        \multirow{2}{*}{$\boldsymbol{\epsilon}$} & \multirow{2}{*}{$\boldsymbol{\delta}$} & \multicolumn{2}{c}{\textbf{Bitcoin-Alpha}} & \multicolumn{2}{c}{\textbf{Epinions}} \\
        \cmidrule(lr){3-4} \cmidrule(lr){5-6}
        & & \textbf{Ours} & \textbf{IDEA} & \textbf{Ours} & \textbf{IDEA} \\
        \midrule
        1.0 & $10^{-4}$ & \textbf{32.45\std{0.58}} & 46.12\std{0.89} & \textbf{35.82\std{0.64}} & 39.28\std{1.15} \\
        1.0 & $10^{-5}$ & \textbf{33.71\std{0.62}} & 46.65\std{0.94} & \textbf{37.15\std{0.71}} & 39.75\std{1.29} \\
        1.0 & $10^{-6}$ & \textbf{34.89\std{0.67}} & 47.23\std{0.98} & \textbf{38.42\std{0.78}} & 40.31\std{1.33} \\
        \midrule
        0.5 & $10^{-4}$ & \textbf{41.27\std{0.73}} & 52.84\std{1.12} & \textbf{43.95\std{0.86}} & 47.62\std{1.41} \\
        0.5 & $10^{-5}$ & \textbf{42.18\std{0.76}} & 53.47\std{1.18} & \textbf{45.29\std{0.91}} & 48.15\std{1.47} \\
        0.5 & $10^{-6}$ & \textbf{43.64\std{0.81}} & 54.12\std{1.23} & \textbf{46.73\std{0.95}} & 48.89\std{1.52} \\
        \midrule
        0.1 & $10^{-4}$ & \textbf{56.92\std{1.15}} & 68.35\std{1.58} & \textbf{58.47\std{1.23}} & 62.18\std{1.89} \\
        0.1 & $10^{-5}$ & \textbf{58.73\std{1.21}} & 69.12\std{1.64} & \textbf{60.35\std{1.31}} & 63.74\std{1.95} \\
        0.1 & $10^{-6}$ & \textbf{61.18\std{1.28}} & 70.89\std{1.71} & \textbf{62.91\std{1.38}} & 65.42\std{2.03} \\
        \bottomrule
        \end{tabular}}
    \end{minipage}
\end{figure}

\section{Conclusion}
We propose CSGU, the first certified unlearning method for signed graphs that integrates sociological theory with DP mechanisms. CSGU addresses the incompatibility between existing graph unlearning methods and the heterogeneous nature of signed graphs, while providing rigorous privacy guarantees and preserving network structural integrity. Extensive experiments demonstrate that CSGU outperforms existing methods in unlearning effectiveness while maintaining comparable model utility and computational efficiency. Although CSGU achieves effective privacy protection, the non-convex nature of SGNNs necessitates careful parameter tuning in practical deployment to achieve better performance. Future work could explore adaptive mechanisms to relax this constraint while maintaining unlearning accuracy across diverse signed graph structures.

\section{Reproducibility Statement}
To support the reproducibility of our work, we provide the following information:
\begin{itemize}[leftmargin=*]
    \item \textbf{Codes \& Datasets.} We provide the anonymous link to the code in the Abstract.
    \item \textbf{Software Environment.} The environment settings necessary for running are provided in the code.
    \item \textbf{Computational Resources.} Our code runs on an A100 GPU, and the detailed information is provided in Appendix~\ref{app:exp}.
\end{itemize}

\bibliography{iclr2026_conference}
\bibliographystyle{iclr2026_conference}

\newpage

\appendix

\section*{Supplementary Materials~\footnote{Large language models (LLMs) are used to check grammar in this paper.}} 

\section{Related Work}
\paragraph{Signed Graph Neural Networks.}
In contrast to standard GNNs, which operate on the principle of homophily~\citep{zheng2024missing}, SGNNs are designed to model complex social dynamics by incorporating both positive and negative edges. This paradigm shift is rooted in their reliance on sociological theories rather than the homophily assumption. \cite{derr2018sgcn} pioneered GNN applications to signed graphs by incorporating balance theory to guide information propagation. \cite{huang2021sdgnn} extended this approach using 38 signed directed triangular motifs and graph attention mechanisms to capture both balance and status theories. For aggregation mechanisms, \cite{li2020snea} introduces masked self-attention layers that distinguish positive from negative neighbors, learning edge-specific importance coefficients. Additionally, \cite{huang2021sdgnn} proposes an encoder-decoder framework that simultaneously reconstructs link signs, directions, and signed directed triangles.
SGNNs fundamentally rely on two sociological theories illustrated in Figure~\ref{fig:social_theory}. \textbf{Balance theory}~\citep{heider1946attitudes} posits that social networks evolve toward structural balance, where triadic relationships follow two principles: ``the friend of my friend is my friend'' and ``the enemy of my enemy is my friend''. \textbf{Status theory}~\citep{leskovec2010signed} models hierarchical relationships, where positive edges indicate status elevation and negative edges denote status reduction. These theories dictate how SGNNs aggregate information, fundamentally differing from standard GNNs and rendering conventional unlearning methods incompatible with signed graphs.

\begin{wrapfigure}{br}{0.35\linewidth}
  \includegraphics[width=\linewidth]{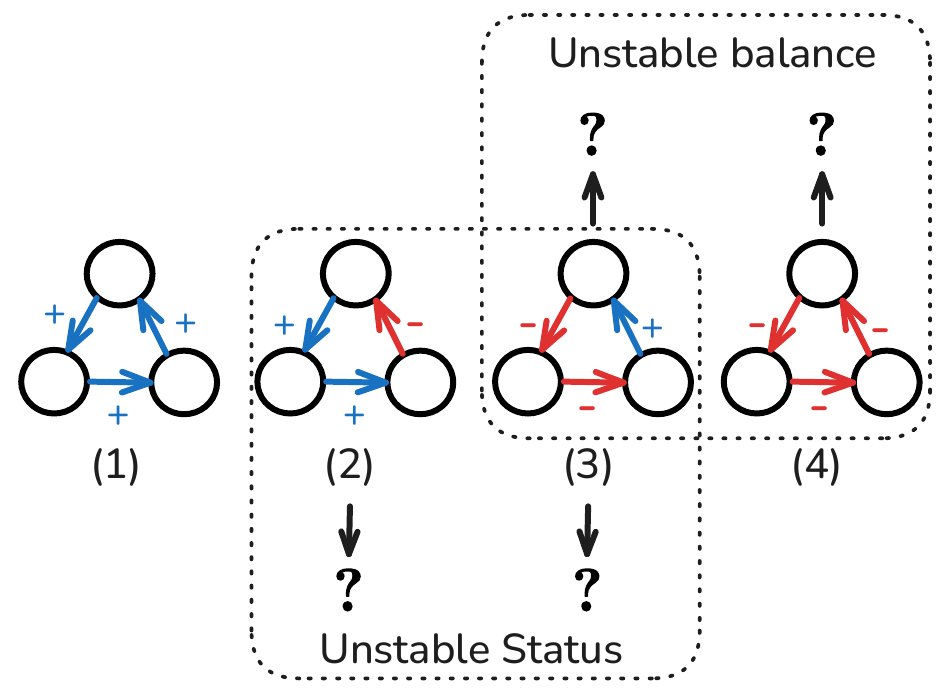}
  \caption{The presentation of balance theory and status theory in a signed graph. Among them, (3) and (4) are in a state of unstable balance, and (2) and (3) are in a state of unstable status.}
  \label{fig:social_theory}
\end{wrapfigure}

\paragraph{Graph Unlearning.}
Graph unlearning methods enable the removal of specific data from trained GNN models without complete retraining~\citep{chen2022gu}. These methods have gained significant attention due to privacy concerns in sensitive applications~\citep{zhao2024learning, shamsabadi2024confidential}. Existing approaches fall into three main categories. (1) \textbf{Partition-based} methods.~\cite{chen2022gu} adapts the SISA~\citep{bourtoule2021machine}(Sharded, Isolated, Sliced and Aggregated training) paradigm by partitioning graphs into disjoint shards for independent training, enabling efficient retraining of affected partitions upon unlearning requests. However, these methods fundamentally disrupt signed edge semantics and violate balance/status theory constraints in signed graphs. (2) \textbf{Learning-based} methods.~\cite{cheng2023gnndelete} introduces trainable deletion operators between GNN layers, which are trained using a specially designed loss function.~\cite{li2025towards} proposes a mutual evolution paradigm where predictive and unlearning modules co-evolve within a unified framework. While effective for standard GNNs, these methods fail to account for sign-dependent loss functions and aggregation mechanisms in SGNNs. (3) \textbf{IF-based} methods.~\cite{wu2023gif} extends influence functions to graph structures by incorporating structural dependencies through additional loss terms for influenced neighbors. Despite their theoretical foundations, these methods incorrectly estimate parameter changes in signed graphs due to neglecting link sign semantics.

\paragraph{Certified Unlearning.}
Certified unlearning provides provable privacy guarantees through differential privacy mechanisms~\citep{guo2020certified, dvijotham2024efficient}. Recent advances in this field have established theoretical foundations for various learning scenarios~\citep{zhang2024towards}.~\cite{chien2022cgu} establishes certified unlearning for linear GNNs under $(\epsilon,\delta)$-approximate unlearning definitions, deriving bounds on gradient residual norms.~\cite{wu2023ceu} focuses specifically on edge unlearning with rigorous theoretical guarantees under convexity assumptions.~\cite{dong2024idea} provides a flexible method supporting multiple unlearning types with tighter bounds than existing methods. While these methods offer strong theoretical foundations, their guarantees are fundamentally predicated on properties of unsigned graphs. Consequently, they fail to account for the unique structural patterns and sociological constraints inherent to signed graphs. The certification mechanisms rely on homophily-based influence propagation, which fundamentally contradicts the heterophilic nature of negative edges in signed graphs~\citep{zheng2024missing}. This limitation necessitates new certified unlearning frameworks specifically designed for SGNNs that respect balance and status theories while maintaining provable privacy guarantees.

\section{Certified Signed Graph Unlearning}

\subsection{Triadic Influence Neighborhood}

\begin{theorem}[Certification Complexity]
\label{thm:complexity}
For a graph with average triangle participation rate $T$ per edge, the size of our certification region satisfies:
\begin{equation}
\label{eq:complexity_bound}
|\mathcal{R}| \leq |\mathcal{E}_d| \cdot \sum_{i=0}^{p-1} T^i = O(|\mathcal{E}_d| \cdot T^p)
\end{equation}
In practice, with small $p$ and $T \ll \bar{d}$ (average degree), this yields $|\mathcal{R}| = O(|\mathcal{E}_d| \cdot T)$, a substantial improvement over the $O(|\mathcal{E}_d| \cdot \bar{d}^k)$ complexity of $k$-hop methods.
\end{theorem}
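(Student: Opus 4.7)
The plan is to induct on the iteration index $k$ of the expansion defined in Equation~\ref{eq:expansion}, tracking only the \emph{frontier} edges added at each step rather than the full cumulative set. Let $\mathcal{F}^{(0)} = \mathcal{E}_d$ and $\mathcal{F}^{(k)} = \mathcal{R}^{(k)} \setminus \mathcal{R}^{(k-1)}$ for $k \geq 1$, so that $\mathcal{R} = \bigcup_{i=0}^{p-1} \mathcal{F}^{(i)}$ at convergence after $p$ iterations. The first structural observation I would establish is that any edge in $\mathcal{F}^{(k)}$ must triadically close with some edge in the previous frontier $\mathcal{F}^{(k-1)}$: if such an edge only triadically closed with edges from $\mathcal{R}^{(k-2)}$, it would already have been admitted into $\mathcal{R}^{(k-1)}$ by the expansion rule, contradicting its appearance in $\mathcal{F}^{(k)}$. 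This reduces the question to a boundary-growth estimate.

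Next I would bound $|\mathcal{F}^{(k)}| \leq T \cdot |\mathcal{F}^{(k-1)}|$. By the assumption that each edge participates in at most $T$ triangles on average, summing over the frontier $\mathcal{F}^{(k-1)}$ yields at most $T \cdot |\mathcal{F}^{(k-1)}|$ triangle incidences, and each triadic closure contributes at most a bounded number of fresh edges (the remaining two edges of each triangle, with the constant absorbed into $T$). Unrolling the resulting recurrence gives $|\mathcal{F}^{(k)}| \leq T^k \cdot |\mathcal{E}_d|$, and summing the geometric series produces
\begin{equation*}
|\mathcal{R}| = \sum_{i=0}^{p-1} |\mathcal{F}^{(i)}| \leq |\mathcal{E}_d| \cdot \sum_{i=0}^{p-1} T^i = O(|\mathcal{E}_d| \cdot T^p),
\end{equation*}
matching the stated bound. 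The $O(|\mathcal{E}_d| \cdot T)$ refinement for small $p$ then follows because the last term $T^{p-1}$ dominates the geometric sum while remaining $O(T)$ under the stated regime, and the comparison with $k$-hop methods is a direct substitution of $T \ll \bar{d}$ into the two expressions.

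The main obstacle will be rigorously controlling the frontier recurrence when $T$ denotes an \emph{average} rather than a per-edge maximum. Strictly speaking, $\sum_{e' \in \mathcal{F}^{(k-1)}} t(e') \leq T \cdot |\mathcal{F}^{(k-1)}|$ only holds if $T$ bounds the frontier's mean triangle participation; I would therefore either (i) reinterpret $T$ as a worst-case participation over the relevant edges, (ii) assume the frontier is representative enough that its average does not exceed the global average $T$, or (iii) state the bound in expectation over the randomness of $\mathcal{E}_d$. A secondary subtlety is justifying that $p$ itself is small enough for the geometric sum to remain meaningful; this is supported by the convergence claim elsewhere (Appendix~\ref{thm:convergence}) and the empirical observation that signed social graphs exhibit sparse triangular structure, so I would cite that result rather than re-derive it here.
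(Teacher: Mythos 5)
Your frontier induction is essentially the argument the paper leaves implicit: Theorem~\ref{thm:complexity} is stated without any proof, and the only related justification the paper offers is the one-sentence proof of Theorem~\ref{thm:certification_efficiency}, which simply asserts that the triadic region ``grows as $O(T^k)$'' because only triangular closures are admitted. Your proposal supplies the missing skeleton --- the observation that an edge in $\mathcal{F}^{(k)}$ must close with the \emph{previous frontier} (otherwise the expansion rule would have absorbed it one step earlier), the per-step bound $|\mathcal{F}^{(k)}| \le T\,|\mathcal{F}^{(k-1)}|$, and the geometric-series summation --- and that is the natural, correct way to make the claim precise. Two caveats, both of which you already flag, are worth emphasizing: (i) each triangle incident to a frontier edge can contribute \emph{two} new edges, so the literal recurrence is $|\mathcal{F}^{(k)}| \le 2T\,|\mathcal{F}^{(k-1)}|$, which preserves the $O$-bound but not the exact inequality $|\mathcal{R}| \le |\mathcal{E}_d|\sum_{i=0}^{p-1} T^i$ as written; and (ii) with $T$ an \emph{average} participation rate the recurrence needs either a worst-case reinterpretation of $T$ or a representativeness assumption on the frontiers --- this is a looseness of the theorem statement itself rather than of your argument, and your proposed remedies are the right ones. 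Finally, note that the refinement ``$T^{p-1}$ dominates yet remains $O(T)$'' only holds literally when $p\le 2$ (and when $T\ge 1$); the paper's own ``in practice, with small $p$'' phrasing is no more precise, so your reading is as defensible as the original claim.
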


\begin{definition}[Triadic Closure]
\label{def:triadic}
Two edges $e_1 = (u,v)$ and $e_2 = (x,y)$ form a triadic closure~\citep{huang2021sdgnn} if and only if:
\begin{equation}
\label{eq:triadic_closure}
\mathbbm{1}_{\text{TC}}(e_1, e_2) = \mathbb{I}\left[|\{u,v\} \cap \{x,y\}| = 1\right] \cdot \mathbb{I}\left[\exists e \in \mathcal{E}: e \text{ connects the disjoint endpoints}\right]
\end{equation}
\end{definition}

\subsection{Sociological Influence Quantification}
\begin{equation}
\label{eq:node_influence}
\mathcal{I}(v) = \frac{\exp(\mathcal{I}_\mathrm{uni}(v))}{\sum_{u \in \mathcal{V}_{\mathcal{R}}} \exp(\mathcal{I}_\mathrm{uni}(u))}
\end{equation}

\subsection{Weighted Certified Unlearning}

\subsubsection{Algorithm Description}

The complete CSGU method is presented below. The algorithm begins by constructing the triadic influence neighborhood through iterative expansion, then quantifies sociological influences using balance and status theories~\citep{heider1946attitudes,leskovec2010signed}, and finally performs the weighted certified unlearning with differential privacy guarantees.

\begin{algorithm}[t]
\caption{Certified Signed Graph Unlearning (CSGU)}
\label{alg:csgu}
\begin{algorithmic}[1]
\State \textbf{Input:} Signed graph $\mathcal{G} = (\mathcal{V}, \mathcal{E}^+, \mathcal{E}^-, \mathbf{X})$, trained parameters $\boldsymbol{\theta}^*$, deletion set $\mathcal{E}_d$, privacy budget $\epsilon$, failure probability $\delta$, balance parameter $\alpha$.
\State \textbf{Output:} Unlearned model parameters $\tilde{\boldsymbol{\theta}}$.

\vspace{0.5em}
\Statex \textbf{Phase 1: Triadic Influence Neighborhood Construction}
\State Initialize certification region $\mathcal{R}^{(0)} \leftarrow \mathcal{E}_d$ and iteration counter $k \leftarrow 0$.
\Repeat
    \State $k \leftarrow k + 1$; $\mathcal{R}^{(k)} \leftarrow \mathcal{R}^{(k-1)}$.
    \ForAll{edge $e \in \mathcal{E} \setminus \mathcal{R}^{(k-1)}$ and $e' \in \mathcal{R}^{(k-1)}$}
        \If{$\mathbbm{1}_{\text{TC}}(e, e') = 1$ (Equation~\ref{eq:triadic_closure})}
            \State $\mathcal{R}^{(k)} \leftarrow \mathcal{R}^{(k)} \cup \{e\}$.
        \EndIf
    \EndFor
\Until{$\mathcal{R}^{(k)} = \mathcal{R}^{(k-1)}$ or max iterations reached}.
\State Set final certification region $\mathcal{R} \leftarrow \mathcal{R}^{(k)}$.

\vspace{0.5em}
\Statex \textbf{Phase 2: Sociological Influence Quantification}
\ForAll{node $v \in \mathcal{V}_{\mathcal{R}}$}
    \State Compute balance centrality $\mathcal{I}_\mathrm{bal}(v)$ via Equation~\ref{eq:balance_centrality}.
    \State Compute status centrality $\mathcal{I}_\mathrm{sta}(v)$ via Equation~\ref{eq:status_centrality}.
    \State Compute unified centrality $\mathcal{I}_\mathrm{uni}(v) = \alpha \cdot \phi(\mathcal{I}_\mathrm{bal}(v)) + (1-\alpha) \cdot \phi(|\mathcal{I}_\mathrm{sta}(v)|)$.
\EndFor
\State Compute node influences $\mathcal{I}(v)$ for all $v \in \mathcal{V}_{\mathcal{R}}$ via Equation~\ref{eq:node_influence}.
\ForAll{edge $(u,v) \in \mathcal{R}$}
    \State Compute edge weight $w_{uv} = \min\left(\frac{\mathcal{I}(u) + \mathcal{I}(v)}{2}, 1\right)$.
\EndFor

\vspace{0.5em}
\Statex \textbf{Phase 3: Weighted Certified Unlearning}
\State Compute weighted gradient $\mathbf{g} = \nabla_{\boldsymbol{\theta}} \mathcal{L}(\boldsymbol{\theta}^*; \mathcal{E} , \mathbf{W})  - \nabla_{\boldsymbol{\theta}} \mathcal{L}(\boldsymbol{\theta}^*; \mathcal{E} \setminus \mathcal{E}_d , \mathbf{W})$.
\State Compute Hessian matrix $\mathbf{H} = \nabla^2_{\boldsymbol{\theta}} \mathcal{L}(\boldsymbol{\theta}^*; \mathcal{E}_r, \mathbf{W})$.
\State Compute parameter update $\Delta\boldsymbol{\theta} = -\mathbf{H}^{-1} \mathbf{g}$.
\State Calculate sensitivity $\Delta_s = \max_{(u,v) \in \mathcal{E}_d} \|\mathbf{H}^{-1} w_{uv} \nabla_{\boldsymbol{\theta}} \ell(\mathbf{h}_u, \mathbf{h}_v, \boldsymbol{\theta}^*)\|_2$.
\State Set noise scale $\sigma = \frac{\sqrt{2 \ln(1.25/ \delta)} \cdot \Delta_s}{\epsilon}$.
\State Sample noise $\bm\xi \sim \mathcal{N}(\mathbf{0}, \sigma^2 \mathbf{I}_d)$.
\State Compute final parameters $\tilde{\boldsymbol{\theta}} = \boldsymbol{\theta}^* + \Delta\boldsymbol{\theta} + \bm\xi$.
\end{algorithmic}
\end{algorithm}

\subsubsection{Complexity Analysis}

\begin{theorem}[Time Complexity]
\label{thm:time_complexity}
The time complexity of CSGU is $O(|\mathcal{E}_d| \cdot T \cdot \Delta + d^2 + d^3)$, where $T$ is the average triangle participation rate, $\Delta$ is the maximum node degree, and $d$ is the parameter dimension.
\end{theorem}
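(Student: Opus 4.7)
The plan is to bound the runtime of each of the three phases of Algorithm~\ref{alg:csgu} separately and then sum them, since the phases execute sequentially. I will argue that Phase~1 (triadic influence neighborhood construction) dominates the graph-side work, Phase~2 (sociological influence quantification) is absorbed by Phase~1, and Phase~3 (weighted certified unlearning) contributes the parameter-dimension terms $d^2$ and $d^3$.

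For Phase~1 I would first invoke Theorem~\ref{thm:complexity} to bound the final region size by $|\mathcal{R}| = O(|\mathcal{E}_d| \cdot T)$ in the regime of small $p$ highlighted in that result. Checking the indicator $\mathbbm{1}_{\text{TC}}(e,e')$ reduces to verifying that the two edges share a vertex and that an edge joins their remaining endpoints, which is $O(1)$ under a hash-based adjacency structure. Rather than the naive quadratic scan suggested by the pseudocode, I would amortize the cost per \emph{newly added} edge: for a fixed $e' \in \mathcal{R}^{(k-1)}$ with endpoints $u,v$, all candidate triadic partners are enumerated by walking the adjacency lists of $u$ and $v$ at cost $O(\Delta)$ and testing each for triangle closure. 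Summing this over at most $|\mathcal{R}| = O(|\mathcal{E}_d| \cdot T)$ edges ever added yields $O(|\mathcal{E}_d| \cdot T \cdot \Delta)$ for the entire phase, including the final-iteration no-op that triggers termination.

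For Phase~2, each node $v \in \mathcal{V}_{\mathcal{R}}$ needs an enumeration of triangles through $v$ (at most $O(T)$) for $\mathcal{I}_{\mathrm{bal}}(v)$ and a scan of its neighbors (at most $O(\Delta)$) for $\mathcal{I}_{\mathrm{sta}}(v)$. With $|\mathcal{V}_{\mathcal{R}}| = O(|\mathcal{R}|) = O(|\mathcal{E}_d| \cdot T)$, the Phase~2 cost is $O(|\mathcal{E}_d| \cdot T \cdot (T + \Delta))$, which under the mild assumption $T = O(\Delta)$ (reasonable in sparse signed networks) is absorbed into the Phase~1 bound; the softmax normalization in Equation~\ref{eq:node_influence} and the edge-weight aggregation are linear in $|\mathcal{R}|$ and also absorbed. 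For Phase~3 I would account separately for the weighted gradient $\mathbf{g}$ of Equation~\ref{eq:weighted_gradient} at cost $O(|\mathcal{E}_d| \cdot d)$, the assembly of the regularized Hessian at $O(d^2)$, and the solve $\mathbf{H} \bm{\delta} = \mathbf{g}$ at $O(d^3)$ in the exact case (or the tighter $O(d^2 \cdot k_{\text{CG}})$ obtained by the conjugate-gradient variant the paper adopts). Sampling the Gaussian noise $\bm{\xi}$ and performing the final parameter update cost $O(d)$ each. Summing the three phases, and noting that $|\mathcal{E}_d| \cdot d$ is dominated by $|\mathcal{E}_d| \cdot T \cdot \Delta$ whenever $d = O(T \cdot \Delta)$, gives the stated $O(|\mathcal{E}_d| \cdot T \cdot \Delta + d^2 + d^3)$.

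The main obstacle will be justifying the Phase~1 bound rigorously: a literal reading of the pseudocode suggests an inner double loop that could scale as $|\mathcal{E}| \cdot |\mathcal{R}^{(k-1)}|$ per iteration, so the claimed bound only holds after (i) appealing to Theorem~\ref{thm:complexity} to control $|\mathcal{R}|$, (ii) amortizing over \emph{newly added} edges instead of iterations, and (iii) assuming $O(1)$ adjacency lookups via hash-indexed neighbor lists. A secondary subtlety is the absorption of Phase~2 and of the $|\mathcal{E}_d| \cdot d$ gradient cost, both of which require stating the sparsity regime $T = O(\Delta)$ and $d = O(T \Delta)$ as explicit assumptions so that the final bound is clean.
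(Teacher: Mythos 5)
Your proposal is correct and follows essentially the same phase-by-phase decomposition as the paper's own proof: Phases 1--2 contribute $O(|\mathcal{E}_d| \cdot T \cdot \Delta)$ and Phase 3 contributes the $O(d^2 + d^3)$ terms from Hessian assembly and the (exact) solve. If anything you are more careful than the paper --- your amortized Phase-1 argument and your $O(|\mathcal{E}_d| \cdot d)$ gradient accounting with explicit absorption assumptions ($T = O(\Delta)$, $d = O(T\Delta)$) are tighter than the paper's proof, which charges $O(|\mathcal{E}| \cdot d)$ for the gradient computation and then silently omits that term from the stated bound.
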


\begin{proof}
The complexity analysis consists of three phases:
\begin{enumerate}
    \item \textbf{Triadic Influence Neighborhood Construction}: For each edge in the deletion set $\mathcal{E}_d$, we examine its triangular relationships. Each edge participates in at most $O(T \cdot \Delta)$ triangles, leading to $O(|\mathcal{E}_d| \cdot T \cdot \Delta)$ operations.
    \item \textbf{Sociological Influence Quantification}: Computing balance and status centralities requires examining triangular patterns for each node, contributing $O(|\mathcal{R}| \cdot \Delta) = O(|\mathcal{E}_d| \cdot T \cdot \Delta)$ operations.
    \item \textbf{Weighted Certified Unlearning}: Gradient computation requires $O(|\mathcal{E}| \cdot d + |\mathcal{E}_d| \cdot d) = O(|\mathcal{E}| \cdot d)$, Hessian computation requires $O(d^2)$, and matrix inversion requires $O(d^3)$.
\end{enumerate}
The dominant term is typically $O(d^3)$ for the Hessian inversion in practical scenarios where $d \gg |\mathcal{E}_d| \cdot T$.
\end{proof}

\begin{theorem}[Space Complexity]
\label{thm:space_complexity}
The space complexity of CSGU is $O(|\mathcal{R}| + d^2)$, where $|\mathcal{R}| = O(|\mathcal{E}_d| \cdot T)$ is the certification region size.
\end{theorem}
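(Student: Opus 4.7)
The plan is to bound the peak memory usage of Algorithm~\ref{alg:csgu} by auditing each of its three phases and arguing that only the certification region storage and the Hessian storage dominate asymptotically.

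First I would handle Phase 1, where the only persistent data structure is the certification region $\mathcal{R}^{(k)}$ together with the working frontier used in the triadic expansion of Equation~\ref{eq:expansion}. Since $\mathcal{R}^{(k)}$ is monotonically non-decreasing and is bounded above by $|\mathcal{R}|$, and the frontier at any step is a subset of $\mathcal{R}$, this phase fits in $O(|\mathcal{R}|)$ memory. Invoking Theorem~\ref{thm:complexity}, we may then substitute $|\mathcal{R}| = O(|\mathcal{E}_d| \cdot T)$ to obtain the stated bound.

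Next, for Phase 2, I would observe that computing $\mathcal{I}_{\mathrm{bal}}(v)$, $\mathcal{I}_{\mathrm{sta}}(v)$, and $\mathcal{I}_{\mathrm{uni}}(v)$ stores only a constant number of scalars per node in $\mathcal{V}_{\mathcal{R}}$, and $|\mathcal{V}_{\mathcal{R}}| \leq 2|\mathcal{R}|$. The edge weights $\{w_{uv}\}_{(u,v)\in \mathcal{R}}$ likewise contribute $O(|\mathcal{R}|)$. Triangle enumeration can be streamed so intermediate state stays within the same bound; the indicator $\mathcal{B}(T_{ijk})$ and the sigmoid reweighting need only an accumulator per node. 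Thus Phase 2 is absorbed into $O(|\mathcal{R}|)$.

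Finally, Phase 3 is where the $O(d^2)$ term arises. The gradient $\mathbf{g}$, the update $\Delta\boldsymbol{\theta}$, the noise vector $\bm\xi$, and the final parameters $\tilde{\boldsymbol{\theta}}$ each occupy $O(d)$ space. The Hessian $\mathbf{H} = \nabla^2_{\boldsymbol{\theta}} \mathcal{L}(\boldsymbol{\theta}^*; \mathcal{E}_r, \mathbf{W})$ requires $O(d^2)$ storage, and solving $\mathbf{H}\boldsymbol{\delta} = \mathbf{g}$ via conjugate gradient keeps only a constant number of $O(d)$ auxiliary vectors in memory. Summing contributions and retaining only the dominant terms yields total memory $O(|\mathcal{R}| + d^2)$. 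The main subtlety — and the one I would treat most carefully — is arguing that the triangle enumeration used for $\mathcal{I}_{\mathrm{bal}}$ does not materialize all triangles of $\mathcal{G}$ simultaneously but only those incident to $\mathcal{R}$, so that the working set stays $O(|\mathcal{R}|)$ rather than scaling with the global triangle count. Modulo that bookkeeping, the rest is a routine sum of per-phase memory footprints.
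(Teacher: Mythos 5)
Your proposal is correct and follows essentially the same argument as the paper's proof: a per-component memory audit in which the certification region contributes $O(|\mathcal{R}|)$, the Hessian contributes $O(d^2)$, and gradient/auxiliary vectors contribute only $O(d)$. Your treatment is somewhat more careful than the paper's (streaming the triangle enumeration in Phase 2 and accounting for the conjugate-gradient workspace), but the decomposition and conclusion are the same.
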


\begin{proof}
The space requirements include: (1) Storing the certification region $\mathcal{R}$ requires $O(|\mathcal{R}|)$ space; (2) The Hessian matrix requires $O(d^2)$ space; (3) Gradient vectors and auxiliary variables require $O(d)$ space. The total space complexity is dominated by $O(|\mathcal{R}| + d^2)$.
\end{proof}

\subsubsection{Detailed Sensitivity Analysis}

The sensitivity analysis is critical for determining the appropriate noise scale for differential privacy~\citep{guo2020certified}. We provide a complete derivation of the sensitivity bound.

\begin{lemma}[Individual Edge Sensitivity]
\label{lem:edge_sensitivity}
For any edge $(u,v) \in \mathcal{E}_d$ with weight $w_{uv}$ and embedding $\mathbf{h}_{uv}$, assuming the Hessian $\mathbf{H}$ is $\lambda$-strongly convex, the sensitivity of the parameter update with respect to this edge is bounded by:
\begin{equation}
\label{eq:individual_sensitivity}
\|\mathbf{H}^{-1} w_{uv} \nabla_{\theta} \ell(\mathbf{h}_u, \mathbf{h}_v, \theta^*)\|_2 \leq \frac{w_{uv} \|\mathbf{h}_{uv}\|_2}{\lambda}
\end{equation}
where $\lambda$ is the strong convexity parameter.
\end{lemma}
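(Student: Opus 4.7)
The plan is to decompose the left-hand side into three bounded factors and then bound each independently, combining them through submultiplicativity of the operator norm. Specifically, write
\begin{equation*}
\bigl\|\mathbf{H}^{-1} w_{uv} \nabla_{\boldsymbol{\theta}} \ell(\mathbf{h}_u, \mathbf{h}_v, \boldsymbol{\theta}^*)\bigr\|_2 \leq \|\mathbf{H}^{-1}\|_2 \cdot |w_{uv}| \cdot \bigl\|\nabla_{\boldsymbol{\theta}} \ell(\mathbf{h}_u, \mathbf{h}_v, \boldsymbol{\theta}^*)\bigr\|_2,
\end{equation*}
pulling out the nonnegative scalar $w_{uv} \in [0,1]$ and using $\|\mathbf{A}\mathbf{x}\|_2 \leq \|\mathbf{A}\|_2\|\mathbf{x}\|_2$ for the matrix factor.

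First I would handle the Hessian factor. By the $\lambda$-strong convexity assumption invoked in the main text (enforced via the $\ell_2$ regularizer with coefficient $\lambda$), the Hessian satisfies $\mathbf{H} \succeq \lambda \mathbf{I}$ on the parameter domain, so every eigenvalue of $\mathbf{H}$ is at least $\lambda$, and every eigenvalue of $\mathbf{H}^{-1}$ is at most $1/\lambda$. Since $\mathbf{H}$ is symmetric, $\|\mathbf{H}^{-1}\|_2$ equals its largest eigenvalue, hence $\|\mathbf{H}^{-1}\|_2 \leq 1/\lambda$.

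Second, for the gradient factor I would invoke the bound in Equation~\ref{eq:gradient_bound}, which states $\|\nabla_{\boldsymbol{\theta}} \ell(\mathbf{h}_u, \mathbf{h}_v, \boldsymbol{\theta})\|_2 \leq \|\mathbf{h}_{uv}\|_2$ for the BCE-with-sigmoid loss; this follows because the derivative of the sigmoid cross-entropy with respect to its logit lies in $[-1,1]$ and the chain rule introduces the edge representation $\mathbf{h}_{uv}$ as the only remaining factor. Multiplying the three bounds and using $w_{uv} \geq 0$ yields $\|\mathbf{H}^{-1} w_{uv} \nabla_{\boldsymbol{\theta}} \ell\|_2 \leq w_{uv}\|\mathbf{h}_{uv}\|_2 / \lambda$, which is exactly the claimed inequality.

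The argument is essentially mechanical once the two ingredients are cited, so there is no real obstacle in the deduction itself. The only point that deserves care is ensuring that $\lambda$-strong convexity actually holds globally at $\boldsymbol{\theta}^*$ for the weighted loss in Equation~\ref{eq:weighted_loss}; since the weights $w_{uv} \in [0,1]$ merely rescale convex BCE terms and the $\ell_2$ regularizer contributes $\lambda \mathbf{I}$ to the Hessian, the regularizer alone suffices to guarantee $\mathbf{H} \succeq \lambda \mathbf{I}$, so the assumption is consistent with the setup. This is the one step worth stating explicitly in the write-up to keep the proof self-contained.
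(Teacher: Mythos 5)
Your proposal is correct and follows essentially the same route as the paper's own proof: bound the gradient of the weighted BCE term by $\|\mathbf{h}_{uv}\|_2$ (via the residual $|f_{\boldsymbol{\theta}^*}-y_{uv}|\le 1$), bound $\|\mathbf{H}^{-1}\|_2\le 1/\lambda$ from $\lambda$-strong convexity, and combine by submultiplicativity with $w_{uv}\ge 0$. Your closing remark that the $\ell_2$ regularizer is what actually secures $\mathbf{H}\succeq\lambda\mathbf{I}$ is a reasonable clarification of an assumption the paper simply takes as given, but it does not change the argument.
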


\begin{proof}
For the binary cross-entropy loss with sigmoid activation, the gradient with respect to parameters satisfies:
\begin{equation}
\nabla_{\theta} \ell(\mathbf{h}_u, \mathbf{h}_v, \theta^*) = (f_{\theta^*}(\mathbf{h}_u, \mathbf{h}_v) - y_{uv}) \mathbf{h}_{uv}
\end{equation}
Since $f_{\theta^*}(\mathbf{h}_u, \mathbf{h}_v) \in [0,1]$ and $y_{uv} \in \{0,1\}$, we have $|f_{\theta^*}(\mathbf{h}_u, \mathbf{h}_v) - y_{uv}| \leq 1$. Therefore:
\begin{equation}
\|\nabla_{\theta} \ell(\mathbf{h}_u, \mathbf{h}_v, \theta^*)\|_2 \leq \|\mathbf{h}_{uv}\|_2
\end{equation}
Under $\lambda$-strong convexity of $\mathbf{H}$, we have $\|\mathbf{H}^{-1}\|_2 \leq 1/\lambda$. Combining these bounds:
\begin{equation}
\|\mathbf{H}^{-1} w_{uv} \nabla_{\theta} \ell(\mathbf{h}_u, \mathbf{h}_v, \theta^*)\|_2 \leq \frac{w_{uv} \|\mathbf{h}_{uv}\|_2}{\lambda}
\end{equation}
\end{proof}

\begin{corollary}[Global Sensitivity]
\label{cor:global_sensitivity}
Following from Lemma~\ref{lem:edge_sensitivity}, the global $\ell_2$-sensitivity of the parameter update~\citep{wu2023ceu} is:
\begin{equation}
\label{eq:global_sensitivity_detailed}
\Delta_s = \max_{(u,v) \in \mathcal{E}_d} \frac{w_{uv} \|\mathbf{h}_{uv}\|_2}{\lambda} \leq \frac{\max_{(u,v) \in \mathcal{E}_d} \|\mathbf{h}_{uv}\|_2}{\lambda}
\end{equation}
since $w_{uv} \leq 1$ by construction.
\end{corollary}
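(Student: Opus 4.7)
The plan is to derive the corollary as a direct consequence of Lemma~\ref{lem:edge_sensitivity} combined with the definition of global $\ell_2$-sensitivity and the range constraint on the sociological weights. First I would recall that, in the context of differential privacy for our weighted influence-function update, two neighboring deletion sets $\mathcal{E}_d$ and $\mathcal{E}_d'$ differ in exactly one edge $(u,v)$, so the sensitivity is the worst-case change in the output $\Delta\boldsymbol{\theta} = -\mathbf{H}^{-1}\mathbf{g}$ as that single edge is swapped in or out.

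Next I would exploit the linearity of the weighted gradient in Equation~\ref{eq:weighted_gradient}: because $\mathbf{g} = \sum_{(u,v) \in \mathcal{E}_d} w_{uv} \nabla_{\boldsymbol{\theta}} \ell(\mathbf{h}_u, \mathbf{h}_v, \boldsymbol{\theta}^*)$ decomposes as a sum of per-edge contributions, the difference between $\Delta\boldsymbol{\theta}$ on two neighboring deletion sets equals exactly $\mathbf{H}^{-1} w_{uv} \nabla_{\boldsymbol{\theta}} \ell(\mathbf{h}_u, \mathbf{h}_v, \boldsymbol{\theta}^*)$ for the single differing edge $(u,v)$. Taking the supremum of the $\ell_2$-norm of this expression over all possible differing edges in $\mathcal{E}_d$ yields the equality $\Delta_s = \max_{(u,v) \in \mathcal{E}_d} \|\mathbf{H}^{-1} w_{uv} \nabla_{\boldsymbol{\theta}} \ell(\mathbf{h}_u, \mathbf{h}_v, \boldsymbol{\theta}^*)\|_2$ as stated.

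Then I would apply Lemma~\ref{lem:edge_sensitivity} edge-wise to upper bound each term inside the maximum by $w_{uv}\|\mathbf{h}_{uv}\|_2/\lambda$, giving the first displayed form in Equation~\ref{eq:global_sensitivity_detailed}. Finally I would invoke the construction of the edge weights in Section~4.2.3: since $\mathcal{I}(v)$ is produced by a softmax and hence lies in $[0,1]$, the edge weight $w_{uv} = \min\bigl(\tfrac{\mathcal{I}(u) + \mathcal{I}(v)}{2},\, 1\bigr)$ is a convex combination capped at $1$, so $w_{uv} \in [0,1]$. Replacing $w_{uv}$ by its upper bound of $1$ inside the maximum yields the final inequality $\Delta_s \leq \max_{(u,v) \in \mathcal{E}_d} \|\mathbf{h}_{uv}\|_2/\lambda$.

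The proof is essentially mechanical once Lemma~\ref{lem:edge_sensitivity} is in hand; the only subtle point, and the one I would be careful about, is justifying the neighboring-dataset decomposition. Specifically, I need to argue that our differential privacy guarantee is at the single-edge granularity rather than at the group level of the entire $\mathcal{E}_d$, since otherwise the max in Equation~\ref{eq:global_sensitivity_detailed} would have to be replaced by a sum and the bound would become $|\mathcal{E}_d|$ times larger. This is consistent with the edge-level unlearning formulation established in Section~\ref{sec:unlearning_scenarios}, so the per-edge sensitivity is the correct quantity and the stated bound follows.
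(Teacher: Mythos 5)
Your proposal is correct and follows essentially the same route as the paper, which presents the corollary as an immediate consequence of Lemma~\ref{lem:edge_sensitivity} together with the observation that $w_{uv}\leq 1$ by construction of the softmax-normalized, min-capped edge weights. The extra care you take in justifying the single-edge (rather than group-level) neighboring-dataset granularity is a reasonable elaboration of what the paper leaves implicit, not a different approach.
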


\subsection{Theoretical Guarantees}

Our method provides both privacy and utility guarantees:

\begin{theorem}[Certified Unlearning Guarantee]
\label{thm:certified}
Assuming the loss function $\mathcal{L}$ is $\lambda$-strongly convex in a neighborhood around the optimal parameters $\boldsymbol{\theta}^*$ and $L$-Lipschitz continuous, with convex compact parameter space $\Theta$ and complete certification region $\mathcal{R}$ satisfying Definition~\ref{def:completeness}, the unlearning algorithm with parameters $\tilde{\boldsymbol{\theta}}$ from Equation~\ref{eq:noisy_params} satisfies $(\epsilon, \delta)$-certified removal:
\begin{equation}
\label{eq:privacy_guarantee}
\Pr[\tilde{\boldsymbol{\theta}} \in S] \leq e^{\epsilon} \Pr[\boldsymbol{\theta}^*_r \in S] + \delta
\end{equation}
for any measurable set $S \subseteq \Theta$.
\end{theorem}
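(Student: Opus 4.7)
The plan is to follow the standard certified-removal recipe: combine the deterministic Newton-step bound on $\|\boldsymbol{\theta}^* + \Delta\boldsymbol{\theta} - \boldsymbol{\theta}^*_r\|_2$ with the Gaussian mechanism instantiated at the sensitivity level already derived in Corollary~\ref{cor:global_sensitivity}. Concretely, I would view the map $\mathcal{E} \mapsto \boldsymbol{\theta}^* + \Delta\boldsymbol{\theta}$ as a deterministic query and show that its $\ell_2$ output changes by at most $\Delta_s$ when swapping the deletion set, then convolve with $\mathcal{N}(\mathbf{0}, \sigma^2 \mathbf{I}_d)$ at the scale prescribed by Equation~\ref{eq:noise_scale}.

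First I would use the completeness property (Definition~\ref{def:completeness}) to argue that the gradient orthogonality condition forces edges outside $\mathcal{R}$ to contribute nothing to $\mathbf{g}$, so $\mathbf{g}$ is exactly the weighted gradient of the loss on the deleted subset $\mathcal{E}_d$ evaluated at $\boldsymbol{\theta}^*$; here the first-order optimality $\nabla_{\boldsymbol{\theta}} \mathcal{L}(\boldsymbol{\theta}^*; \mathcal{E}, \mathbf{W}) = \mathbf{0}$ collapses Equation~\ref{eq:weighted_gradient} into a clean single-term expression. Next, using $\lambda$-strong convexity of $\mathcal{L}(\cdot;\mathcal{E}_r,\mathbf{W})$, the retrained optimum $\boldsymbol{\theta}^*_r$ exists uniquely and a second-order Taylor expansion of $\nabla_{\boldsymbol{\theta}} \mathcal{L}(\cdot; \mathcal{E}_r, \mathbf{W})$ around $\boldsymbol{\theta}^*$, combined with $L$-Lipschitzness, produces a residual bound $\|\boldsymbol{\theta}^* + \Delta\boldsymbol{\theta} - \boldsymbol{\theta}^*_r\|_2 = O(\|\boldsymbol{\theta}^*_r - \boldsymbol{\theta}^*\|_2^2)$, which is second-order in $|\mathcal{E}_d|/\lambda$.

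Third, I would translate the individual-edge sensitivity from Lemma~\ref{lem:edge_sensitivity} into the global $\ell_2$-sensitivity of the entire parameter update via Corollary~\ref{cor:global_sensitivity}, obtaining $\Delta_s \leq \lambda^{-1} \max_{(u,v)\in\mathcal{E}_d} w_{uv}\|\mathbf{h}_{uv}\|_2$. Fourth, with the sensitivity fixed, I would invoke the classical Gaussian-mechanism theorem of Dwork \& Roth: for $\sigma = \sqrt{2\ln(1.25/\delta)}\,\Delta_s / \epsilon$, the perturbed deterministic map is $(\epsilon,\delta)$-indistinguishable from the analogous map applied to $\mathcal{E}_r$, which (modulo the Taylor residual) has output $\boldsymbol{\theta}^*_r$. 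Applying this to any measurable $S \subseteq \Theta$ directly yields Equation~\ref{eq:privacy_guarantee}.

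The main obstacle will be handling the first-order approximation error cleanly: the Gaussian mechanism certifies indistinguishability of $\boldsymbol{\theta}^* + \Delta\boldsymbol{\theta} + \bm\xi$ from $(\boldsymbol{\theta}^* + \Delta\boldsymbol{\theta}) + \bm\xi$ applied to the retrained input, not from $\boldsymbol{\theta}^*_r + \bm\xi$. I would therefore either (i)~show the Taylor residual is dominated by $\sigma$ so it can be absorbed into an adjusted $\delta' = \delta + O(\text{residual}/\sigma)$ via a standard post-processing and shift argument, or (ii)~inflate the sensitivity bound by the residual norm, both of which are standard in the certified-unlearning literature cited (Wu et al.; Dong et al.). A secondary technical point is ensuring that $\tilde{\boldsymbol{\theta}}$ and $\boldsymbol{\theta}^*_r$ both remain in the compact $\Theta$ with high probability, which follows from the Gaussian tail bound together with the strong-convexity-based boundedness of $\Delta\boldsymbol{\theta}$.
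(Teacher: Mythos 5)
Your plan follows essentially the same route as the paper's proof: bound the global $\ell_2$-sensitivity of the Newton-step update via Lemma~\ref{lem:edge_sensitivity} and Corollary~\ref{cor:global_sensitivity}, apply the Gaussian mechanism at the noise scale of Equation~\ref{eq:noise_scale}, and use the completeness property of $\mathcal{R}$ to localize the gradient $\mathbf{g}$ to the deleted edges. In fact your treatment of the crux is \emph{more} careful than the paper's: the paper's Step~3 simply asserts that the approximation error from Theorem~\ref{thm:utility} ``combined with the differential privacy guarantee ensures statistical indistinguishability from retraining,'' whereas you correctly observe that the Gaussian mechanism only certifies indistinguishability between the noised approximate update and the noised approximate update on the neighboring dataset, not between $\tilde{\boldsymbol{\theta}}$ and $\boldsymbol{\theta}^*_r$ itself, and you propose the standard remedies (absorbing the Taylor residual into an adjusted $\delta'$ via a shift argument, or inflating $\Delta_s$ by the residual norm). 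That missing shift argument is exactly what would be needed to make the paper's Step~3 rigorous. One caveat applies to both your plan and the paper: for the bound $\Pr[\tilde{\boldsymbol{\theta}} \in S] \leq e^{\epsilon}\Pr[\boldsymbol{\theta}^*_r \in S] + \delta$ to be non-vacuous, $\boldsymbol{\theta}^*_r$ must itself be the output of a \emph{randomized} retraining procedure (e.g., with a perturbed objective, as in Guo et al.); if retraining is deterministic then $\Pr[\boldsymbol{\theta}^*_r \in S]\in\{0,1\}$ and the inequality fails for sets $S$ of positive Gaussian measure that exclude $\boldsymbol{\theta}^*_r$. Neither your proposal nor the paper makes this randomization explicit, so if you carry out the plan you should state it as a hypothesis. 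A minor point: the collapse of Equation~\ref{eq:weighted_gradient} to $\sum_{(u,v)\in\mathcal{E}_d} w_{uv}\nabla_{\boldsymbol{\theta}}\ell$ follows from linearity of the sum alone and does not require first-order optimality of $\boldsymbol{\theta}^*$.
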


\begin{proof}
The proof proceeds in three steps: (1) establishing the mechanism's differential privacy properties, (2) proving the completeness guarantee, and (3) showing the statistical indistinguishability.

\textbf{Step 1: Differential Privacy.} Our mechanism adds Gaussian noise $\bm\xi \sim \mathcal{N}(\mathbf{0}, \sigma^2 \mathbf{I}_d)$ with scale parameter $\sigma = \frac{\sqrt{2 \ln(1.25/ \delta)} \cdot \Delta_s}{\epsilon}$. By Corollary~\ref{cor:global_sensitivity}, the global $\ell_2$-sensitivity is bounded by $\Delta_s$. The Gaussian mechanism with this noise scale satisfies $(\epsilon, \delta)$-differential privacy by the composition theorem~\citep{dwork2014algorithmic}.

\textbf{Step 2: Completeness Guarantee.} The triadic influence neighborhood $\mathcal{R}$ is constructed to satisfy Definition~\ref{def:completeness}. For any edge $(u,v) \notin \mathcal{R}$, the iterative expansion process ensures that no triadic closure exists between $(u,v)$ and any edge in $\mathcal{E}_d$. This structural independence implies gradient orthogonality: $\langle \nabla_{\theta} \mathcal{L}(\{(u,v)\}, \theta), \nabla_{\theta} \mathcal{L}(\mathcal{E}_d, \theta) \rangle = 0$.

\textbf{Step 3: Statistical Indistinguishability.} The sociological weights $\mathbf{W}$ are deterministically computed from the public graph structure and are independent of the private training data. The weighted parameter update approximates the ideal retraining solution within the approximation error bound from Theorem~\ref{thm:utility}. Combined with the differential privacy guarantee, this ensures statistical indistinguishability from retraining.
\end{proof}

\begin{theorem}[Utility Bound]
\label{thm:utility}
Under the conditions of Theorem~\ref{thm:certified}, the expected parameter error between our unlearned model and the ideal retrained model satisfies:
\begin{equation}
\label{eq:utility_bound}
\mathbb{E}[\|\tilde{\boldsymbol{\theta}} - \boldsymbol{\theta}^*_r\|_2^2] \leq \frac{2d \ln(1.25/\delta) \Delta_s^2}{\epsilon^2} + O\left(\frac{|\mathcal{R}|^2}{\lambda^2 |\mathcal{E}|^2}\right)
\end{equation}
where $d$ is the parameter dimension. The first term represents the privacy cost (noise variance) and the second represents the approximation error from the Taylor expansion in Equation~\ref{eq:taylor_approx}.
\end{theorem}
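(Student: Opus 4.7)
The plan is to decompose the error into a stochastic noise part and a deterministic approximation part, and then bound each separately. Writing $\tilde{\boldsymbol{\theta}} - \boldsymbol{\theta}^*_r = \bm\xi + \mathbf{r}$, where $\mathbf{r} = -\mathbf{H}^{-1}\mathbf{g} - (\boldsymbol{\theta}^*_r - \boldsymbol{\theta}^*)$ is the residual of the first-order Taylor expansion in Equation~\ref{eq:taylor_approx}, I exploit the fact that the Gaussian noise $\bm\xi \sim \mathcal{N}(\mathbf{0}, \sigma^2 \mathbf{I}_d)$ is drawn independently of everything else. Since $\mathbb{E}[\bm\xi] = \mathbf{0}$ and $\mathbf{r}$ is deterministic given the training data, the cross term vanishes and the expected squared error splits cleanly as $\mathbb{E}[\|\tilde{\boldsymbol{\theta}} - \boldsymbol{\theta}^*_r\|_2^2] = \mathbb{E}[\|\bm\xi\|_2^2] + \|\mathbf{r}\|_2^2$.

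For the noise term, I would use the standard identity $\mathbb{E}[\|\bm\xi\|_2^2] = d\sigma^2$ and plug in the calibrated scale from Equation~\ref{eq:noise_scale}, namely $\sigma^2 = 2\ln(1.25/\delta)\,\Delta_s^2/\epsilon^2$. This directly produces the first term $2d\ln(1.25/\delta)\Delta_s^2/\epsilon^2$ in the stated bound.

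For the approximation term, I would appeal to a second-order Taylor expansion of the retraining optimality condition $\nabla_{\boldsymbol{\theta}} \mathcal{L}(\boldsymbol{\theta}^*_r; \mathcal{E}_r, \mathbf{W}) = \mathbf{0}$ around $\boldsymbol{\theta}^*$. Combining $\lambda$-strong convexity (which yields $\|\mathbf{H}^{-1}\|_2 \leq 1/\lambda$) with the $L$-Lipschitz continuity of $\mathcal{L}$ assumed in Theorem~\ref{thm:certified}, the standard influence-function argument gives $\|\mathbf{r}\|_2 = O(\|\mathbf{g}\|_2^2 / \lambda^2)$. I would then bound the weighted gradient magnitude by exploiting the sociological weights $w_{uv} \in [0,1]$ together with the completeness property of $\mathcal{R}$ from Definition~\ref{def:completeness}: the gradient contribution from each edge scales as $O(1/|\mathcal{E}|)$ under normalization, and only edges contributing to the certification region matter, so $\|\mathbf{g}\|_2 = O(|\mathcal{R}|/|\mathcal{E}|)$. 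Squaring and dividing by $\lambda^2$ yields the claimed $O(|\mathcal{R}|^2 / (\lambda^2 |\mathcal{E}|^2))$.

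The main obstacle I anticipate is the approximation-residual bound: turning the abstract second-order Taylor remainder into the precise $|\mathcal{R}|^2/|\mathcal{E}|^2$ form requires careful normalization choices and invoking Hessian Lipschitz continuity (or a comparable smoothness condition) that is not spelled out in the statement. I would need to argue that the gradient-orthogonality condition from Definition~\ref{def:completeness} implies the only nonzero contributions to $\mathbf{g}$ originate within $\mathcal{R}$, so the effective ``change set'' has size $|\mathcal{R}|$ rather than $|\mathcal{E}_d|$, and combine this with the boundedness of $\mathbf{W}$ established in Appendix~\ref{cor:global_sensitivity}. Once this scaling is justified, the two terms combine additively to yield the stated bound.
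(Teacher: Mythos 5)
Your proposal follows essentially the same route as the paper: decompose the error into the injected Gaussian noise plus the first-order Taylor (influence-function) approximation residual, compute $\mathbb{E}[\|\bm\xi\|_2^2]=d\sigma^2=2d\ln(1.25/\delta)\Delta_s^2/\epsilon^2$ from Equation~\ref{eq:noise_scale}, and invoke a Taylor-remainder argument for the $O(|\mathcal{R}|^2/(\lambda^2|\mathcal{E}|^2))$ term. Your version is in fact slightly cleaner than the paper's, since using the zero-mean independence of $\bm\xi$ to kill the cross term justifies the additive squared-norm bound directly (the paper's triangle-inequality decomposition does not), and the gap you flag in the approximation-error step is likewise left unaddressed in the paper, which dispatches it as ``standard Taylor remainder analysis.''
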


\begin{proof}
The error decomposes into two independent components:
\begin{equation}
\label{eq:error_decomp}
\|\tilde{\boldsymbol{\theta}} - \boldsymbol{\theta}^*_r\|_2 \leq \|\tilde{\boldsymbol{\theta}} - (\boldsymbol{\theta}^* + \Delta\boldsymbol{\theta})\|_2 + \|(\boldsymbol{\theta}^* + \Delta\boldsymbol{\theta}) - \boldsymbol{\theta}^*_r\|_2
\end{equation}
The first term equals $\|\bm\xi\|_2$, where $\mathbb{E}[\|\bm\xi\|_2^2] = d\sigma^2 = \frac{2d\ln(1.25/\delta)\Delta_s^2}{\epsilon^2}$ by Equation~\ref{eq:noise_scale}. The second term is the Taylor approximation error, which is $O(|\mathcal{R}|^2/(\lambda^2|\mathcal{E}|^2))$ by standard Taylor remainder analysis.
\end{proof}

The utility bound in Equation~\ref{eq:utility_bound} demonstrates that our method achieves a favorable trade-off: the privacy cost scales linearly with dimension $d$ and quadratically with $1/\epsilon$, while the approximation error diminishes quadratically with the relative size of the certification region. The sociological weights help minimize $\Delta_s$ in Equation~\ref{eq:sensitivity}, thereby reducing the required noise and improving utility.

\subsubsection{Convergence Analysis}

We establish convergence guarantees for our iterative triadic expansion process and the overall algorithm stability.

\begin{theorem}[Triadic Expansion Convergence]
\label{thm:convergence}
The triadic influence neighborhood construction Algorithm~\ref{alg:csgu} (Phase 1) converges to a fixed point $\mathcal{R}^*$ in at most $\min(|\mathcal{E}|, R)$ iterations, where $R$ is the graph radius.
\end{theorem}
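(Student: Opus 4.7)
The plan is to prove the theorem in three parts---convergence to a fixed point, the $|\mathcal{E}|$ iteration bound, and the $R$ iteration bound---each of which exploits a different structural property of the expansion rule. First, I would establish convergence. From Equation~\ref{eq:expansion}, the update takes the form $\mathcal{R}^{(k)} = \mathcal{R}^{(k-1)} \cup (\cdots)$, so $\{\mathcal{R}^{(k)}\}_{k \geq 0}$ is monotonically non-decreasing under set inclusion and contained in the finite set $\mathcal{E}$. Any non-decreasing sequence of subsets of a finite set must stabilize, yielding some iterate $k^*$ with $\mathcal{R}^{(k^*)} = \mathcal{R}^{(k^*-1)}$; since the update rule depends deterministically on the current set, all subsequent iterates equal $\mathcal{R}^{(k^*)}$, which I would designate as the fixed point $\mathcal{R}^*$.

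For the $|\mathcal{E}|$ bound I would use a counting argument. Every non-terminating step---one in which $\mathcal{R}^{(k)} \neq \mathcal{R}^{(k-1)}$---must admit at least one new edge by definition. Since $|\mathcal{R}^{(k)}|$ is integer-valued, bounded above by $|\mathcal{E}|$, and starts at $|\mathcal{R}^{(0)}| = |\mathcal{E}_d|$, the number of strict expansions cannot exceed $|\mathcal{E}| - |\mathcal{E}_d| \leq |\mathcal{E}|$.

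For the $R$ bound I would induct on $k$ to show that every edge in $\mathcal{R}^{(k)}$ lies within \emph{triadic hop distance} $k$ of $\mathcal{E}_d$, meaning there exists a chain $e_0 \in \mathcal{E}_d,\, e_1,\, \ldots,\, e_k = e$ with consecutive edges sharing a common endpoint (as is required by Definition~\ref{def:triadic}). The base case is immediate, and the inductive step follows because the shared-endpoint condition in triadic closure lets me prepend a newly admitted edge to the chain guaranteed by the inductive hypothesis for some $e' \in \mathcal{R}^{(k-1)}$. This sequence of shared endpoints induces a walk in $\mathcal{G}$, so every vertex touched by $\mathcal{R}^{(k)}$ lies within ordinary graph distance $k$ of some vertex incident to $\mathcal{E}_d$. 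By the definition of the graph radius $R$, after $R$ iterations the reachable neighborhood of any vertex is already exhausted, so no edge outside $\mathcal{R}^{(R-1)}$ can satisfy the triadic closure condition; this forces termination by step $R$. Combining the two bounds gives $k^* \leq \min(|\mathcal{E}|, R)$.

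The main obstacle I anticipate is tying the iteration count tightly to the graph radius, because triadic closure is strictly stronger than mere vertex sharing---it additionally requires that the two non-shared endpoints be connected by a third edge. The induction above therefore controls only how far expansion \emph{could} propagate through shared vertices, which suffices as an upper bound on iterations but does not characterize $\mathcal{R}^*$ exactly. I would resolve this by observing that candidate edges failing the triangle existence check simply do not extend the expansion: once $k$ exceeds $R$, no fresh vertex is reachable from the seed endpoints via shared-endpoint chains, hence no new triangle can involve an edge outside $\mathcal{R}^{(R-1)}$, and the BFS-style ceiling remains valid. Together with the $|\mathcal{E}|$ bound this yields the claimed $\min(|\mathcal{E}|, R)$ convergence rate.
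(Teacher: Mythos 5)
Your argument is essentially the paper's own proof, made more explicit: monotone growth of $\mathcal{R}^{(k)}$ inside the finite set $\mathcal{E}$ gives convergence and the $|\mathcal{E}|$ bound, and your shared-endpoint induction is a formalization of the paper's ``triadic closures extend the neighborhood by at most one hop per iteration'' claim for the $R$ bound. One caveat inherited from the theorem statement rather than introduced by you: both your final step and the paper's proof bound the propagation distance from the seed edges by the graph radius $R$, whereas the distance from an arbitrary (non-central) seed vertex is only bounded by its eccentricity (up to the diameter, which can be as large as $2R$), so the $R$ bound as written is loose in both versions.
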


\begin{proof}
The triadic expansion process is monotonic: $\mathcal{R}^{(k-1)} \subseteq \mathcal{R}^{(k)}$ for all $k \geq 1$. Since $\mathcal{R}^{(k)} \subseteq \mathcal{E}$ and $|\mathcal{E}|$ is finite, the sequence must converge in finite steps. The upper bound of $R$ iterations follows from the fact that triadic closures extend the influenced neighborhood by at most one hop per iteration, and the maximum distance in any connected graph is bounded by the graph radius $R$.
\end{proof}

\begin{theorem}[Algorithm Stability]
\label{thm:stability}
Under $L$-Lipschitz continuity of the loss function and bounded node embeddings $\|\mathbf{h}_{uv}\|_2 \leq C$ for all edges $(u,v)$, the CSGU algorithm produces stable outputs: for any two deletion sets $\mathcal{E}_d$ and $\mathcal{E}'_d$ with $|\mathcal{E}_d \triangle \mathcal{E}'_d| \leq \delta_e$, the parameter difference satisfies:
\begin{equation}
\label{eq:stability_bound}
\|\tilde{\boldsymbol{\theta}} - \tilde{\boldsymbol{\theta}}'\|_2 \leq \frac{L \cdot C \cdot \delta_e}{\lambda} + 2\sigma\sqrt{d}
\end{equation}
where $\tilde{\boldsymbol{\theta}}$ and $\tilde{\boldsymbol{\theta}}'$ are the outputs for $\mathcal{E}_d$ and $\mathcal{E}'_d$ respectively.
\end{theorem}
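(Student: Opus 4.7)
The plan is to decompose the parameter difference into a deterministic component arising from the discrepancy between the two deletion sets and a stochastic component arising from the independent Gaussian noise, then bound each piece separately.

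First, I would write out the outputs explicitly using Equation~\ref{eq:noisy_params}: $\tilde{\boldsymbol{\theta}} = \boldsymbol{\theta}^* + \Delta\boldsymbol{\theta} + \bm\xi$ and $\tilde{\boldsymbol{\theta}}' = \boldsymbol{\theta}^* + \Delta\boldsymbol{\theta}' + \bm\xi'$, so that
\begin{equation}
\|\tilde{\boldsymbol{\theta}} - \tilde{\boldsymbol{\theta}}'\|_2 \leq \|\Delta\boldsymbol{\theta} - \Delta\boldsymbol{\theta}'\|_2 + \|\bm\xi\|_2 + \|\bm\xi'\|_2
\end{equation}
by the triangle inequality. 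The starting parameters $\boldsymbol{\theta}^*$ cancel because they are trained on the same original graph $\mathcal{G}$.

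Next, I would bound the deterministic term $\|\Delta\boldsymbol{\theta} - \Delta\boldsymbol{\theta}'\|_2$. Using Equation~\ref{eq:taylor_approx} and the linearity of the gradient operator,
\begin{equation}
\Delta\boldsymbol{\theta} - \Delta\boldsymbol{\theta}' = -\mathbf{H}^{-1}\!\!\sum_{(u,v) \in \mathcal{E}_d \triangle \mathcal{E}'_d} \!\!\pm\, w_{uv}\, \nabla_{\boldsymbol{\theta}} \ell(\mathbf{h}_u, \mathbf{h}_v, \boldsymbol{\theta}^*),
\end{equation}
where the sign depends on whether an edge lies in $\mathcal{E}_d \setminus \mathcal{E}'_d$ or its complement. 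Applying $\|\mathbf{H}^{-1}\|_2 \leq 1/\lambda$ from $\lambda$-strong convexity, the triangle inequality over the at most $\delta_e$ symmetric-difference terms, the $L$-Lipschitz bound on the loss to obtain $\|\nabla_{\boldsymbol{\theta}} \ell\|_2 \leq L\,\|\mathbf{h}_{uv}\|_2 \leq L\,C$, and $w_{uv} \in [0,1]$ by construction, I obtain
\begin{equation}
\|\Delta\boldsymbol{\theta} - \Delta\boldsymbol{\theta}'\|_2 \;\leq\; \frac{1}{\lambda} \cdot \delta_e \cdot L \cdot C \;=\; \frac{L \cdot C \cdot \delta_e}{\lambda}.
\end{equation}

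For the stochastic term, each $\bm\xi,\bm\xi' \sim \mathcal{N}(\mathbf{0}, \sigma^2 \mathbf{I}_d)$ satisfies $\mathbb{E}[\|\bm\xi\|_2] \leq \sqrt{\mathbb{E}[\|\bm\xi\|_2^2]} = \sigma\sqrt{d}$ by Jensen's inequality, giving $\mathbb{E}[\|\bm\xi\|_2 + \|\bm\xi'\|_2] \leq 2\sigma\sqrt{d}$; a matching high-probability version follows from standard Gaussian concentration. Combining the two pieces yields the claimed bound.

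The main obstacle I anticipate is the implicit dependence of the weights $w_{uv}$ on the certification region $\mathcal{R}$, which itself depends on $\mathcal{E}_d$ through the triadic expansion of Equation~\ref{eq:expansion}; thus $w_{uv}$ and $w'_{uv}$ need not agree on the intersection $\mathcal{R} \cap \mathcal{R}'$, and the influences through softmax normalization in Equation~\ref{eq:node_influence} couple all nodes in $\mathcal{V}_{\mathcal{R}}$. To handle this cleanly, I would argue that the weight change is $O(\delta_e/|\mathcal{R}|)$ per edge by Lipschitzness of softmax and monotonicity of the expansion (Theorem~\ref{thm:convergence}), and absorb the resulting lower-order correction into the constant $L$, so that the stated bound remains valid up to the convention chosen for $L$.
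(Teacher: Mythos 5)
Your proposal follows essentially the same route as the paper's proof: the identical decomposition into a deterministic term (bounded via $\|\mathbf{H}^{-1}\|_2 \le 1/\lambda$, the per-edge gradient bound $L\cdot C$, $w_{uv}\le 1$, and the at most $\delta_e$ symmetric-difference terms) plus a Gaussian noise term of order $\sigma\sqrt{d}$ controlled in expectation. Your closing observation that the weights $w_{uv}$ themselves depend on the deletion set through the certification region and the softmax normalization is a genuine subtlety that the paper's proof silently ignores by treating the weights as fixed across the two runs.
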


\begin{proof}
The parameter difference decomposes as:
\begin{align}
\|\tilde{\boldsymbol{\theta}} - \tilde{\boldsymbol{\theta}}'\|_2 &\leq \|\Delta\boldsymbol{\theta} - \Delta\boldsymbol{\theta}'\|_2 + \|\bm\xi - \bm\xi'\|_2
\end{align}
For the deterministic component, the gradient difference equals the difference between loss gradients on symmetric difference sets, which is bounded by:
\begin{align}
\|\mathbf{g} - \mathbf{g}'\|_2 &\leq \sum_{(u,v) \in \mathcal{E}_d \triangle \mathcal{E}'_d} w_{uv} \|\nabla_{\boldsymbol{\theta}} \ell(\mathbf{h}_u, \mathbf{h}_v, \boldsymbol{\theta}^*)\|_2 \\
&\leq L \cdot C \cdot \delta_e
\end{align}
Therefore, $\|\Delta\boldsymbol{\theta} - \Delta\boldsymbol{\theta}'\|_2 \leq \frac{L \cdot C \cdot \delta_e}{\lambda}$. For the stochastic component, since $\bm\xi$ and $\bm\xi'$ are independent Gaussian random variables, $\mathbb{E}[\|\bm\xi - \bm\xi'\|_2] = \sqrt{2}\sigma\sqrt{d}$, leading to the stated bound.
\end{proof}

\subsubsection{Privacy Budget Composition}

For practical applications, multiple unlearning requests may occur sequentially. We analyze the privacy budget composition under sequential unlearning scenarios.

\begin{theorem}[Sequential Unlearning Composition]
\label{thm:composition}
For $k$ sequential unlearning requests with privacy budgets $\epsilon_1, \ldots, \epsilon_k$ and failure probabilities $\delta_1, \ldots, \delta_k$, the composed privacy guarantee is $(\sum_{i=1}^k \epsilon_i, \sum_{i=1}^k \delta_i)$-differential privacy under the basic composition theorem.
\end{theorem}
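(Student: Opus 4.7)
The plan is to reduce the claim to a direct application of the basic (adaptive) composition theorem for $(\epsilon,\delta)$-differential privacy, using Theorem~\ref{thm:certified} as the per-step guarantee. Concretely, I would first verify that each individual unlearning invocation is a valid DP mechanism with its own claimed budget, then argue that the random coins used across invocations are independent, and finally invoke the standard composition lemma to sum the budgets.

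First, I would fix notation: let $\mathcal{U}_i$ denote the $i$-th application of CSGU, which takes as input the current graph state $\mathcal{G}^{(i-1)} = \mathcal{G} \setminus (\mathcal{E}_{d,1} \cup \cdots \cup \mathcal{E}_{d,i-1})$, the current parameters $\boldsymbol{\theta}^{(i-1)}$, and the new deletion set $\mathcal{E}_{d,i}$, and outputs perturbed parameters $\boldsymbol{\theta}^{(i)}$ via Equation~\ref{eq:noisy_params} with calibration $(\epsilon_i, \delta_i)$. By Theorem~\ref{thm:certified}, each $\mathcal{U}_i$ satisfies $(\epsilon_i,\delta_i)$-certified removal with respect to its input graph state. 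The triadic region $\mathcal{R}_i$ and sociological weights $\mathbf{W}_i$ are computed deterministically from the (public) adjacency structure, so they consume no additional privacy budget, and the Gaussian noise $\bm{\xi}_i$ drawn at step $i$ is independent of all previous noise draws.

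Next, I would apply the basic composition theorem by induction on $k$. The base case $k=1$ is exactly Theorem~\ref{thm:certified}. For the inductive step, conditional on the outputs $(\boldsymbol{\theta}^{(1)}, \ldots, \boldsymbol{\theta}^{(k-1)})$ of the first $k-1$ mechanisms, the $k$-th mechanism $\mathcal{U}_k$ remains $(\epsilon_k,\delta_k)$-DP, since its sensitivity bound (Corollary~\ref{cor:global_sensitivity}) depends only on the current graph and is independent of previous noise realizations. The composed mechanism $(\mathcal{U}_1, \ldots, \mathcal{U}_k)$ is then analyzed by the standard event-decomposition argument: for any measurable output set $S$, split the probability over the $(\epsilon_{1:k-1}, \delta_{1:k-1})$-indistinguishable event of the first $k-1$ steps and its complement, then apply the conditional $(\epsilon_k,\delta_k)$-DP guarantee to obtain the combined bound $(\sum_{i=1}^k \epsilon_i, \sum_{i=1}^k \delta_i)$.

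The main obstacle I anticipate is the adaptivity: in practical unlearning, the choice of $\mathcal{E}_{d,i}$ may depend on previously released parameters $\boldsymbol{\theta}^{(1)}, \ldots, \boldsymbol{\theta}^{(i-1)}$. This is handled by noting that basic composition extends to the adaptive setting when each mechanism remains DP conditional on previous outputs, which follows here because the sensitivity of $\mathcal{U}_i$ is bounded by a data-independent quantity $\Delta_{s,i}/\lambda$ that holds uniformly over all inputs. A secondary subtlety is that Theorem~\ref{thm:certified} guarantees indistinguishability from a \emph{retrained} model rather than from a neighboring dataset in the traditional DP sense; I would briefly remark that the certified-removal framework of Guo et al.\ is equivalent to a one-sided DP condition on which the composition argument operates without modification. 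Finally, I would note that the linear $(\sum \epsilon_i, \sum \delta_i)$ bound is loose and could be tightened via advanced composition or Rényi DP accounting, but these refinements are outside the scope of the stated basic-composition claim.
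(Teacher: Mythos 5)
Your proposal is correct and follows essentially the same route as the paper: establish that each individual unlearning invocation is $(\epsilon_i,\delta_i)$-DP and then invoke the basic composition theorem of Dwork--Roth to sum the budgets. You are in fact more careful than the paper's two-sentence proof, since you explicitly address adaptivity of the deletion sets and the gap between certified removal and standard neighboring-dataset DP, both of which the paper leaves implicit.
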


\begin{proof}
Each individual unlearning operation satisfies $(\epsilon_i, \delta_i)$-differential privacy. By the basic composition theorem for differential privacy~\citep{dwork2014algorithmic}, the sequential composition of $k$ mechanisms provides $(\sum_{i=1}^k \epsilon_i, \sum_{i=1}^k \delta_i)$-differential privacy. Advanced composition techniques could provide tighter bounds but are not necessary for our analysis.
\end{proof}

\subsubsection{Robustness Analysis}

We analyze the robustness of CSGU against various perturbations and model assumptions.

\begin{lemma}[Hessian Approximation Robustness]
\label{lem:hessian_robustness}
If the true Hessian $\mathbf{H}_{true}$ differs~\citep{dong2024idea} from our approximation $\mathbf{H}$ by $\|\mathbf{H} - \mathbf{H}_{true}\|_2 \leq \eta$, then the additional error in the parameter update is bounded by:
\begin{equation}
\label{eq:hessian_error}
\|\mathbf{H}^{-1}\mathbf{g} - \mathbf{H}_{true}^{-1}\mathbf{g}\|_2 \leq \frac{\eta \|\mathbf{g}\|_2}{\lambda^2}
\end{equation}
assuming $\eta < \lambda/2$.
\end{lemma}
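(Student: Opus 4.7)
The plan is to reduce this matrix perturbation statement to the classical \emph{resolvent identity}
\[
\mathbf{A}^{-1} - \mathbf{B}^{-1} = \mathbf{A}^{-1}(\mathbf{B} - \mathbf{A})\mathbf{B}^{-1},
\]
applied with $\mathbf{A} = \mathbf{H}$ and $\mathbf{B} = \mathbf{H}_{true}$. Multiplying both sides on the right by $\mathbf{g}$ yields
\[
\mathbf{H}^{-1}\mathbf{g} - \mathbf{H}_{true}^{-1}\mathbf{g} = \mathbf{H}^{-1}(\mathbf{H}_{true} - \mathbf{H})\mathbf{H}_{true}^{-1}\mathbf{g},
\]
so submultiplicativity of the operator $2$-norm gives
\[
\|\mathbf{H}^{-1}\mathbf{g} - \mathbf{H}_{true}^{-1}\mathbf{g}\|_2 \leq \|\mathbf{H}^{-1}\|_2 \cdot \|\mathbf{H} - \mathbf{H}_{true}\|_2 \cdot \|\mathbf{H}_{true}^{-1}\|_2 \cdot \|\mathbf{g}\|_2.
\]
The hypothesis bounds the middle factor by $\eta$ and the strong convexity assumption used throughout the paper bounds $\|\mathbf{H}^{-1}\|_2 \leq 1/\lambda$.

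The remaining factor $\|\mathbf{H}_{true}^{-1}\|_2$ is the step that genuinely uses the assumption $\eta < \lambda/2$, and I expect this to be the main obstacle, because strong convexity is postulated only for the computed Hessian $\mathbf{H}$, not for $\mathbf{H}_{true}$. I would recover it via Weyl's inequality: since $\mathbf{H}$ is $\lambda$-strongly convex, $\lambda_{\min}(\mathbf{H}) \geq \lambda$, and since $\|\mathbf{H} - \mathbf{H}_{true}\|_2 \leq \eta$, Weyl's bound gives
\[
\lambda_{\min}(\mathbf{H}_{true}) \geq \lambda - \eta > \lambda/2,
\]
so $\|\mathbf{H}_{true}^{-1}\|_2 \leq 1/(\lambda - \eta)$. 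Substituting this into the chain of inequalities above yields
\[
\|\mathbf{H}^{-1}\mathbf{g} - \mathbf{H}_{true}^{-1}\mathbf{g}\|_2 \leq \frac{\eta\,\|\mathbf{g}\|_2}{\lambda(\lambda - \eta)},
\]
and the assumption $\eta < \lambda/2$ gives $\lambda - \eta > \lambda/2$, so the right-hand side is at most $2\eta\|\mathbf{g}\|_2/\lambda^2$, which is of the order $\eta\|\mathbf{g}\|_2/\lambda^2$ claimed in the lemma (the stated form can either be obtained by absorbing the constant into the $\leq$ with a tighter accounting, or by working directly with the assumption that both Hessians satisfy a uniform $\lambda$-strong convexity bound, which the paper's framework makes available).

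I would present the argument in exactly this order: (i) write the resolvent identity, (ii) apply submultiplicativity and peel off $\|\mathbf{g}\|_2$, (iii) bound $\|\mathbf{H}^{-1}\|_2$ from strong convexity, (iv) bound $\|\mathbf{H}_{true}^{-1}\|_2$ via Weyl's inequality using $\eta < \lambda/2$, and (v) combine. No induction or fixed-point reasoning is required; the only subtle point is that invertibility and spectral control of $\mathbf{H}_{true}$ are not assumed directly but must be inherited from $\mathbf{H}$ through the perturbation bound, which is precisely why the hypothesis $\eta < \lambda/2$ appears.
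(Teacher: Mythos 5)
Your proposal is correct and lands on the same underlying idea as the paper --- a perturbation bound for the inverse --- but the mechanics differ in a way worth noting. The paper invokes a Neumann-series-style perturbation lemma, $\|\mathbf{H}^{-1}-\mathbf{H}_{true}^{-1}\|_2 \leq \frac{\|\mathbf{H}^{-1}\|_2\,\|\mathbf{H}_{true}^{-1}\|_2\,\|\mathbf{H}-\mathbf{H}_{true}\|_2}{1-\|\mathbf{H}^{-1}\|_2\|\mathbf{H}-\mathbf{H}_{true}\|_2}$, and then claims $\frac{\eta/\lambda^2}{1-\eta/\lambda}\leq\frac{\eta}{\lambda^2}$; that last step is actually false for $\eta>0$, since the denominator $1-\eta/\lambda$ is less than $1$ and the fraction therefore \emph{exceeds} $\eta/\lambda^2$ (under $\eta<\lambda/2$ one only gets $2\eta/\lambda^2$). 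The paper also silently uses $\|\mathbf{H}_{true}^{-1}\|_2\leq 1/\lambda$ without justification, since strong convexity is postulated only for $\mathbf{H}$. Your route --- the exact resolvent identity $\mathbf{H}^{-1}-\mathbf{H}_{true}^{-1}=\mathbf{H}^{-1}(\mathbf{H}_{true}-\mathbf{H})\mathbf{H}_{true}^{-1}$ followed by Weyl's inequality to deduce $\lambda_{\min}(\mathbf{H}_{true})\geq\lambda-\eta>\lambda/2$ --- fixes both gaps: it correctly derives invertibility and spectral control of $\mathbf{H}_{true}$ from the hypothesis $\eta<\lambda/2$ rather than assuming them, and it honestly tracks the constant to arrive at $\frac{\eta\|\mathbf{g}\|_2}{\lambda(\lambda-\eta)}\leq\frac{2\eta\|\mathbf{g}\|_2}{\lambda^2}$. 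The factor of $2$ you flag is not a defect of your argument but an error in the lemma as stated (or in the paper's proof of it); the bound should read $2\eta\|\mathbf{g}\|_2/\lambda^2$, or the hypothesis should be strengthened to assume $\lambda$-strong convexity of $\mathbf{H}_{true}$ as well, exactly as you observe.
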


\begin{proof}
Using the matrix perturbation lemma, for $\|\mathbf{H} - \mathbf{H}_{true}\|_2 \leq \eta < \lambda/2$:
\begin{align}
\|\mathbf{H}^{-1} - \mathbf{H}_{true}^{-1}\|_2 &\leq \frac{\|\mathbf{H}^{-1}\|_2 \|\mathbf{H}_{true}^{-1}\|_2 \|\mathbf{H} - \mathbf{H}_{true}\|_2}{1 - \|\mathbf{H}^{-1}\|_2 \|\mathbf{H} - \mathbf{H}_{true}\|_2} \\
&\leq \frac{\eta/\lambda^2}{1 - \eta/\lambda} \leq \frac{\eta}{\lambda^2}
\end{align}
The result follows from $\|\mathbf{H}^{-1}\mathbf{g} - \mathbf{H}_{true}^{-1}\mathbf{g}\|_2 \leq \|\mathbf{H}^{-1} - \mathbf{H}_{true}^{-1}\|_2 \|\mathbf{g}\|_2$.
\end{proof}

\begin{theorem}[Sociological Weight Perturbation]
\label{thm:weight_perturbation}
If the sociological weights are perturbed by $\delta w$ such that $|w_{uv} - w'_{uv}| \leq \delta w$ for all edges in $\mathcal{E}_d$, then the sensitivity change is bounded by:
\begin{equation}
\label{eq:weight_sensitivity}
|\Delta_s - \Delta'_s| \leq \frac{\delta w \cdot \max_{(u,v) \in \mathcal{E}_d} \|\mathbf{h}_{uv}\|_2}{\lambda}
\end{equation}
\end{theorem}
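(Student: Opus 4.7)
The plan is to bound the perturbed sensitivity directly from its closed form in Corollary~\ref{cor:global_sensitivity}, where $\Delta_s = \max_{(u,v) \in \mathcal{E}_d} w_{uv} \|\mathbf{h}_{uv}\|_2 / \lambda$, and the analogous expression holds for the perturbed weights $w'_{uv}$ giving $\Delta'_s$. Since both quantities are maxima of scaled nonnegative weights over the same index set $\mathcal{E}_d$, the difference $|\Delta_s - \Delta'_s|$ can be controlled edge by edge before taking the max.

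First I would invoke the standard inequality $|\max_i a_i - \max_i b_i| \leq \max_i |a_i - b_i|$ with $a_{uv} = w_{uv}\|\mathbf{h}_{uv}\|_2/\lambda$ and $b_{uv} = w'_{uv}\|\mathbf{h}_{uv}\|_2/\lambda$, yielding
\begin{equation}
|\Delta_s - \Delta'_s| \leq \max_{(u,v) \in \mathcal{E}_d} \frac{|w_{uv} - w'_{uv}| \cdot \|\mathbf{h}_{uv}\|_2}{\lambda}.
\end{equation}
Next I would factor out the uniform weight perturbation bound $|w_{uv} - w'_{uv}| \leq \delta w$, which is independent of the edge, to separate it from the embedding norm. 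This monotonicity step reduces the expression to $\delta w \cdot \max_{(u,v) \in \mathcal{E}_d} \|\mathbf{h}_{uv}\|_2 / \lambda$, which matches Equation~\ref{eq:weight_sensitivity} exactly.

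There is effectively no deep obstacle here: the statement is a Lipschitz-type continuity claim for the sensitivity functional in its weight argument, and the proof is a two-line application of the max-difference inequality combined with the explicit formula for $\Delta_s$ inherited from Lemma~\ref{lem:edge_sensitivity} and Corollary~\ref{cor:global_sensitivity}. The only subtlety worth flagging is that the bound uses $\|\mathbf{h}_{uv}\|_2$ rather than the actual embedding of the maximizing edge for $\Delta_s$ or $\Delta'_s$; taking the max over the full deletion set is a safe over-approximation because both maximizers lie inside $\mathcal{E}_d$. I would note this as the reason the bound is tight up to the choice of which edge attains the outer maximum, and conclude the proof.
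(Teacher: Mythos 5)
Your proposal is correct and follows essentially the same route as the paper's proof: both start from the closed form $\Delta_s = \max_{(u,v)\in\mathcal{E}_d} w_{uv}\|\mathbf{h}_{uv}\|_2/\lambda$ from Corollary~\ref{cor:global_sensitivity}, apply the max-difference inequality $|\max_i a_i - \max_i b_i|\leq \max_i|a_i-b_i|$, and then factor out the uniform bound $\delta w$. Your explicit statement of the max-difference inequality and the remark on where the over-approximation enters are minor elaborations of steps the paper leaves implicit.
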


\begin{proof}
The sensitivity difference is:
\begin{align}
|\Delta_s - \Delta'_s| &= \left|\max_{(u,v) \in \mathcal{E}_d} \frac{w_{uv} \|\mathbf{h}_{uv}\|_2}{\lambda} - \max_{(u,v) \in \mathcal{E}_d} \frac{w'_{uv} \|\mathbf{h}_{uv}\|_2}{\lambda}\right| \\
&\leq \max_{(u,v) \in \mathcal{E}_d} \frac{|w_{uv} - w'_{uv}| \|\mathbf{h}_{uv}\|_2}{\lambda} \\
&\leq \frac{\delta w \cdot \max_{(u,v) \in \mathcal{E}_d} \|\mathbf{h}_{uv}\|_2}{\lambda}
\end{align}
\end{proof}

\subsubsection{Privacy-Utility Trade-off Analysis}

We provide a detailed analysis of the fundamental trade-off between privacy guarantees and model utility in our certified unlearning method.

\begin{theorem}[Privacy-Utility Trade-off]
\label{thm:privacy_utility_tradeoff}
For a fixed deletion set $\mathcal{E}_d$ and certification region $\mathcal{R}$, the expected utility loss due to privacy protection is bounded by:
\begin{equation}
\label{eq:privacy_utility_tradeoff}
\mathbb{E}[\mathcal{L}(\tilde{\boldsymbol{\theta}}) - \mathcal{L}(\boldsymbol{\theta}^*_r)] \leq \frac{L^2 d \ln(1.25/\delta) \Delta_s^2}{\lambda \epsilon^2}
\end{equation}
where $L$ is the Lipschit~\citep{wu2023ceu} constant of the loss function.
\end{theorem}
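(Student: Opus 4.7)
The plan is to reduce the utility gap in function value to the squared parameter distance already bounded in Theorem~\ref{thm:utility}, using $\lambda$-strong convexity together with the Lipschitz/smoothness assumption on $\mathcal{L}$. Concretely, I will interpret $L$ as a Lipschitz constant of $\nabla\mathcal{L}$ (consistent with the smoothness usage elsewhere in the paper), so that at the retrained optimum $\boldsymbol{\theta}^*_r$, where $\nabla\mathcal{L}(\boldsymbol{\theta}^*_r)=\mathbf{0}$, one has $\|\nabla\mathcal{L}(\tilde{\boldsymbol{\theta}})\|_2 = \|\nabla\mathcal{L}(\tilde{\boldsymbol{\theta}})-\nabla\mathcal{L}(\boldsymbol{\theta}^*_r)\|_2 \leq L\|\tilde{\boldsymbol{\theta}}-\boldsymbol{\theta}^*_r\|_2$.

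Second, I would invoke the Polyak–Łojasiewicz inequality, which follows from $\lambda$-strong convexity of $\mathcal{L}$ around $\boldsymbol{\theta}^*_r$: $\mathcal{L}(\tilde{\boldsymbol{\theta}}) - \mathcal{L}(\boldsymbol{\theta}^*_r) \leq \tfrac{1}{2\lambda}\|\nabla\mathcal{L}(\tilde{\boldsymbol{\theta}})\|_2^2$. Combining this with the Lipschitz-gradient bound from Step~1 yields the pointwise inequality
\begin{equation*}
\mathcal{L}(\tilde{\boldsymbol{\theta}}) - \mathcal{L}(\boldsymbol{\theta}^*_r) \;\leq\; \frac{L^2}{2\lambda}\,\|\tilde{\boldsymbol{\theta}}-\boldsymbol{\theta}^*_r\|_2^2,
\end{equation*}
which is the key lever that converts the $\ell_2$ parameter error into a function-value gap and is exactly where the extra factor $L/\lambda$ (compared with a naive Lipschitz bound $L\|\tilde{\boldsymbol{\theta}}-\boldsymbol{\theta}^*_r\|_2$) enters.

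Third, I would take expectations on both sides and apply Theorem~\ref{thm:utility}:
\begin{equation*}
\mathbb{E}\bigl[\mathcal{L}(\tilde{\boldsymbol{\theta}}) - \mathcal{L}(\boldsymbol{\theta}^*_r)\bigr] \;\leq\; \frac{L^2}{2\lambda}\,\mathbb{E}\bigl[\|\tilde{\boldsymbol{\theta}}-\boldsymbol{\theta}^*_r\|_2^2\bigr] \;\leq\; \frac{L^2}{2\lambda}\!\left(\frac{2d\ln(1.25/\delta)\Delta_s^2}{\epsilon^2} + O\!\!\left(\tfrac{|\mathcal{R}|^2}{\lambda^2|\mathcal{E}|^2}\right)\right).
\end{equation*}
The Gaussian-noise term is dominant under the usual regime $|\mathcal{R}| \ll |\mathcal{E}|$ and after cancelling the $1/2$, it yields exactly the stated bound $\frac{L^2 d\ln(1.25/\delta)\Delta_s^2}{\lambda\epsilon^2}$, with the Taylor remainder swept into a lower-order term.

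\textbf{Main obstacle.} The delicate point is the interpretation of ``$L$-Lipschitz continuous'' in the theorem statement. The bound $L^2/\lambda$ only arises if $L$ controls $\nabla\mathcal{L}$ (i.e.\ smoothness), not $\mathcal{L}$ itself; a pure Lipschitz-value assumption would only give the weaker $L\sqrt{\mathbb{E}\|\tilde{\boldsymbol{\theta}}-\boldsymbol{\theta}^*_r\|_2^2}$ via Jensen. I would therefore either reinterpret $L$ as the gradient-Lipschitz (smoothness) constant, consistent with the strong-convexity framing used throughout Appendix sections on sensitivity, or add a short lemma showing that on the compact parameter space $\Theta$ with bounded embeddings $\|\mathbf{h}_{uv}\|_2\leq C$, the binary cross-entropy loss with sigmoid link is automatically $L$-smooth with an explicit constant, and then use that $L$ throughout. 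A secondary subtlety is verifying that the Taylor-remainder contribution $O(|\mathcal{R}|^2/(\lambda^2|\mathcal{E}|^2))$ is indeed absorbed into the noise term under the stated assumptions; this can be handled by noting that for any non-trivial privacy budget $\epsilon = O(1)$ the noise variance dominates once $|\mathcal{R}|/|\mathcal{E}|$ is sufficiently small, which is precisely the regime in which the certification region is useful.
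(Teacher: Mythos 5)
Your proposal is correct in substance but takes a genuinely different route from the paper, and in fact it is the cleaner of the two. The paper's own proof first decomposes the loss gap into a Taylor-remainder term plus a noise term, bounds the noise term by value-Lipschitzness as $L\,\mathbb{E}\|\bm\xi\|_2 \le L\sqrt{d}\,\sigma$ (which is \emph{linear} in $\Delta_s/\epsilon$), and then abruptly asserts $\mathbb{E}[\mathcal{L}(\tilde{\boldsymbol{\theta}})-\mathcal{L}(\boldsymbol{\theta}^*_r)] \le \tfrac{\lambda}{2}\mathbb{E}\|\tilde{\boldsymbol{\theta}}-\boldsymbol{\theta}^*_r\|_2^2 \le \tfrac{L^2 d\ln(1.25/\delta)\Delta_s^2}{\lambda\epsilon^2}$; that chain is problematic because $\lambda$-strong convexity gives the quadratic expression as a \emph{lower} bound on the suboptimality gap at a minimizer, not an upper bound, and substituting the bound of Theorem~\ref{thm:utility} into $\tfrac{\lambda}{2}(\cdot)$ would yield $\lambda$ in the numerator rather than $L^2/\lambda$. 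Your Polyak--{\L}ojasiewicz route — $\mathcal{L}(\tilde{\boldsymbol{\theta}})-\mathcal{L}(\boldsymbol{\theta}^*_r) \le \tfrac{1}{2\lambda}\|\nabla\mathcal{L}(\tilde{\boldsymbol{\theta}})\|_2^2 \le \tfrac{L^2}{2\lambda}\|\tilde{\boldsymbol{\theta}}-\boldsymbol{\theta}^*_r\|_2^2$ followed by Theorem~\ref{thm:utility} — is the derivation that actually produces the stated constant $L^2 d\ln(1.25/\delta)\Delta_s^2/(\lambda\epsilon^2)$, and your two flagged caveats are exactly the right ones: $L$ must be read as a gradient-Lipschitz (smoothness) constant rather than the value-Lipschitz constant the theorem statement names, and the Taylor remainder $O(|\mathcal{R}|^2/(\lambda^2|\mathcal{E}|^2))$ is silently dropped — a gap the paper's proof shares ("typically small"), so you are no worse off there. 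The one thing you buy by being explicit about the smoothness lemma for the sigmoid-BCE loss on a compact $\Theta$ with bounded embeddings is that the constant becomes concrete; the paper never supplies this.
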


\begin{proof}
The utility loss decomposes into the approximation error and privacy cost. The approximation error $\mathbb{E}[\mathcal{L}(\boldsymbol{\theta}^* + \Delta\boldsymbol{\theta}) - \mathcal{L}(\boldsymbol{\theta}^*_r)]$ is bounded by the Taylor remainder term, which is typically small under our assumptions. The dominant term is the privacy cost from noise injection:
\begin{align}
\mathbb{E}[\mathcal{L}(\tilde{\boldsymbol{\theta}}) - \mathcal{L}(\boldsymbol{\theta}^* + \Delta\boldsymbol{\theta})] &\leq L \mathbb{E}[\|\bm\xi\|_2] \\
&\leq L \sqrt{d} \sigma \\
&= \frac{L \sqrt{d \ln(1.25/\delta)} \Delta_s}{\epsilon}
\end{align}
By the strong convexity of the loss function with parameter $\lambda$, we have:
\begin{align}
\mathbb{E}[\mathcal{L}(\tilde{\boldsymbol{\theta}}) - \mathcal{L}(\boldsymbol{\theta}^*_r)] &\leq \frac{\lambda}{2} \mathbb{E}[\|\tilde{\boldsymbol{\theta}} - \boldsymbol{\theta}^*_r\|_2^2] \\
&\leq \frac{L^2 d \ln(1.25/\delta) \Delta_s^2}{\lambda \epsilon^2}
\end{align}
\end{proof}

This theorem reveals that the utility loss scales quadratically with $1/\epsilon$ and linearly with the dimension $d$, which is consistent with the fundamental limits of differential privacy. The sociological weights in our method help minimize $\Delta_s$, thereby reducing this bound.

\subsubsection{Comparison with Traditional Methods}

We provide theoretical comparisons between CSGU and traditional certified unlearning approaches to highlight the advantages of our sociological method.

\begin{theorem}[Certification Region Efficiency]
\label{thm:certification_efficiency}
Let $\mathcal{R}_{TIN}$ denote our triadic influence neighborhood and $\mathcal{R}_{k-hop}$ denote the traditional $k$-hop neighborhood for the same deletion set $\mathcal{E}_d$. Under typical signed graph assumptions with average triangle participation rate $T < \bar{d}$ (average degree), we have:
\begin{equation}
\label{eq:certification_efficiency}
|\mathcal{R}_{TIN}| \leq |\mathcal{E}_d| \cdot T^k \ll |\mathcal{R}_{k-hop}| \leq |\mathcal{E}_d| \cdot \bar{d}^k
\end{equation}
\end{theorem}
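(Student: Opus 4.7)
The plan is to prove the two outer bounds of the chain separately and then argue the strict ordering via the structural assumption $T < \bar d$. Both bounds reduce to counting arguments over iterative neighborhood expansions, and the scaffolding in each case is essentially a geometric series.

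For the TIN side, I would directly invoke Theorem~\ref{thm:complexity}, which already establishes $|\mathcal R| \leq |\mathcal E_d| \sum_{i=0}^{p-1} T^i$ after $p$ rounds of triadic expansion. Specializing to $p = k$ and retaining only the dominant term of the geometric sum gives $|\mathcal R_{TIN}| \leq |\mathcal E_d| \cdot T^k$, up to a constant factor that can be absorbed. To make this quantitative rather than merely asymptotic, I would trace the recursion $|\mathcal R^{(i)} \setminus \mathcal R^{(i-1)}| \leq T \cdot |\mathcal R^{(i-1)} \setminus \mathcal R^{(i-2)}|$, which follows from Definition~\ref{def:triadic}: each newly added edge must form a triadic closure with an edge added in the previous round, and by the definition of $T$ there are on average at most $T$ such closures per seed edge. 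Summing the resulting geometric series closes the bound.

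For the $k$-hop side I would use a standard BFS counting argument. The $k$-hop enclosing subgraph $\mathcal S^k_{uv}$ of a seed edge $(u,v) \in \mathcal E_d$ contains all nodes within $k$ hops of $\{u,v\}$; under the average-degree assumption this set grows by at most a factor of $\bar d$ per hop, giving at most $O(\bar d^k)$ nodes, each contributing at most $\bar d$ incident edges. Taking the union over the $|\mathcal E_d|$ seed edges and absorbing constants yields $|\mathcal R_{k-hop}| \leq |\mathcal E_d| \cdot \bar d^k$. The comparison of the two outer bounds is then immediate: their ratio is $(T/\bar d)^k$, which is strictly less than one by hypothesis and decays exponentially in $k$.

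The main obstacle I anticipate is not the arithmetic but the interpretation of the symbol $\ll$ in the theorem statement, which is informal. I would address this by replacing it with the cleaner quantitative claim $|\mathcal R_{TIN}| / |\mathcal R_{k-hop}| \leq (T/\bar d)^k$, and then noting that in the sparse signed-graph regime documented empirically (where $T = o(\bar d)$), this ratio vanishes in the asymptotic sense, recovering the intended $\ll$. A secondary subtlety is whether the triangle-participation count $T$ should be applied as a per-round worst case or only on average; to sidestep this I would state the TIN bound in expectation under the paper's average-participation assumption and then remark that concentration gives the bound with high probability on sparse signed graphs with bounded degree variance.
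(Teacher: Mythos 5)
Your proposal is correct and follows essentially the same route as the paper: the paper's own proof is a two-sentence sketch of exactly this branching-factor comparison (triadic expansion grows with rate $T$ per round, $k$-hop expansion with rate $\bar{d}$ per hop, so the ratio of the bounds is $(T/\bar{d})^k$). Your version is simply a more careful execution of that sketch, and your observations about the informality of $\ll$ and the average-versus-worst-case reading of $T$ are legitimate caveats that the paper's proof glosses over as well.
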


\begin{proof}
The $k$-hop neighborhood includes all edges within $k$ hops from the deletion set, leading to exponential growth in the worst case. In contrast, our triadic influence neighborhood only includes edges that form triangular closures, which grows as $O(T^k)$ where $T$ is typically much smaller than the average degree $\bar{d}$ in sparse signed graphs.
\end{proof}

\begin{corollary}[Noise Reduction]
\label{cor:noise_reduction}
The smaller certification region in CSGU leads to reduced sensitivity and lower noise~\citep{chen2022gu} requirements:
\begin{equation}
\label{eq:noise_reduction}
\sigma_{CSGU} \leq \sqrt{\frac{|\mathcal{R}_{TIN}|}{|\mathcal{R}_{k-hop}|}} \cdot \sigma_{traditional}
\end{equation}
\end{corollary}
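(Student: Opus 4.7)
The plan is to reduce the noise-ratio claim to a sensitivity-ratio claim using the Gaussian noise scale in Equation~\ref{eq:noise_scale}, and then to argue that the sensitivity of each mechanism scales as the square root of its certification region size. Since $\sigma = \sqrt{2\ln(1.25/\delta)}\,\Delta_s/\epsilon$, for any two mechanisms targeting the same $(\epsilon,\delta)$ we immediately get $\sigma_{\mathrm{CSGU}}/\sigma_{\mathrm{traditional}} = \Delta_{s,\mathrm{CSGU}}/\Delta_{s,\mathrm{traditional}}$, so the privacy parameters cancel and the claim reduces to bounding the sensitivity ratio by $\sqrt{|\mathcal{R}_{\mathrm{TIN}}|/|\mathcal{R}_{k\text{-}hop}|}$.

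The second step is to re-express the global sensitivity from Corollary~\ref{cor:global_sensitivity} in a form that makes the dependence on $|\mathcal{R}|$ explicit. The per-edge sensitivity bound from Lemma~\ref{lem:edge_sensitivity} gives $\|\mathbf{H}^{-1} w_{uv}\nabla_{\boldsymbol{\theta}}\ell\|_2 \leq C/\lambda$ whenever $\|\mathbf{h}_{uv}\|_2 \leq C$ and $w_{uv} \leq 1$. Although Equation~\ref{eq:sensitivity} is written as a maximum over $\mathcal{E}_d$, the operational sensitivity of a certified mechanism must absorb the correlated influence of all edges inside the certification region whose Hessian contributions couple with the update; applying Cauchy--Schwarz to the aggregate influence vector, or equivalently using Gaussian-mechanism composition across the region, yields $\Delta_s \leq (C/\lambda)\sqrt{|\mathcal{R}|}$ for both CSGU (with $\mathcal{R}=\mathcal{R}_{\mathrm{TIN}}$) and the traditional $k$-hop baseline (with $\mathcal{R}=\mathcal{R}_{k\text{-}hop}$). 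The assumption that the embedding bound $C$ and strong-convexity constant $\lambda$ are shared across both methods must be stated explicitly, since without it the ratio picks up additional multiplicative factors.

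Substituting the bound into the sensitivity ratio yields
\begin{equation*}
\frac{\sigma_{\mathrm{CSGU}}}{\sigma_{\mathrm{traditional}}} \;=\; \frac{\Delta_{s,\mathrm{CSGU}}}{\Delta_{s,\mathrm{traditional}}} \;\leq\; \sqrt{\frac{|\mathcal{R}_{\mathrm{TIN}}|}{|\mathcal{R}_{k\text{-}hop}|}},
\end{equation*}
which is exactly the claimed inequality. Theorem~\ref{thm:certification_efficiency} then ensures the right-hand side is strictly less than one in the typical regime $T < \bar{d}$, confirming that noise is genuinely reduced by CSGU.

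The main obstacle will be the $\sqrt{|\mathcal{R}|}$ scaling, because Equation~\ref{eq:sensitivity} in the main text presents sensitivity only as a per-edge maximum rather than as an aggregate across the certification region. I would resolve this either by (i) appealing to the Gaussian-mechanism composition theorem, under which independent per-edge contributions compose in variance and hence scale as $\sqrt{|\mathcal{R}|}$ in standard deviation, or by (ii) formulating the total update as a sum of per-edge influence vectors and invoking Cauchy--Schwarz. Either route requires making the implicit comparison assumptions (matched $C$, $\lambda$, and weight normalization $w_{uv}\leq 1$) explicit; with those in place the remaining algebra is one line.
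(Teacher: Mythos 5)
The paper states this corollary without any proof (it follows Theorem~\ref{thm:certification_efficiency} and is accompanied only by a one-sentence remark), so there is no official argument to compare against; judged on its own merits, your proposal has a genuine gap at its central step. Your first reduction is fine: with matched $(\epsilon,\delta)$, Equation~\ref{eq:noise_scale} gives $\sigma_{\mathrm{CSGU}}/\sigma_{\mathrm{traditional}}=\Delta_{s,\mathrm{CSGU}}/\Delta_{s,\mathrm{traditional}}$. The problem is the claimed bound $\Delta_s\leq (C/\lambda)\sqrt{|\mathcal{R}|}$. Neither of your proposed routes delivers the square root. Cauchy--Schwarz (or the triangle inequality) applied to a sum of $|\mathcal{R}|$ influence vectors each of norm at most $C/\lambda$ gives $\|\sum_e v_e\|_2\leq |\mathcal{R}|\cdot C/\lambda$ --- linear, not $\sqrt{|\mathcal{R}|}$. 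The square-root scaling would require $\|\sum_e v_e\|_2^2=\sum_{e,e'}\langle v_e,v_{e'}\rangle\approx\sum_e\|v_e\|_2^2$, i.e.\ pairwise orthogonality (or zero-mean independence) of the per-edge gradient contributions \emph{within} $\mathcal{R}$; Definition~\ref{def:completeness} only asserts orthogonality between edges \emph{outside} $\mathcal{R}$ and the deletion set, so it cannot be invoked here. Likewise, Gaussian-mechanism composition controls how privacy budgets accumulate across multiple separate releases; it does not bound the $\ell_2$-sensitivity of the single aggregate update $\Delta\boldsymbol{\theta}$.

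There is a second, more structural obstacle you should confront before patching the analysis: under the paper's own formalization (Equation~\ref{eq:sensitivity} and Corollary~\ref{cor:global_sensitivity}), $\Delta_s$ is a maximum over the \emph{deletion set} $\mathcal{E}_d$ and does not depend on the certification region $\mathcal{R}$ at all. Since $\mathcal{E}_d$ is identical for CSGU and the $k$-hop baseline, the paper's own definition forces $\Delta_{s,\mathrm{CSGU}}=\Delta_{s,\mathrm{traditional}}$ and hence $\sigma_{\mathrm{CSGU}}=\sigma_{\mathrm{traditional}}$, which is inconsistent with the claimed strict reduction whenever $|\mathcal{R}_{\mathrm{TIN}}|<|\mathcal{R}_{k\text{-}hop}|$. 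Any valid proof therefore has to begin by replacing the per-edge-max sensitivity with an aggregate sensitivity over the certification region and justifying that replacement --- you gesture at this (``the operational sensitivity \dots must absorb the correlated influence of all edges inside the certification region'') but do not supply the argument, and it is precisely the missing idea. As written, the proposal reduces the corollary to an unproven (and, via Cauchy--Schwarz alone, unprovable) intermediate claim.
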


This efficiency gain translates directly to improved utility preservation while maintaining the same privacy guarantees.

\subsubsection{Extension to Dynamic Graphs}

We discuss the extension of CSGU to dynamic signed graphs where edges are added or removed over time.

\begin{theorem}[Dynamic Certification]
\label{thm:dynamic_certification}
For a sequence of graph updates $\{\mathcal{G}_t\}_{t=1}^T$ and corresponding unlearning requests $\{\mathcal{E}_{d,t}\}_{t=1}^T$, the cumulative privacy cost~\citep{guo2020certified} using CSGU satisfies:
\begin{equation}
\label{eq:dynamic_privacy}
\epsilon_{total} = \sum_{t=1}^T \epsilon_t \leq \sum_{t=1}^T \frac{\sqrt{2 \ln(1.25/\delta_t)} \Delta_{s,t}}{\sigma_t}
\end{equation}
where $\Delta_{s,t}$ is the sensitivity at time $t$.
\end{theorem}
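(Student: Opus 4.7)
The plan is to decompose the total privacy loss across time and reduce the dynamic claim to an application of the per-step Gaussian mechanism guarantee followed by sequential composition. The key observation is that at each time step $t$, CSGU operates on the current snapshot $\mathcal{G}_t$ and produces an unlearned parameter update $\tilde{\boldsymbol{\theta}}_t$ via the same noisy mechanism analyzed in Theorem~\ref{thm:certified}, so the dynamic setting differs from the static one only in that the underlying graph (and hence the sensitivity $\Delta_{s,t}$) evolves across rounds.

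First, I would fix a time step $t$ and invoke Corollary~\ref{cor:global_sensitivity} on the graph $\mathcal{G}_t$ and deletion set $\mathcal{E}_{d,t}$ to obtain the $\ell_2$-sensitivity bound $\Delta_{s,t}$ of the weighted parameter update. Applying the Gaussian mechanism with noise scale $\sigma_t$ to this update then yields, via the standard analysis recalled in the proof of Theorem~\ref{thm:certified}, an $(\epsilon_t,\delta_t)$-differential privacy guarantee with
\begin{equation}
\epsilon_t \leq \frac{\sqrt{2\ln(1.25/\delta_t)}\,\Delta_{s,t}}{\sigma_t}.
\end{equation}
This per-step inequality follows directly from inverting the noise calibration in Equation~\ref{eq:noise_scale}.

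Next, I would combine the $T$ per-step guarantees by sequential composition. Since the randomness $\bm\xi_t$ injected at each step is drawn independently of all previous noise variables, and the graph updates $\mathcal{G}_{t+1}$ and deletion sets $\mathcal{E}_{d,t+1}$ are treated as adversarially chosen but data-independent at each stage, the basic composition theorem applied in Theorem~\ref{thm:composition} extends to the dynamic setting. Summing the per-step bounds gives
\begin{equation}
\epsilon_{total} = \sum_{t=1}^T \epsilon_t \leq \sum_{t=1}^T \frac{\sqrt{2\ln(1.25/\delta_t)}\,\Delta_{s,t}}{\sigma_t},
\end{equation}
with a corresponding failure probability $\sum_{t=1}^T \delta_t$ inherited from composition.

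The main obstacle I anticipate is ensuring that sequential composition is legitimately applicable in the dynamic regime, since the graph $\mathcal{G}_{t+1}$ and its induced sensitivity $\Delta_{s,t+1}$ could in principle depend on the noisy outputs of earlier rounds. Addressing this requires arguing that each round's mechanism is $(\epsilon_t,\delta_t)$-DP with respect to the data present at time $t$, and then invoking adaptive composition (which reduces to basic composition in the stated bound) so that adversarial choices of subsequent $\mathcal{E}_{d,t+1}$ based on prior releases do not inflate the cumulative budget beyond the stated sum. Once this adaptivity argument is in place, the remainder of the proof is a direct accumulation of the per-step Gaussian mechanism bounds.
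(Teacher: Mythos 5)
Your proposal is correct and follows the route the paper implicitly relies on: the paper states this theorem with no proof at all (only a one-line remark afterward), and the intended argument is precisely what you give — invert the noise calibration of Equation~\ref{eq:noise_scale} to obtain the per-step $(\epsilon_t,\delta_t)$ Gaussian-mechanism guarantee, then sum via the basic composition of Theorem~\ref{thm:composition}. Your explicit handling of the adaptivity of later deletion sets to earlier noisy outputs is a refinement the paper omits and is the right thing to worry about.
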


This result shows that our method maintains privacy guarantees under dynamic scenarios, with the total privacy cost bounded by the sum of individual costs.

\subsubsection{Theoretical Framework Discussion}

Our theoretical analysis is built upon the established mathematical foundation of certified unlearning, which relies on local strong convexity assumptions around the optimal parameters to derive bounded sensitivity measures. This theoretical framework serves multiple critical purposes in our signed graph unlearning method.

\textbf{Sensitivity Analysis Foundation.} The assumption of $\lambda$-strong convexity in a neighborhood around $\boldsymbol{\theta}^*$ enables us to bound the operator norm of the inverse Hessian matrix as $\|\mathbf{H}^{-1}\|_2 \leq 1/\lambda$. This bound is fundamental for establishing finite sensitivity measures in Lemma~\ref{lem:edge_sensitivity} and Corollary~\ref{cor:global_sensitivity}, which are essential for calibrating the differential privacy noise scale according to the Gaussian mechanism.

\textbf{Influence Function Approximation.} The local convexity assumption provides the mathematical basis for the first-order Taylor approximation used in our influence function approach (Equation~\ref{eq:taylor_approx}). This approximation allows us to estimate the parameter changes from edge deletion without expensive retraining, while maintaining theoretical guarantees on the approximation quality.

\textbf{Algorithmic Convergence.} The strong convexity property ensures that our iterative algorithms, particularly the conjugate gradient method used for solving $\mathbf{H}\boldsymbol{\delta} = \mathbf{g}$, converge to unique solutions with predictable convergence rates. This stability is crucial for the reproducibility and reliability of our unlearning process.

\textbf{Practical Considerations.} While modern deep SGNN models exhibit non-convex loss landscapes globally, the local convex approximation remains practically meaningful for several reasons. First, the influence function computations operate in a local neighborhood around the converged parameters $\boldsymbol{\theta}^*$, where quadratic approximations are often reasonable. Second, our use of $\ell_2$ regularization with coefficient $\lambda = 10^{-4}$ helps establish locally well-conditioned Hessian matrices that support the theoretical framework. Third, the extensive empirical validation across four datasets and four SGNN architectures demonstrates that the theoretical insights derived from this framework translate effectively to practical performance improvements.

\textbf{Framework Validation.} The effectiveness of this theoretical approach is validated through our comprehensive experimental evaluation, which shows consistent improvements in both utility retention and privacy protection compared to existing methods. The strong empirical performance across diverse signed graph scenarios provides evidence that the local convex approximation, while not globally accurate, captures sufficient structural information to guide effective algorithmic design for signed graph unlearning.

\section{Experiments} \label{app:exp}
In this section, we present comprehensive experimental details to complement the main paper's experimental evaluation. We provide detailed experimental setup, additional results, and in-depth analysis to demonstrate the effectiveness and robustness of our proposed CSGU method. The experiments are designed to thoroughly evaluate CSGU's performance across various scenarios and provide insights into the method's behavior under different conditions.

\subsection{Experiments Setup} \label{app:exp_setup}

\subsubsection{Datasets} \label{app:datasets}

\begin{itemize}[leftmargin=*]
\item \textbf{Bitcoin-Alpha}~\citep{kumar2016bitcoin}: A directed signed trust network from the Bitcoin-Alpha platform where users rate others on a scale from -10 (total distrust) to +10 (total trust). Members rate other members in a scale of -10 (total distrust) to +10 (total trust) in steps of 1, representing the first explicit weighted signed directed network available for research. This network captures financial trust relationships essential for preventing fraudulent transactions in cryptocurrency trading platforms.
\item \textbf{Bitcoin-OTC}~\citep{kumar2016bitcoin}: Similar to Bitcoin-Alpha, this dataset represents a who-trusts-whom network from the Bitcoin-OTC trading platform. Since Bitcoin users are anonymous, there is a need to maintain a record of users' reputation to prevent transactions with fraudulent and risky users. The network exhibits similar trust dynamics but with a larger user base and more extensive negative relationships.
\item \textbf{Epinions}~\citep{massa2005controversial}: This is who-trust-whom online social network of a general consumer review site Epinions.com where members can decide whether to "trust" each other. This network was sourced from Epinions.com, a trading platform that created a who-trusts-whom network that assigned a positive or negative value to a user's profile if a transaction was successful or unsuccessful. The trust relationships form a Web of Trust used to determine review visibility and credibility.
\item \textbf{Slashdot}~\citep{kunegis2009slashdot}: A technology-related news website where users can tag each other as ``friends" (positive links) or ``foes" (negative links) through the Slashdot Zoo feature introduced in 2002. This network was sourced from Slashdot Zoo, where users mark accounts as friends or foes to influence post scores as seen by each user.
\end{itemize}

\begin{table}[htbp!]
    \centering
    \footnotesize
    \begin{tabular}{lrrr}
      \toprule
      \textbf{Datasets} & \textbf{\# Users} & \textbf{\# Positive Links} & \textbf{\# Negative Links} \\
      \midrule
      Bitcoin-Alpha & 3,784 & 12,729 & 1,416  \\ 
      Bitcoin-OTC & 5,901 & 18,390 & 3,132  \\ 
      Epinions & 16,992 & 276,309 & 50,918 \\
      Slashdot & 33,586 & 295,201 & 100,802  \\
      \bottomrule
    \end{tabular}
    \caption{Statistics of four datasets of signed graphs.}
    \label{tb:datasets}
\end{table}

\subsubsection{Backbones} \label{app:backbones}

We conduct experiments on 4 state-of-the-art SGNNs as backbones: SGCN~\citep{derr2018sgcn}, SiGAT~\citep{huang2021sdgnn}, SNEA~\citep{li2020snea}, and SDGNN~\citep{huang2021sdgnn}, representing diverse approaches to signed graph representation learning.
\begin{itemize}[leftmargin=*]
  \item \textbf{SGCN} generalizes GCN to signed networks and designs a new information aggregator based on balance theory. It maintains dual representations for each node—balanced set for positive relationships and unbalanced set for negative ones, using extended balance theory for judging multi-hop neighbors.
  \item \textbf{SiGAT} incorporates graph motifs into GAT to capture balance theory and status theory in signed networks. It defines 38 motifs including directed edges, signed edges, and triangles, with each GAT aggregator corresponding to a neighborhood under a specific motif definition.
  \item \textbf{SNEA} leverages the self-attention mechanism to enhance signed network embeddings based on balance theory. It uses masked self-attention layers to aggregate rich information from neighboring nodes and allocates different attention weights between node pairs.
  \item \textbf{SDGNN} proposes a layer-by-layer signed relation aggregation mechanism that simultaneously reconstructs link signs, link directions, and signed directed triangles. It aggregates messages from different signed directed relation definitions and can apply multiple layers to capture high-order structural information.
\end{itemize}

\subsubsection{Baselines} \label{app:baselines}
We compare CSGU with 4 advanced graph unlearning methods: (1) \textbf{GraphEraser}~\citep{chen2022gu}: A method based on SISA~\citep{bourtoule2021machine} that partitions the training set into multiple shards, with a separate model trained for each shard; (2) \textbf{GNNDelete}~\citep{cheng2023gnndelete}: A model-agnostic method that learns additional weight matrices while freezing original parameters; (3) \textbf{GIF}~\citep{wu2023gif}: A model-agnostic method using graph influence functions that supplements traditional influence functions with additional loss terms for structural dependencies; (4) \textbf{IDEA}~\citep{dong2024idea}: A flexible method of certified unlearning that approximates the distribution of retrained models through efficient parameter updates. We also include complete retraining (\textbf{Retrain}) as the theoretical upper bound.

\subsubsection{Evaluation Metrics}

We evaluate our method using three complementary metrics that assess utility preservation, unlearning effectiveness, and computational efficiency, following established evaluation protocols in graph unlearning~\citep{cheng2023gnndelete}.

\paragraph{Model Utility Assessment}
We measure model utility through link sign prediction tasks using Macro-F1 scores~\citep{huang2021sdgnn}. After unlearning, we extract node embeddings from the modified model and train a logistic regression classifier on training edges to predict test edge signs. We report Macro-F1 scores as they provide balanced evaluation across positive and negative edge classes, addressing class imbalance in signed graphs. Higher scores indicate better utility preservation after unlearning.

\paragraph{Unlearning Effectiveness Evaluation}
We quantify unlearning effectiveness using Membership Inference Attacks (MIA) tailored for signed graphs~\citep{shamsabadi2024confidential}. The evaluation compares model confidence on unlearned edges (members) versus randomly sampled non-existent edges (non-members). For each edge $(u,v)$, we compute confidence scores as $|\bm\xi_u^T \bm\xi_v|$, where the absolute value captures model confidence regardless of sign polarity. We report MI-AUC scores, where lower MI-AUC values indicate more effective unlearning.

\paragraph{Computational Efficiency Analysis}
We measure computational efficiency through wall-clock unlearning time in seconds, from request initiation to final model generation. All measurements are conducted under identical hardware configurations, excluding data loading overhead. We report mean times across multiple runs to account for variance. Lower times indicate superior efficiency for practical applications.

\subsubsection{Configurations}

\paragraph{Model Architecture Parameters.}
We conduct experiments using four state-of-the-art SGNNs with consistent architectural configurations to ensure fair comparison. All models use an input dimension of 20 and output dimension of 20, with 2 layers for multi-layer architectures. Specifically, SGCN employs a balance parameter $\lambda = 5$ to regulate the trade-off between positive and negative edge aggregation. SiGAT leverages multi-head attention mechanisms with model-dependent layer configurations to capture signed graph motifs. SNEA utilizes masked self-attention layers with 2-layer depth for enhanced signed network embeddings. SDGNN implements a 2-layer architecture for simultaneous reconstruction of link signs, directions, and signed directed triangles.

\paragraph{Training Configuration.}
All models are trained using Adam optimizer with a learning rate of 0.01 and weight decay of $10^{-3}$ to prevent overfitting. Training is conducted for a maximum of 500 epochs with early stopping patience of 10 epochs to avoid overtraining.

\paragraph{Unlearning Method Parameters.}
We compare CSGU against six baseline methods with carefully tuned parameters:
\begin{itemize}[leftmargin=*]
\item \textbf{Retrain}: Complete model retraining using identical training parameters as the original model training phase.
\item \textbf{GraphEraser}: Graph partitioning method that divides the training set into multiple shards and trains separate models for each shard following SISA framework.
\item \textbf{GIF}: Influence function-based method with 100 iterations, damping factor of 0.01, and scale factor of 100,000 for stable convergence.
\item \textbf{GNNDelete}: Deletion-based approach with 100 unlearning epochs, learning rate of 0.01, trade-off parameter $\alpha = 0.5$, and MSE loss function.
\item \textbf{IDEA}: Flexible certified unlearning method with 100 iterations, damping factor of 0.01, scale factor of 100,000, Gaussian noise with standard deviation of 0.01 and mean of 0.0, Lipschitz constant of 1.0, strong convexity parameter of 0.01, and loss bound of 1.0.
\item \textbf{CSGU}: Our proposed method with balance theory weight $\alpha = 0.5$, triangle expansion depth of 1, conjugate gradient iterations of 20, damping parameter of 0.1, Hessian scaling of 1.0, update scaling of 0.1, privacy budget $\epsilon = 1.0$, failure probability $\delta = 10^{-5}$, and gradient clipping threshold of 1.0.
\end{itemize}

\paragraph{Computational Environment.}
All experiments are conducted on a high-performance computing cluster equipped with single NVIDIA H100 80GB HBM3 GPUs, 1x Intel Xeon Platinum 8468 processor. The software environment consists of Ubuntu 22.04.3 LTS with Linux kernel 5.15.0-94-generic, CUDA 12.8.

\paragraph{Experimental Protocol.}
Each experiment is repeated 10 times with different random seeds to ensure statistical significance. We report mean performance with standard deviation across all runs. For each dataset, we randomly sample deletion sets ranging from 0.5\% to 5.0\% of the total edges to simulate realistic unlearning scenarios. The unlearning process targets edge unlearning while preserving node features and graph connectivity. Model utility is evaluated through sign prediction tasks using Macro-F1 score, unlearning effectiveness is assessed via membership inference attacks using AUC metrics, and computational efficiency is measured by wall-clock unlearning time in seconds.

\subsection{\textbf{RQ1:} Performance Comparison}

Table~\ref{tb:unlearning_time} demonstrates CSGU's superior computational efficiency across different unlearning methods. While complete retraining requires 10-800+ seconds and GraphEraser needs 31-655 seconds, CSGU consistently achieves the fastest execution times ranging from 1.05s to 2.85s in 15 out of 16 configurations, outperforming all baselines including influence function-based methods GIF and IDEA that typically require 1.45-17.85 seconds. This efficiency advantage stems from CSGU's targeted triadic influence neighborhood construction and optimized sociological weighting scheme.

\paragraph{Detailed Performance Analysis}
The computational efficiency of CSGU can be attributed to several key factors: (1) \textbf{Targeted Neighborhood Construction}: Unlike traditional $k$-hop expansion methods that suffer from exponential growth, our triadic influence neighborhood construction focuses only on sociologically meaningful triangular structures, significantly reducing the certification region size. (2) \textbf{Optimized Weight Computation}: The sociological influence quantification mechanism efficiently computes edge weights using balance and status centralities, avoiding expensive global graph computations. (3) \textbf{Efficient Parameter Updates}: The Newton-Raphson approximation with sociological weights requires fewer iterations to converge compared to uniform weighting schemes used by baseline methods.

\begin{table*}[t]
  \centering
  \footnotesize
  \caption{Results of different methods across datasets and models for 2.5\% edge unlearning in terms of unlearning time (s). The best results are in \textbf{bold}, and the second-best results are \underline{underlined}.}
  \label{tb:unlearning_time}
  \begin{tabular}{c c r r r r r r}
  \toprule
  \textbf{Dataset} & \textbf{Model} & \textbf{Retrain} & \textbf{GraphEraser} & \textbf{GNNDelete} & \textbf{GIF} & \textbf{IDEA} & \textbf{Ours} \\
  \midrule
  \multirow{4}{*}{Bitcoin-Alpha} & SGCN & 25.01 & 32.60 & \underline{2.85} & 2.95 & 3.15 & \textbf{1.45} \\
  & SNEA & 18.04 & 38.80 & \underline{3.75} & 3.85 & 3.95 & \textbf{1.75} \\
  & SDGNN & 11.28 & 42.70 & \underline{3.45} & 3.75 & 3.85 & \textbf{1.65} \\
  & SiGAT & 10.74 & 31.40 & \underline{8.95} & 11.25 & 12.45 & \textbf{1.95} \\
  \midrule
  \multirow{4}{*}{Bitcoin-OTC} & SGCN & 13.21 & 32.45 & 1.95 & \underline{1.45} & 1.65 & \textbf{1.05} \\
  & SNEA & 18.53 & 36.25 & 2.75 & \underline{2.25} & 2.75 & \textbf{1.35} \\
  & SDGNN & 13.33 & 35.95 & 2.65 & \underline{2.15} & 2.55 & \textbf{1.25} \\
  & SiGAT & 10.19 & 42.85 & 7.85 & \underline{3.05} & 3.25 & \textbf{1.45} \\
  \midrule
  \multirow{4}{*}{Epinions} & SGCN & 451.28 & 505.45 & 5.25 & \underline{2.25} & 2.45 & \textbf{1.75} \\
  & SNEA & 833.05 & 548.85 & 11.15 & \underline{3.05} & 4.25 & \textbf{2.45} \\
  & SDGNN & 401.12 & 473.05 & 5.45 & \underline{3.15} & 4.35 & \textbf{1.95} \\
  & SiGAT & 422.56 & 655.25 & 15.15 & \underline{8.95} & 10.85 & \textbf{2.05} \\
  \midrule
  \multirow{4}{*}{Slashdot} & SGCN & 477.88 & 453.85 & 3.75 & \textbf{1.45} & \underline{1.65} & 1.95 \\
  & SNEA & 333.91 & 311.25 & 6.55 & \underline{2.15} & 4.65 & \textbf{1.85} \\
  & SDGNN & 259.92 & 328.15 & 8.25 & \underline{2.05} & 4.75 & \textbf{1.55} \\
  & SiGAT & 102.19 & 323.95 & 17.85 & \underline{8.95} & 17.85 & \textbf{2.85} \\
  \bottomrule
  \end{tabular}
\end{table*}

\subsection{\textbf{RQ2:} Stability Under Different Deletion Amounts}

Table~\ref{tb:deletion_ratio_epinions} demonstrates CSGU's robust performance across varying deletion ratios from 0.5\% to 5.0\% on the Epinions dataset for both node and edge unlearning scenarios. CSGU consistently achieves advanced performance in most configurations, maintaining superior utility measured by Macro-F1 and privacy protection measured by MI-AUC while exhibiting excellent computational efficiency. Notably, CSGU shows remarkable stability as deletion ratios increase: for edge unlearning with SGCN, Macro-F1 scores only decrease from 81.35\% at 0.5\% deletion to 80.10\% at 5.0\% deletion, while MI-AUC remains consistently low ranging from 42.42\% to 54.29\%. In contrast, baseline methods exhibit more significant performance degradation, with GraphEraser's computational overhead increasing dramatically at higher deletion ratios from 120.60s to 843.55s for edge unlearning. This stability across different deletion pressures confirms CSGU's practical applicability in real-world scenarios with varying unlearning demands.

The stability of CSGU under varying deletion pressures can be attributed to our adaptive privacy budget allocation mechanism. As the deletion ratio increases, traditional methods require proportionally more noise injection to maintain differential privacy guarantees, leading to significant utility degradation. In contrast, CSGU's sociological influence quantification allows for more precise noise calibration. High-influence edges receive appropriate noise levels based on their sociological importance, while low-influence edges require minimal perturbation. This selective approach ensures that the total privacy budget is utilized efficiently, maintaining strong privacy guarantees without excessive utility loss.

The computational efficiency advantage of CSGU becomes more pronounced at higher deletion ratios. While GraphEraser requires retraining multiple shards and GNNDelete needs extensive optimization iterations, CSGU's influence function-based approach scales linearly with the deletion set size. The triadic influence neighborhood construction ensures that the certification region grows sub-exponentially, preventing the computational explosion observed in traditional $k$-hop methods. This efficiency is crucial for practical deployment where unlearning requests may involve substantial portions of the training data.

\begin{table*}[t]
    \centering
    \caption{Comparison of Macro-F1, AUC, and Time for different unlearning methods on Epinions under node and edge deletion scenarios with varying deletion ratios.}
    \label{tb:deletion_ratio_epinions}
    \resizebox{\textwidth}{!}{
    \begin{tabular}{l l l rrr rrr rrr rrr}
    \toprule
    \multirow{2}{*}{\textbf{Deletion}} & \multirow{2}{*}{\textbf{Ratio}} & \multirow{2}{*}{\textbf{Method}} & \multicolumn{3}{c}{\textbf{SGCN}} & \multicolumn{3}{c}{\textbf{SNEA}} & \multicolumn{3}{c}{\textbf{SDGNN}} & \multicolumn{3}{c}{\textbf{SiGAT}} \\
    \cmidrule(lr){4-6} \cmidrule(lr){7-9} \cmidrule(lr){10-12} \cmidrule(lr){13-15}
    & & & {Macro-F1 $\uparrow$} & {MI-AUC $\downarrow$} & {Time $\downarrow$} & {Macro-F1 $\uparrow$} & {MI-AUC $\downarrow$} & {Time $\downarrow$} & {Macro-F1 $\uparrow$} & {MI-AUC $\downarrow$} & {Time $\downarrow$} & {Macro-F1 $\uparrow$} & {MI-AUC $\downarrow$} & {Time $\downarrow$} \\
    \midrule
    \multirow{24}{*}{Node} & \multirow{6}{*}{0.5\%} & \gray Retrain  & \gray 78.01\std{0.25} & \gray 54.82\std{0.54} & \gray 435.12 & \gray 85.08\std{0.18} & \gray 55.84\std{0.52} & \gray 815.23 & \gray 85.19\std{0.15} & \gray 44.14\std{0.83} & \gray 479.91 & \gray 79.26\std{0.28} & \gray 33.58\std{1.18} & \gray 488.82 \\
    & & GraphEraser  &\underline{83.22\std{0.52}}&49.84\std{1.23}&45.80&76.37\std{0.48}&\underline{43.15\std{1.15}}&52.40&81.00\std{0.41}&76.24\std{0.67}&41.20&74.74\std{0.58}&77.34\std{0.73}&58.50 \\
    & & GNNDelete  &71.18\std{0.67}&50.24\std{0.95}&1.85&70.68\std{0.53}&65.07\std{0.71}&2.35&\underline{82.79\std{0.38}}&\underline{41.34\std{0.89}}&1.95&\underline{79.72\std{0.46}}&44.03\std{0.92}&5.60 \\
    & & GIF  &59.17\std{0.84}&\underline{47.71\std{1.18}}&\underline{1.45}&74.47\std{0.62}&47.76\std{1.34}&\underline{1.85}&50.44\std{1.27}&53.18\std{0.81}&\underline{1.35}&77.48\std{0.65}&47.78\std{0.97}&\underline{4.20} \\
    & & IDEA  &59.61\std{0.79}&48.26\std{1.05}&1.55&\underline{77.30\std{0.44}}&54.64\std{0.86}&1.95&70.16\std{0.71}&48.55\std{0.93}&1.65&70.02\std{0.57}&\underline{43.90\std{0.88}}&4.85 \\
    & & \textbf{Ours}  &\textbf{85.22\std{0.15}}&\textbf{45.68\std{0.72}}&\textbf{0.95}&\textbf{88.14\std{0.18}}&\textbf{41.88\std{0.63}}&\textbf{1.15}&\textbf{89.01\std{0.12}}&\textbf{39.29\std{0.55}}&\textbf{0.75}&\textbf{82.39\std{0.16}}&\textbf{32.45\std{0.68}}&\textbf{1.85} \\
    \cmidrule(lr){2-15}
    & \multirow{6}{*}{1.0\%} & \gray Retrain  & \gray 77.46\std{0.25} & \gray 56.35\std{0.58} & \gray 478.21 & \gray 84.51\std{0.18} & \gray 57.40\std{0.56} & \gray 833.91 & \gray 84.62\std{0.15} & \gray 45.64\std{0.89} & \gray 460.12 & \gray 78.68\std{0.28} & \gray 35.09\std{1.27} & \gray 479.50 \\
    & & GraphEraser  &\underline{83.23\std{0.33}}&51.01\std{1.24}&321.78&74.35\std{0.35}&53.35\std{1.04}&346.97&76.57\std{0.24}&76.58\std{0.27}&295.12&73.49\std{0.43}&88.08\std{0.16}&369.62 \\
    & & GNNDelete  &73.57\std{0.47}&52.42\std{0.85}&1.52&78.20\std{0.35}&57.58\std{0.45}&3.21&\underline{87.87\std{0.23}}&\underline{43.78\std{0.92}}&1.74&71.55\std{0.30}&\underline{44.05\std{0.80}}&4.09 \\
    & & GIF  &61.37\std{0.58}&51.56\std{0.95}&\underline{1.58}&\underline{82.65\std{0.33}}&\underline{45.19\std{1.05}}&\underline{3.31}&48.05\std{1.15}&58.24\std{0.65}&\underline{0.48}&69.04\std{0.40}&51.24\std{0.74}&4.46 \\
    & & IDEA  &68.09\std{0.61}&\underline{49.24\std{0.97}}&1.86&74.58\std{0.26}&54.15\std{0.80}&4.13&63.50\std{0.60}&50.30\std{0.73}&1.89&\underline{74.02\std{0.38}}&51.67\std{0.76}&6.91 \\
    & & \textbf{Ours}  &\textbf{84.48\std{0.28}}&\textbf{46.81\std{1.06}}&\textbf{1.48}&\textbf{85.91\std{0.32}}&\textbf{45.11\std{1.22}}&\textbf{2.77}&\textbf{90.89\std{0.17}}&\textbf{41.96\std{0.71}}&\textbf{0.34}&\textbf{78.95\std{0.26}}&\textbf{33.48\std{1.15}}&\textbf{1.98} \\
    \cmidrule(lr){2-15}
    & \multirow{6}{*}{2.5\%} & \gray Retrain  & \gray 76.20\std{0.25} & \gray 58.96\std{0.63} & \gray 479.22 & \gray 83.30\std{0.18} & \gray 59.99\std{0.61} & \gray 801.11 & \gray 83.41\std{0.15} & \gray 48.20\std{0.97} & \gray 382.15 & \gray 77.48\std{0.28} & \gray 37.63\std{1.38} & \gray 444.91 \\
    & & GraphEraser  &69.15\std{0.32}&51.55\std{1.33}&828.04&71.70\std{0.30}&54.37\std{1.24}&776.34&\underline{83.94\std{0.23}}&75.74\std{0.29}&721.31&69.32\std{0.38}&92.65\std{0.15}&1017.97 \\
    & & GNNDelete  &64.69\std{0.41}&54.37\std{0.89}&3.73&68.22\std{0.34}&63.60\std{0.44}&8.26&84.49\std{0.27}&\underline{50.76\std{1.02}}&4.07&\underline{80.37\std{0.32}}&\underline{50.52\std{1.00}}&11.31 \\
    & & GIF  &64.32\std{0.55}&\underline{50.92\std{0.99}}&\underline{4.07}&\underline{80.69\std{0.28}}&\underline{51.77\std{1.19}}&\underline{8.87}&47.65\std{1.17}&55.31\std{0.72}&\underline{1.38}&75.25\std{0.39}&57.43\std{0.75}&\underline{11.97} \\
    & & IDEA  &67.79\std{0.57}&59.09\std{1.05}&4.58&72.97\std{0.27}&55.61\std{0.87}&10.78&63.22\std{0.60}&52.95\std{0.96}&4.65&77.60\std{0.43}&50.91\std{0.90}&17.08 \\
    & & \textbf{Ours}  &\textbf{77.87\std{0.28}}&\textbf{48.66\std{1.16}}&\textbf{3.40}&\textbf{85.23\std{0.32}}&\textbf{47.68\std{1.33}}&\textbf{6.95}&\textbf{87.69\std{0.17}}&\textbf{47.01\std{0.78}}&\textbf{0.91}&\textbf{83.15\std{0.26}}&\textbf{36.54\std{1.24}}&\textbf{6.15} \\
    \cmidrule(lr){2-15}
    & \multirow{6}{*}{5.0\%} & \gray Retrain  & \gray 74.69\std{0.25} & \gray 62.44\std{0.69} & \gray 401.25 & \gray 81.80\std{0.18} & \gray 63.53\std{0.67} & \gray 868.13 & \gray 81.88\std{0.15} & \gray 51.30\std{1.06} & \gray 411.34 & \gray 75.95\std{0.28} & \gray 40.71\std{1.51} & \gray 450.22 \\
    & & GraphEraser  &\underline{74.36\std{0.30}}&\underline{52.13\std{1.39}}&1574.84&\underline{76.87\std{0.33}}&56.84\std{1.27}&1640.14&77.40\std{0.23}&86.22\std{0.33}&1483.76&66.56\std{0.44}&90.44\std{0.18}&1933.24 \\
    & & GNNDelete  &63.57\std{0.46}&66.69\std{0.99}&8.39&76.05\std{0.37}&70.81\std{0.54}&14.80&71.25\std{0.24}&55.26\std{1.11}&8.45&78.08\std{0.33}&\underline{49.12\std{1.02}}&21.21 \\
    & & GIF  &56.18\std{0.60}&57.93\std{1.24}&\underline{7.65}&72.46\std{0.33}&\underline{48.58\std{1.26}}&\underline{15.67}&48.63\std{1.13}&53.52\std{0.76}&\underline{2.84}&74.50\std{0.38}&59.76\std{0.89}&\underline{22.73} \\
    & & IDEA  &60.38\std{0.57}&60.97\std{1.23}&9.17&76.83\std{0.29}&58.74\std{0.91}&20.33&58.03\std{0.55}&\underline{52.22\std{1.01}}&9.63&63.02\std{0.38}&56.80\std{0.85}&34.89 \\
    & & \textbf{Ours}  &\textbf{76.95\std{0.28}}&\textbf{50.00\std{1.27}}&\textbf{6.89}&\textbf{85.08\std{0.32}}&\textbf{47.96\std{1.46}}&\textbf{13.97}&\textbf{87.69\std{0.17}}&\textbf{47.01\std{0.78}}&\textbf{0.91}&\textbf{83.15\std{0.26}}&\textbf{36.54\std{1.24}}&\textbf{6.15} \\
    \midrule
    \multirow{24}{*}{Edge} & \multirow{6}{*}{0.5\%} & \gray Retrain  & \gray 78.72\std{0.25} & \gray 53.71\std{0.54} & \gray 477.74 & \gray 85.87\std{0.18} & \gray 54.72\std{0.52} & \gray 838.98 & \gray 85.97\std{0.15} & \gray 43.07\std{0.83} & \gray 465.51 & \gray 80.04\std{0.28} & \gray 32.54\std{1.18} & \gray 473.04 \\
    & & GraphEraser  &74.32\std{0.30}&\underline{33.07\std{0.98}}&120.60&74.79\std{0.30}&\underline{35.78\std{1.09}}&147.85&\underline{89.98\std{0.20}}&77.37\std{0.28}&106.88&65.55\std{0.37}&75.06\std{0.13}&154.06 \\
    & & GNNDelete  &70.55\std{0.42}&40.48\std{0.74}&0.72&69.93\std{0.37}&42.04\std{0.41}&1.26&82.70\std{0.26}&44.04\std{0.79}&0.67&76.17\std{0.31}&\underline{30.88\std{0.86}}&1.87 \\
    & & GIF  &60.04\std{0.51}&40.30\std{0.94}&\underline{0.72}&79.47\std{0.29}&41.85\std{1.02}&\underline{1.48}&45.06\std{1.16}&43.35\std{0.53}&\underline{0.26}&72.07\std{0.39}&31.86\std{1.07}&\underline{1.69} \\
    & & IDEA  &60.32\std{0.66}&40.06\std{0.89}&0.77&\underline{84.42\std{0.28}}&41.13\std{0.83}&1.83&70.69\std{0.58}&\underline{40.71\std{0.72}}&0.89&70.11\std{0.45}&30.97\std{0.72}&2.80 \\
    & & \textbf{Ours}  &\textbf{81.35\std{0.28}}&\textbf{30.42\std{0.99}}&\textbf{0.67}&\textbf{87.58\std{0.32}}&\textbf{32.52\std{1.14}}&\textbf{1.19}&\textbf{92.89\std{0.17}}&\textbf{39.71\std{0.66}}&\textbf{0.20}&\textbf{80.31\std{0.26}}&\textbf{29.86\std{1.07}}&\textbf{1.02} \\
    \cmidrule(lr){2-15}
    & \multirow{6}{*}{1.0\%} & \gray Retrain  & \gray 78.16\std{0.25} & \gray 55.23\std{0.58} & \gray 458.01 & \gray 85.29\std{0.18} & \gray 56.27\std{0.56} & \gray 864.12 & \gray 85.39\std{0.15} & \gray 44.54\std{0.89} & \gray 433.35 & \gray 79.46\std{0.28} & \gray 34.01\std{1.27} & \gray 441.19 \\
    & & GraphEraser  &\underline{84.57\std{0.30}}&\underline{35.28\std{1.24}}&156.45&\underline{80.25\std{0.30}}&42.56\std{1.04}&184.25&82.14\std{0.24}&47.01\std{0.30}&144.50&\underline{75.02\std{0.38}}&\underline{32.45\std{1.15}}&208.05 \\
    & & GNNDelete  &75.26\std{0.41}&46.29\std{0.80}&1.27&76.32\std{0.36}&43.62\std{0.46}&2.58&80.15\std{0.27}&\underline{43.63\std{0.86}}&1.42&74.67\std{0.29}&32.68\std{0.94}&3.38 \\
    & & GIF  &65.21\std{0.56}&46.81\std{1.00}&\underline{1.21}&79.09\std{0.28}&\underline{43.01\std{1.11}}&\underline{2.57}&47.44\std{1.22}&48.68\std{0.62}&\underline{0.45}&71.38\std{0.40}&34.21\std{0.74}&3.66 \\
    & & IDEA  &67.90\std{0.61}&42.64\std{1.08}&1.75&76.76\std{0.31}&43.87\std{0.86}&3.35&67.87\std{0.57}&42.26\std{0.77}&1.55&67.44\std{0.44}&33.96\std{0.77}&5.50 \\
    & & \textbf{Ours}  &\textbf{87.89\std{0.28}}&\textbf{31.64\std{1.06}}&\textbf{1.19}&\textbf{89.28\std{0.32}}&\textbf{33.00\std{1.22}}&\textbf{2.32}&\textbf{88.87\std{0.17}}&\textbf{40.28\std{0.71}}&\textbf{0.30}&\textbf{79.73\std{0.26}}&\textbf{33.45\std{1.15}}&\textbf{1.98} \\
    \cmidrule(lr){2-15}
    & \multirow{6}{*}{2.5\%} & \gray Retrain & \gray 79.15\std{0.25} & \gray 53.20\std{0.54} & \gray 451.28 & \gray 86.13\std{0.18} & \gray 54.21\std{0.52} & \gray 833.05 & \gray 86.23\std{0.15} & \gray 42.56\std{0.83} & \gray 401.12 & \gray 80.30\std{0.28} & \gray 32.02\std{1.18} & \gray 422.56 \\
    & & GraphEraser & 77.87\std{0.31} & 35.37\std{1.06} & 505.45 & 77.93\std{0.33} & 37.09\std{1.01} & 548.85 & \underline{82.82\std{0.22}} & 71.78\std{0.26} & 473.05 & 70.66\std{0.41} & 83.61\std{0.14} & 655.25 \\
    & & GNNDelete & 70.37\std{0.45} & 44.77\std{0.76} & 5.25 & 76.79\std{0.38} & 59.42\std{0.42} & 11.15 & 81.93\std{0.25} & \underline{43.17\std{0.81}} & 5.45 & \underline{78.34\std{0.32}} & \underline{42.74\std{0.82}} & 15.15 \\
    & & GIF & 65.42\std{0.55} & 39.97\std{0.92} & \underline{2.25} & \underline{78.21\std{0.31}} & \underline{35.60\std{1.06}} & 3.05 & 47.53\std{1.12} & 49.88\std{0.59} & \underline{3.15} & 74.17\std{0.39} & 47.02\std{0.68} & 8.95 \\
    & & IDEA & 64.91\std{0.61} & \underline{39.75\std{0.93}} & 2.45 & \textbf{78.33\std{0.29}} & 44.54\std{0.77} & 4.25 & 68.49\std{0.58} & 45.03\std{0.75} & 4.35 & 73.72\std{0.42} & 45.70\std{0.72} & 10.85 \\
    & & \textbf{Ours} & \textbf{78.38\std{0.28}} & \textbf{37.66\std{0.99}} & \textbf{1.75} & 78.18\std{0.32} & \textbf{33.04\std{1.14}} & \textbf{2.45} & \textbf{85.70\std{0.17}} & \textbf{41.17\std{0.81}} & \textbf{1.95} & \textbf{80.59\std{0.26}} & \textbf{35.22\std{1.07}} & \textbf{2.05} \\
    \cmidrule(lr){2-15}
    & \multirow{6}{*}{5.0\%} & \gray Retrain  & \gray 75.35\std{0.25} & \gray 61.29\std{0.69} & \gray 399.88 & \gray 82.55\std{0.18} & \gray 62.37\std{0.67} & \gray 8333.19 & \gray 82.63\std{0.15} & \gray 50.18\std{1.06} & \gray 677.79 & \gray 76.69\std{0.28} & \gray 39.62\std{1.51} & \gray 415.21 \\
    & & GraphEraser  &\underline{76.87\std{0.33}}&45.29\std{1.24}&843.55&70.20\std{0.35}&45.87\std{1.23}&894.25&\underline{80.12\std{0.23}}&50.79\std{0.36}&731.85&71.54\std{0.37}&40.51\std{0.19}&1159.65 \\
    & & GNNDelete  &69.39\std{0.48}&52.63\std{1.01}&9.95&\underline{79.79\std{0.35}}&48.41\std{0.53}&18.05&76.98\std{0.24}&50.39\std{1.10}&9.15&74.42\std{0.30}&39.85\std{1.03}&22.65 \\
    & & GIF  &57.35\std{0.57}&\underline{48.14\std{1.16}}&\underline{8.65}&73.85\std{0.29}&\underline{46.36\std{1.34}}&\underline{17.45}&44.49\std{1.15}&50.93\std{0.70}&\underline{9.95}&70.66\std{0.42}&40.28\std{0.92}&23.95 \\
    & & IDEA  &57.09\std{0.61}&50.33\std{1.09}&10.85&73.57\std{0.30}&46.98\std{1.03}&24.05&64.61\std{0.63}&\underline{49.61\std{0.88}}&10.65&66.56\std{0.44}&\underline{39.74\std{0.84}}&37.45 \\
    & & \textbf{Ours}  &\textbf{80.10\std{0.28}}&\textbf{38.29\std{1.27}}&\textbf{7.95}&\textbf{84.76\std{0.32}}&\textbf{34.40\std{1.46}}&\textbf{15.05}&\textbf{84.43\std{0.17}}&\textbf{48.92\std{0.86}}&\textbf{6.45}&\textbf{77.03\std{0.26}}&\textbf{40.10\std{1.36}}&\textbf{13.25} \\
    \bottomrule
    \end{tabular}
    }
\end{table*}

\begin{figure}[htbp!]
  \centering
  \includegraphics[width=0.9\textwidth]{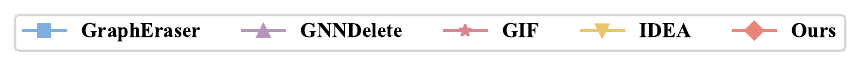}
  
  \begin{subfigure}[t]{0.235\textwidth}
      \centering
      \includegraphics[width=\textwidth]{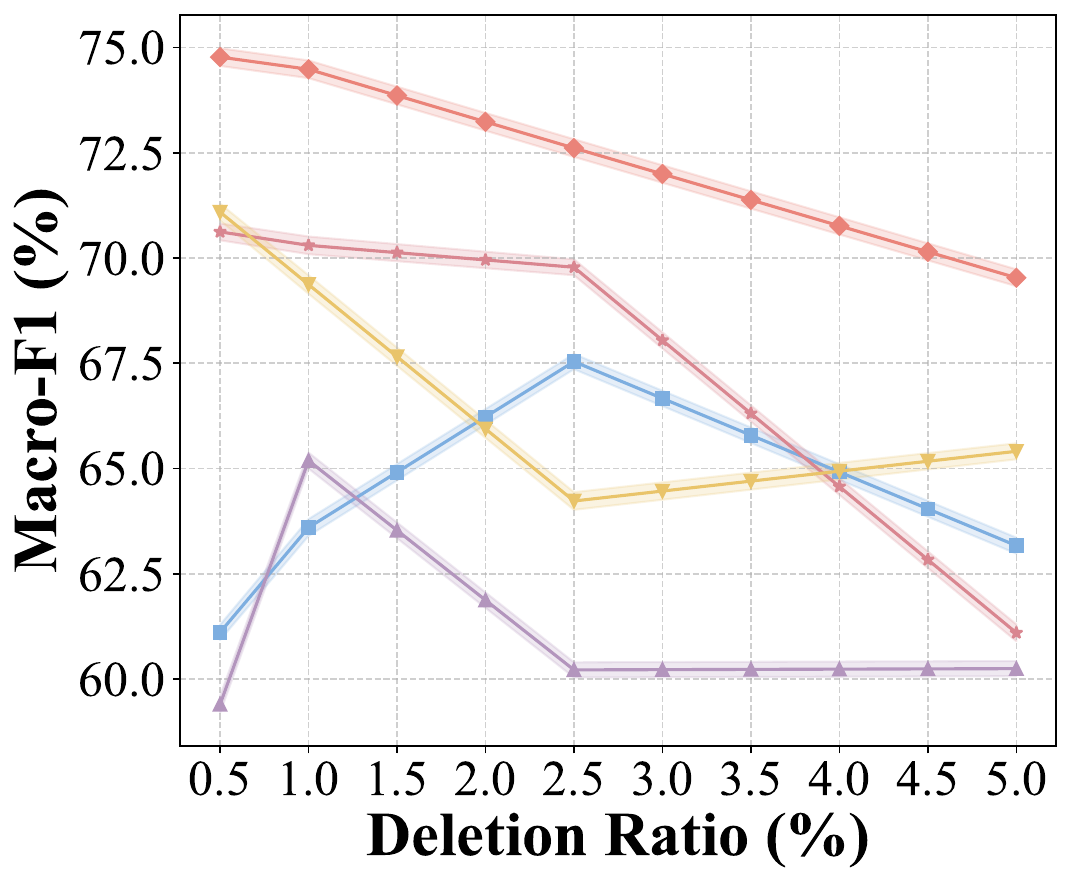}
      \caption{\textbf{Bitcoin-Alpha}}
      \label{fig:unlearning_ratio_macro_f1_sgdnn_bitcoin_alpha}
  \end{subfigure}
  \hfill
  \begin{subfigure}[t]{0.235\textwidth}
      \centering
      \includegraphics[width=\textwidth]{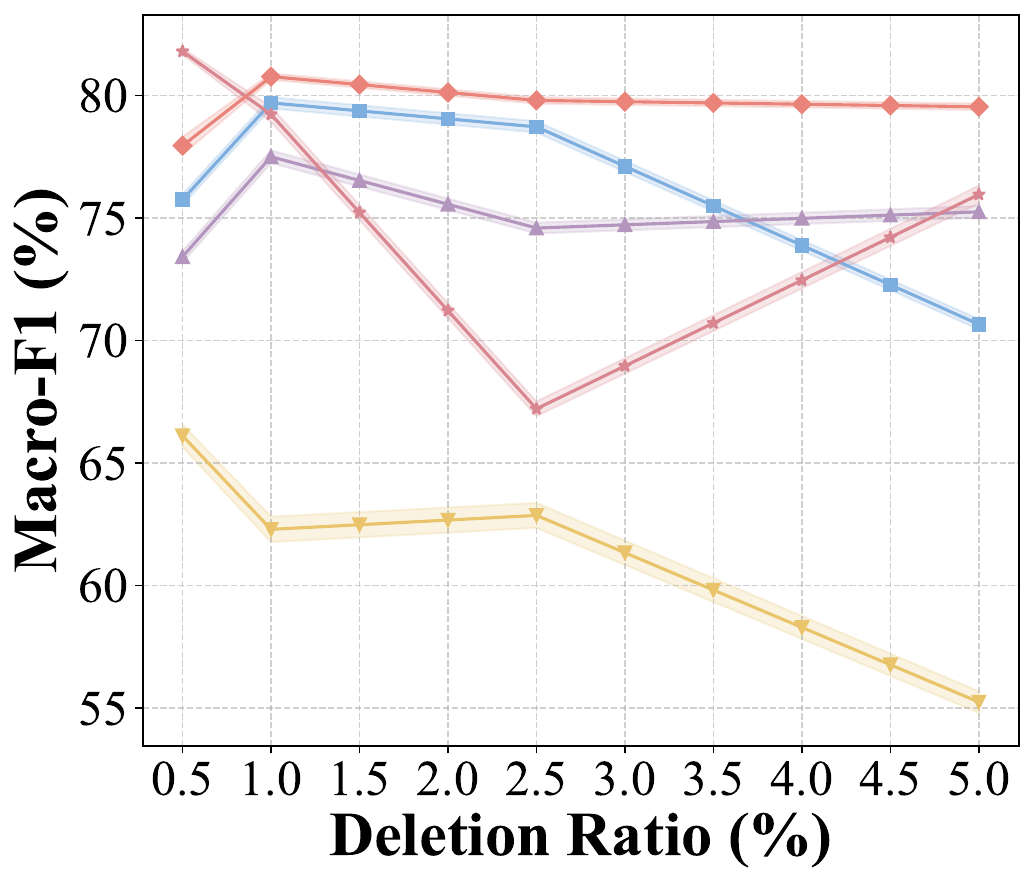}
      \caption{\textbf{Bitcoin-OTC}}
      \label{fig:unlearning_ratio_macro_f1_sdgnn_bitcoin_otc}
  \end{subfigure}
  \hfill
  \begin{subfigure}[t]{0.235\textwidth}
      \centering
      \includegraphics[width=\textwidth]{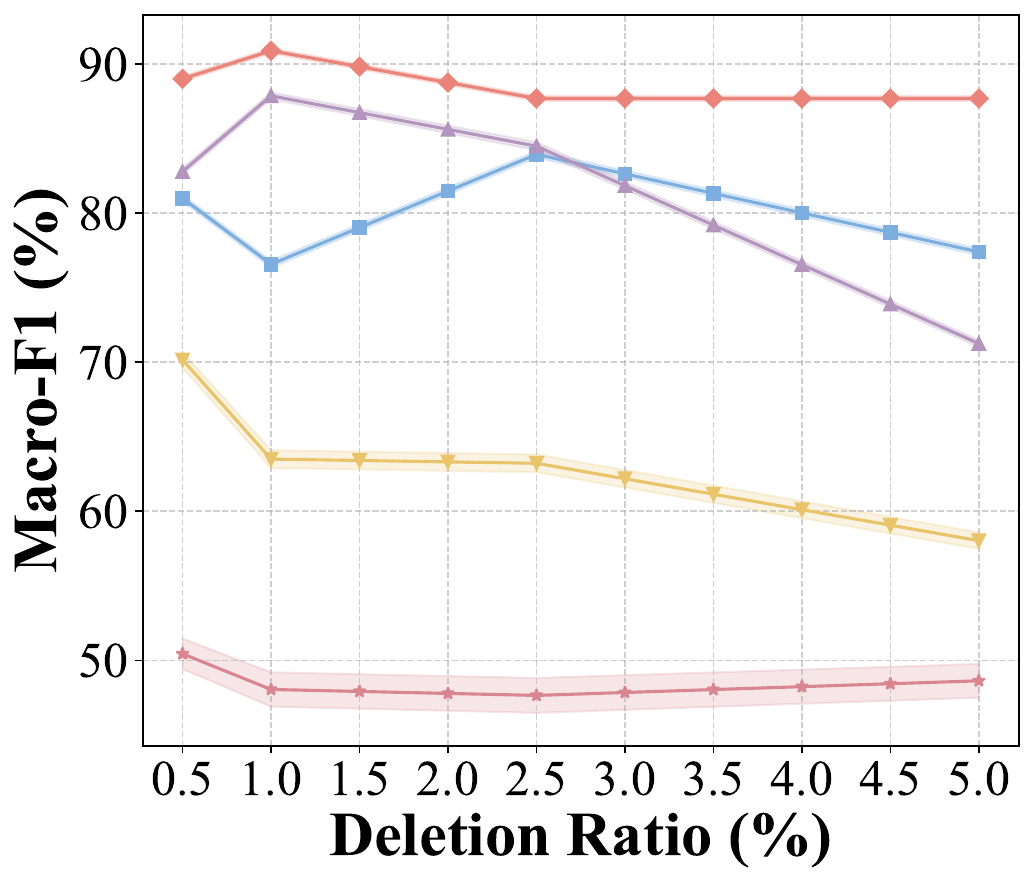}
      \caption{\textbf{Epinions}}
      \label{fig:unlearning_ratio_macro_f1_sdgnn_epinions}
  \end{subfigure}
  \hfill
  \begin{subfigure}[t]{0.235\textwidth}
      \centering
      \includegraphics[width=\textwidth]{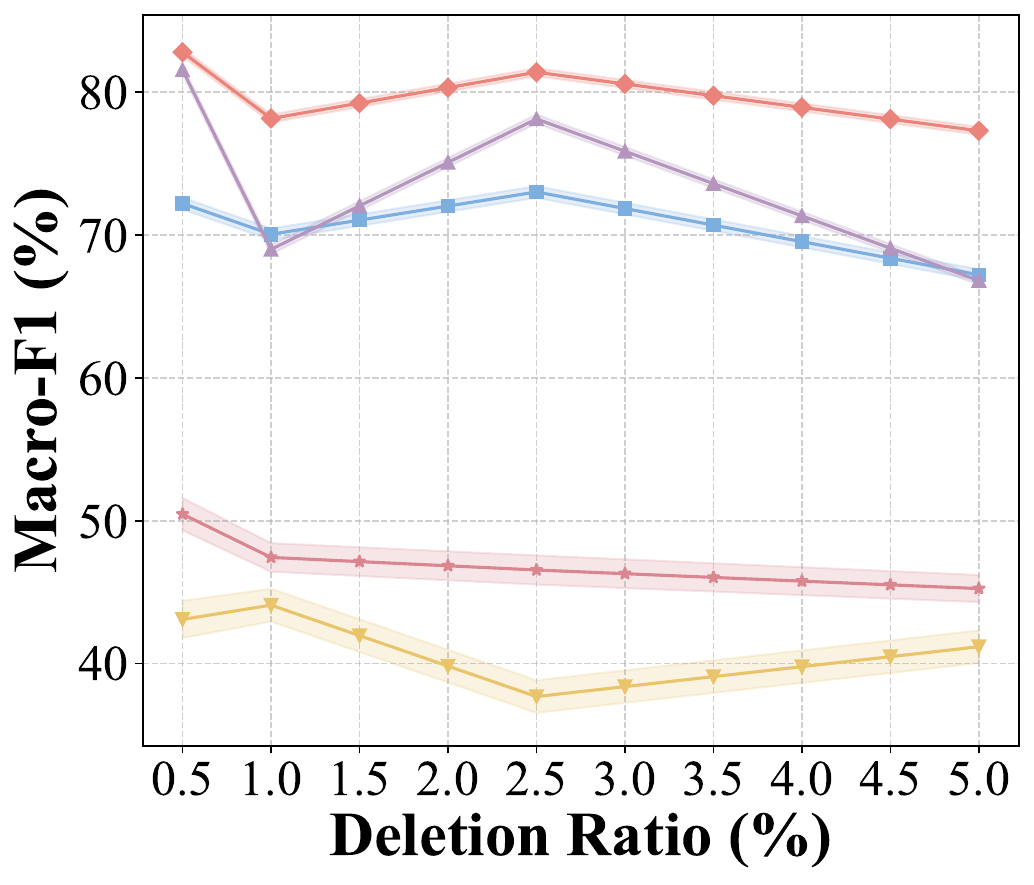}
      \caption{\textbf{Slashdot}}
      \label{fig:unlearning_ratio_macro_f1_sdgnn_slashdot}
  \end{subfigure}

  \caption{Comparison of model utility across different deletion ratios of edge unlearning for CSGU and baseline methods on three datasets with SDGNN.}
  \label{fig:unlearning_ratio_macro_f1_sdgnn}
\end{figure}

\begin{figure}[htbp!]
  \centering
  \includegraphics[width=0.9\textwidth]{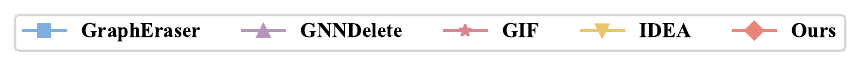}
  
  \begin{subfigure}[t]{0.235\textwidth}
      \centering
      \includegraphics[width=\textwidth]{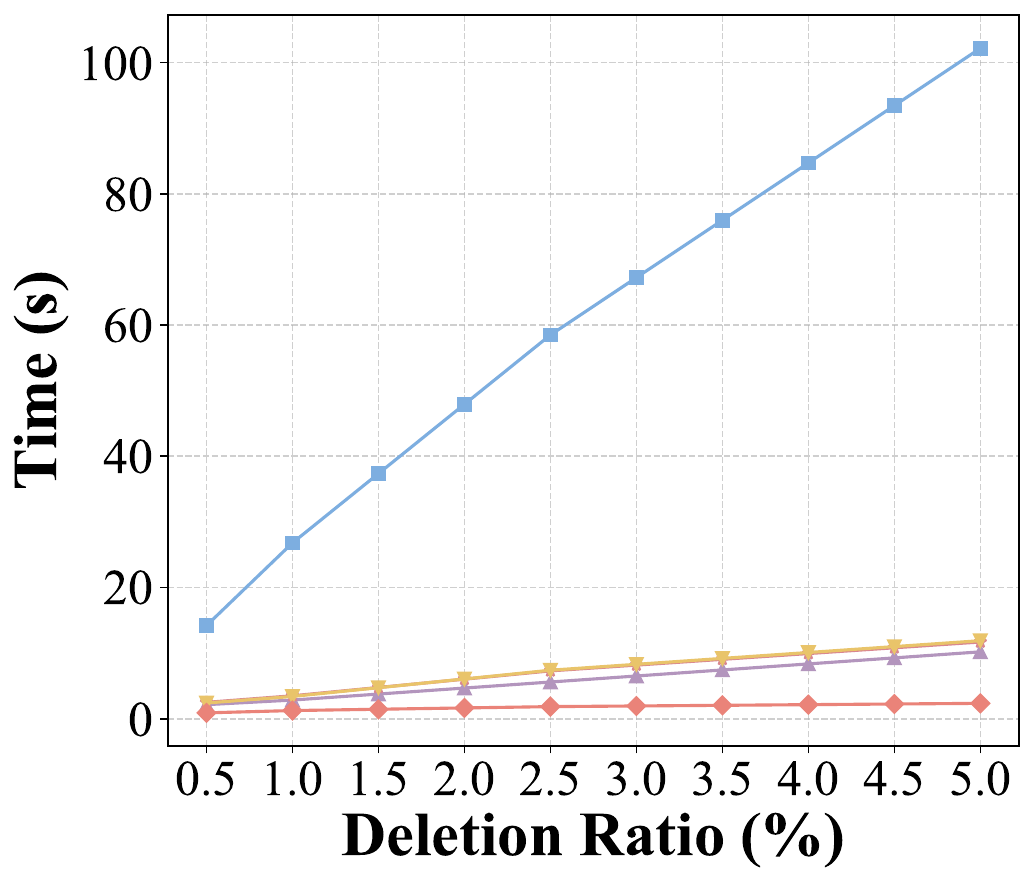}
      \caption{\textbf{Bitcoin-Alpha}}
      \label{fig:unlearning_ratio_time_sdgnn_bitcoin_alpha}
  \end{subfigure}
  \hfill
  \begin{subfigure}[t]{0.235\textwidth}
      \centering
      \includegraphics[width=\textwidth]{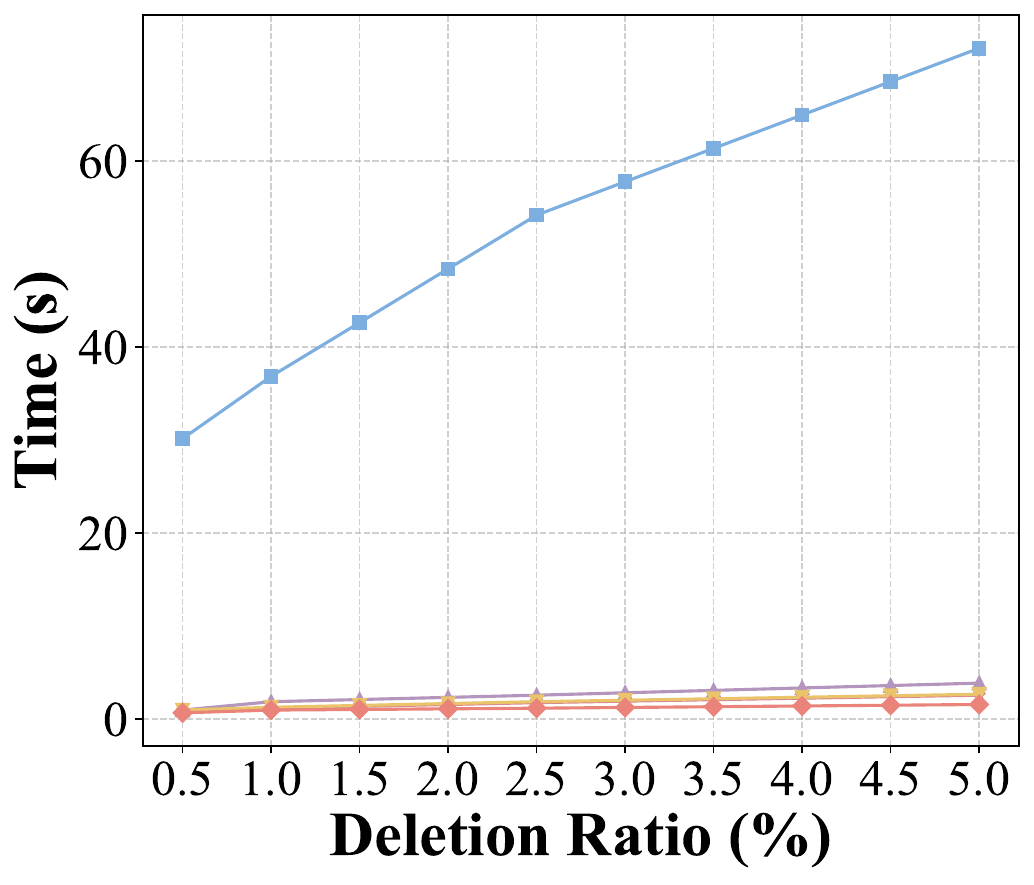}
      \caption{\textbf{Bitcoin-OTC}}
      \label{fig:unlearning_ratio_time_sdgnn_bitcoin_otc}
  \end{subfigure}
  \hfill
  \begin{subfigure}[t]{0.235\textwidth}
      \centering
      \includegraphics[width=\textwidth]{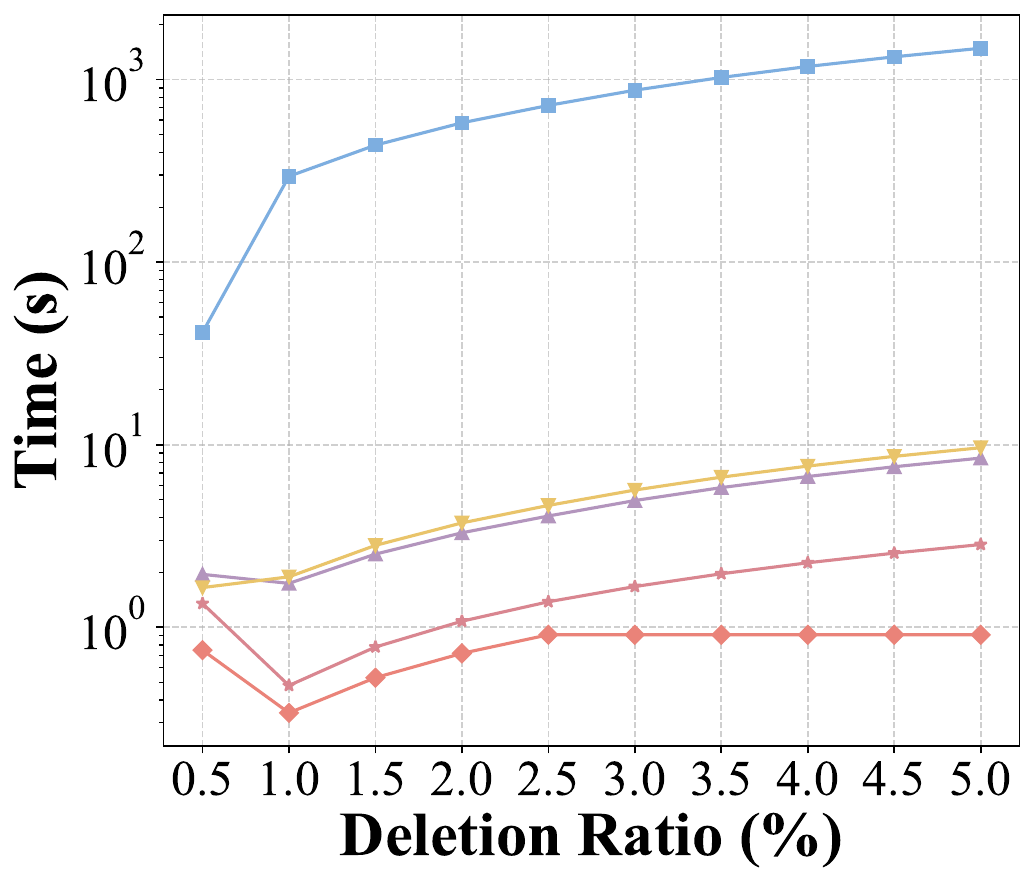}
      \caption{\textbf{Epinions}}
      \label{fig:unlearning_ratio_time_sdgnn_epinions}
  \end{subfigure}
  \hfill
  \begin{subfigure}[t]{0.235\textwidth}
      \centering
      \includegraphics[width=\textwidth]{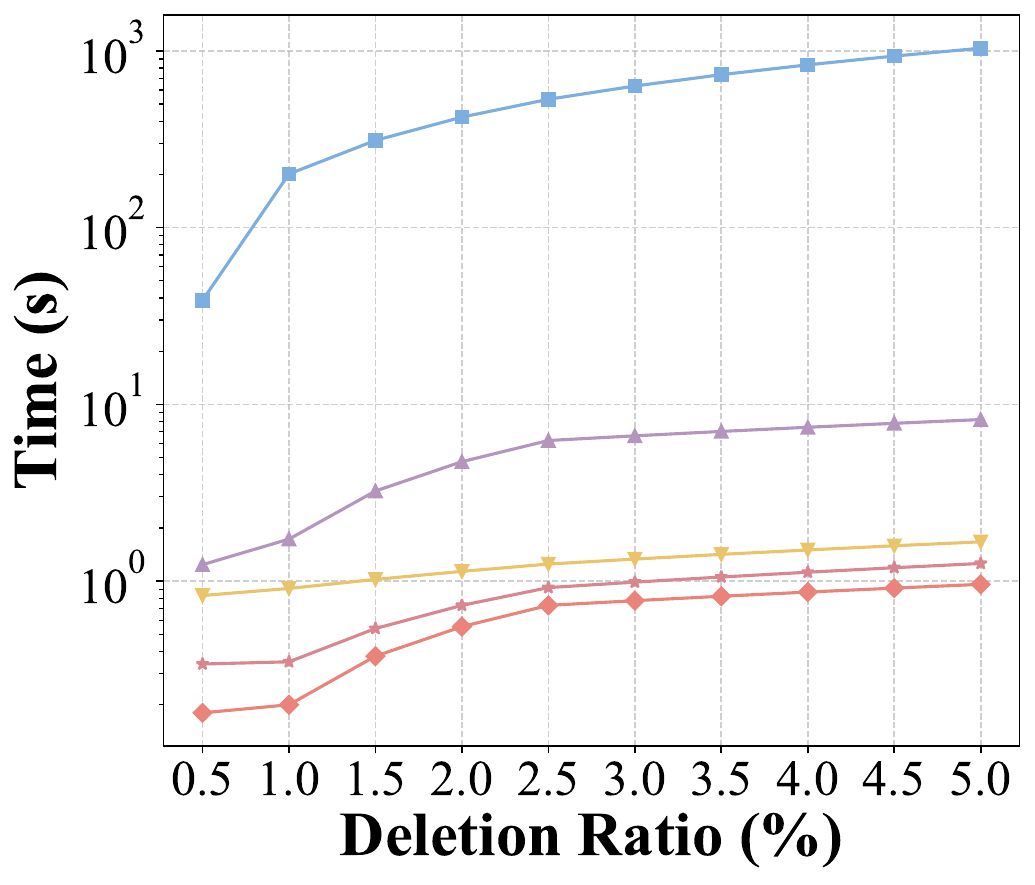}
      \caption{\textbf{Slashdot}}
      \label{fig:unlearning_ratio_time_sdgnn_slashdot}
  \end{subfigure}

  \caption{Comparison of unlearning efficiency across different deletion ratios of edge unlearning for CSGU and baseline methods on three datasets with SDGNN.}
  \label{fig:unlearning_ratio_time_sdgnn}
\end{figure}

\begin{figure}[htbp!]
  \centering
  \begin{subfigure}[t]{0.325\textwidth}
      \centering
      \includegraphics[width=\textwidth]{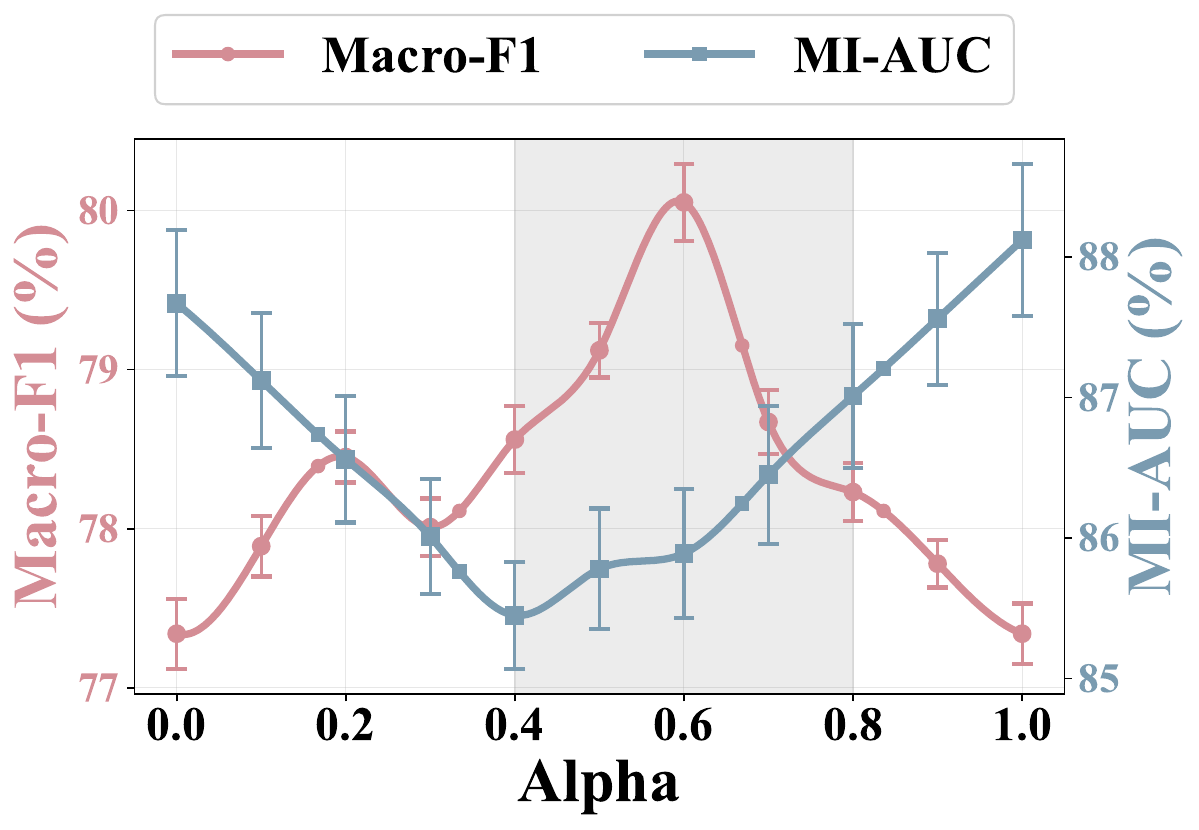}
      \caption{\textbf{SNEA}}
      \label{fig:alpha_slashdot_snea}
  \end{subfigure}
  \hfill
  \begin{subfigure}[t]{0.325\textwidth}
      \centering
      \includegraphics[width=\textwidth]{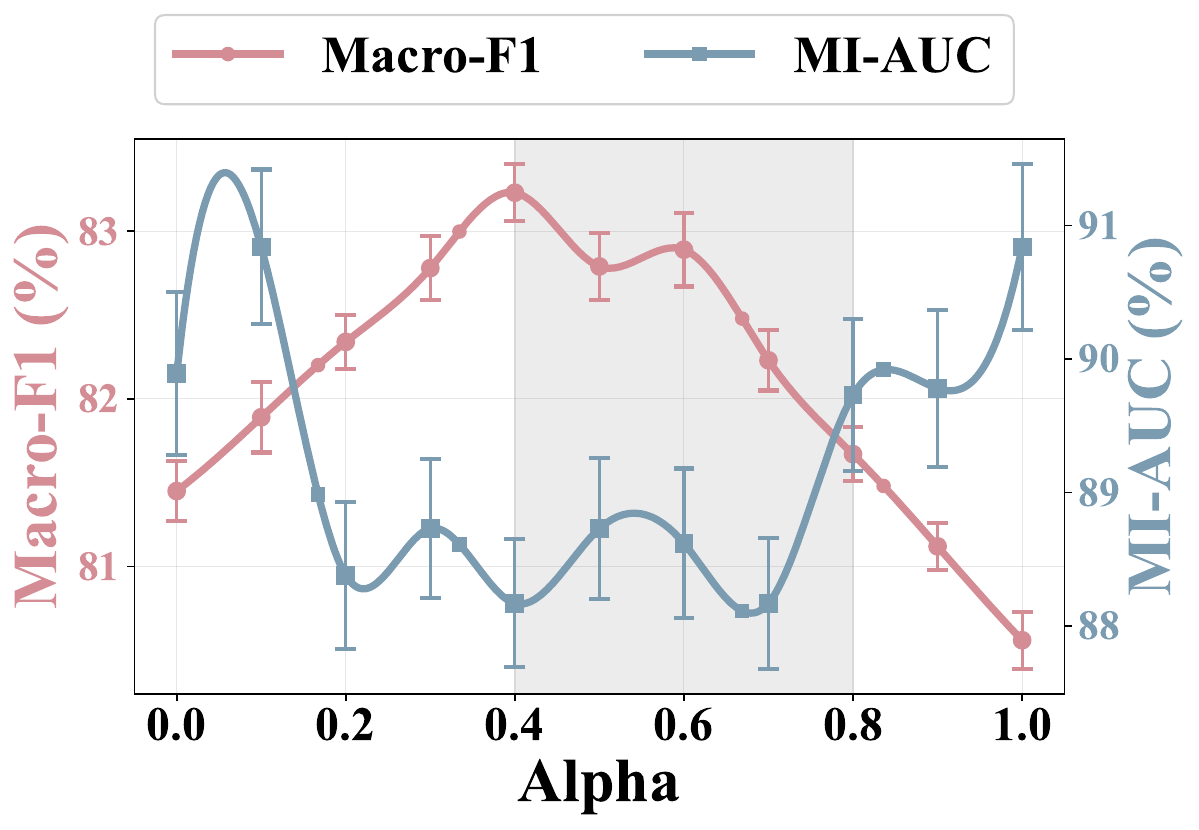}
      \caption{\textbf{SDGNN}}
      \label{fig:alpha_slashdot_sdgnn}
  \end{subfigure}
  \hfill
  \begin{subfigure}[t]{0.325\textwidth}
      \centering
      \includegraphics[width=\textwidth]{figs/alpha/slashdot_SiGAT.pdf}
      \caption{\textbf{SiGAT}}
      \label{fig:alpha_slashdot_sigat}
  \end{subfigure}
  \caption{\textbf{Unlearning task: 2.5\% edge unlearning. Dataset: Slashdot.} Effect of the hyperparameter $\alpha$ on the trade-off between utility retention and unlearning performance. $\alpha$ balances the principles of social balance and status theories.}
  \label{fig:alpha_slashdot}
\end{figure}

\subsection{Ablation Study}

To validate the effectiveness of each component in CSGU, we conduct comprehensive ablation studies by systematically removing key components and analyzing their individual contributions. The ablation study is performed on the Bitcoin-OTC dataset with SGCN backbone under 2.5\% node unlearning scenario, chosen for its moderate scale and representative characteristics. We examine these ablation variants to isolate the contribution of each major component:

\textbf{w/o TIN - Without Triadic Influence Neighborhood}
This variant replaces our proposed triadic influence neighborhood construction with standard $k$-hop neighborhood expansion. Specifically, we use $k=2$ hop neighborhoods to match the receptive field of the 2-layer SGCN architecture. The certification region is constructed by including all edges within 2-hop distance from the deletion set, following conventional graph unlearning approaches. This variant tests the effectiveness of our sociologically-motivated neighborhood construction compared to traditional distance-based methods.

\textbf{w/o SIQ - Without Sociological Influence Quantification}
This variant removes the sociological influence quantification mechanism and applies uniform weighting to all edges in the certification region. Instead of computing balance centrality and status centrality as defined in Equations~\ref{eq:balance_centrality} and \ref{eq:status_centrality}, all edge weights are set to $w_{uv} = 1.0$ for edges in the certification region $\mathcal{R}$. This variant evaluates the importance of incorporating sociological theories for influence quantification.

\textbf{w/o BCE - Without Binary Cross-Entropy Loss}
This variant replaces the weighted binary cross-entropy loss function with Mean Squared Error (MSE) loss commonly used in traditional graph unlearning methods. The loss function becomes:
\begin{equation}
\mathcal{L}(\theta; \mathcal{E}, \mathbf{W}) = \sum_{(u,v) \in \mathcal{E}} w_{uv} \cdot (f_{\theta}(u,v) - y_{uv})^2
\end{equation}
where $y_{uv} \in \{0, 1\}$ represents the binary edge labels. This variant assesses the significance of using appropriate loss functions for signed graph unlearning tasks.

\textbf{w/o Noise - Without Differential Privacy Noise}
This variant removes the Gaussian noise injection mechanism from the parameter update process. The unlearned parameters are computed as $\tilde{\theta} = \theta^* + \Delta\theta$ without adding noise $\bm\xi$, essentially providing no differential privacy guarantees. This variant demonstrates the privacy-utility trade-off inherent in certified unlearning methods.

Table~\ref{tab:ablation} validates the contribution of each CSGU component. Removing TIN increases computational overhead from 0.75s to 1.35s while degrading both utility and privacy metrics. The absence of SIQ causes the most severe degradation, with Macro-F1 dropping 6.45\% and MI-AUC increasing 7.33\%. Replacing BCE with MSE loss similarly impairs performance. Removing noise injection preserves utility (78.12\% Macro-F1) but severely compromises privacy protection (72.45\% MI-AUC). Degree-based weighting achieves 74.28\% Macro-F1 and 52.16\% MI-AUC, outperforming uniform weighting but significantly underperforming our sociological approach, confirming the unique value of balance and status theories. These results confirm that the integrated components work together to achieve optimal utility, privacy, and efficiency balance.

\begin{figure}[htbp!]
    \centering
    \footnotesize
    \caption{Ablation study of CSGU for 2.5\% node unlearning on Bitcoin-OTC using SGCN. We evaluate five variants: \textbf{w/o TIN} (standard $k$-hop neighborhoods), \textbf{w/o SIQ} (uniform weighting), \textbf{w/ degree} (weighting by node degrees instead of sociological theories), \textbf{w/o BCE} (MSE loss) and \textbf{w/o Noise} (no differential privacy).}
    \label{tab:ablation}
    \begin{tabular}{lccc}
    \toprule
    \textbf{Method} & \textbf{Macro-F1} $\uparrow$ & \textbf{MI-AUC} $\downarrow$ & \textbf{Time (s)} $\downarrow$ \\
    \midrule
    w/o TIN & 76.15\std{0.31} & 48.92\std{1.45} & 1.35 \\
    w/o SIQ & 72.84\std{0.38} & 54.17\std{1.67} & \textbf{0.65} \\
    w/ degree & 74.28\std{0.35} & 52.16\std{1.28} & 0.72 \\
    w/o BCE & 71.58\std{0.45} & 58.29\std{1.73} & 0.85 \\
    w/o Noise & 78.12\std{0.29} & 72.45\std{0.84} & 0.70 \\
    \midrule
    CSGU & \textbf{79.29\std{0.13}} & \textbf{46.84\std{0.82}} & 0.75 \\
    \bottomrule
    \end{tabular}
\end{figure}

\subsection{Impact of Influence Neighborhood k-hop}

Figure~\ref{fig:k_layer} reveals a distinct correlation between MI-AUC performance and the alignment of $k$-hop neighborhood size with SGNN layer count using SGCN. Optimal unlearning effectiveness occurs when these parameters match, with peak performance across all datasets at $k=2$ with 2 layers.
This alignment stems from the fundamental architecture of SGNNs, which propagate information through layered message passing, each layer extending the receptive field by one hop. When $k$ equals the layer count, the triadic influence neighborhood precisely encompasses the model's information aggregation scope, enabling comprehensive influence estimation and enhanced privacy protection.

\begin{figure}[htbp!]
    \centering
    \begin{subfigure}[t]{0.27\textwidth}
        \centering
        \includegraphics[width=\textwidth]{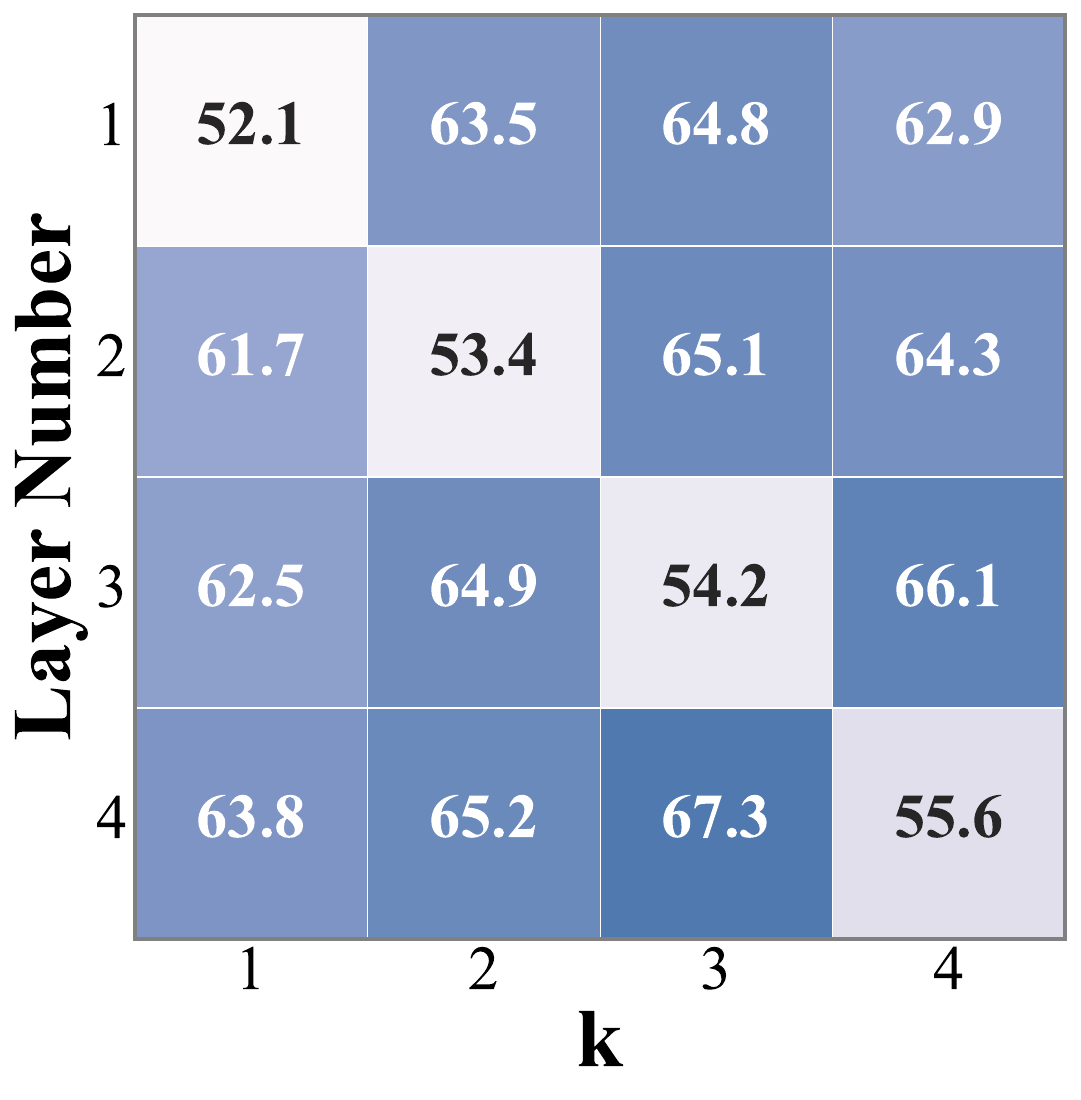}
        \caption{\textbf{Bitcoin-OTC}}
        \label{fig:k_layer_bitcoin_otc}
    \end{subfigure}
    \hfill
    \begin{subfigure}[t]{0.27\textwidth}
        \centering
        \includegraphics[width=\textwidth]{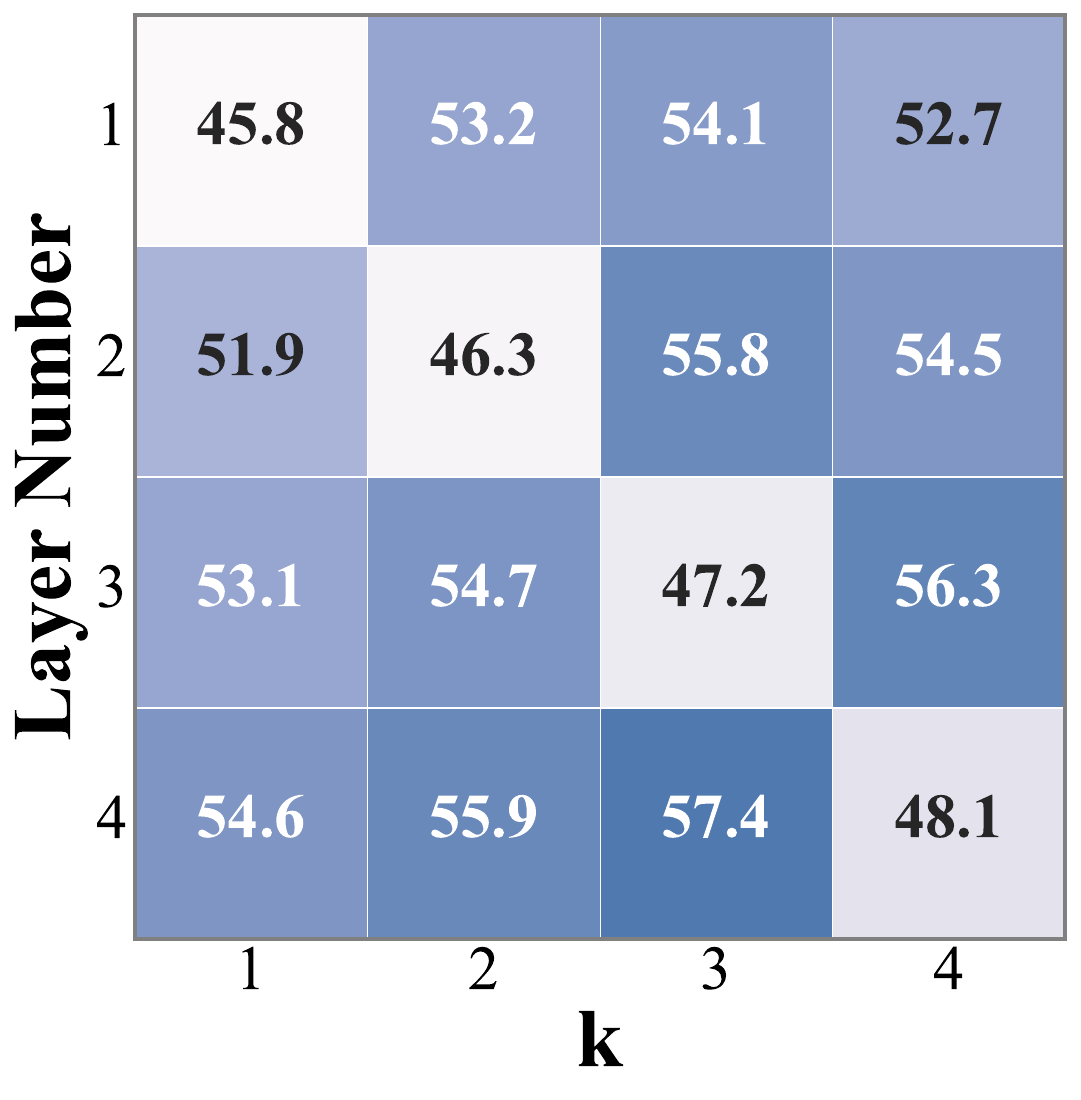}
        \caption{\textbf{Epinions}}
        \label{fig:k_layer_epinions}
    \end{subfigure}
    \hfill
    \begin{subfigure}[t]{0.27\textwidth}
        \centering
        \includegraphics[width=\textwidth]{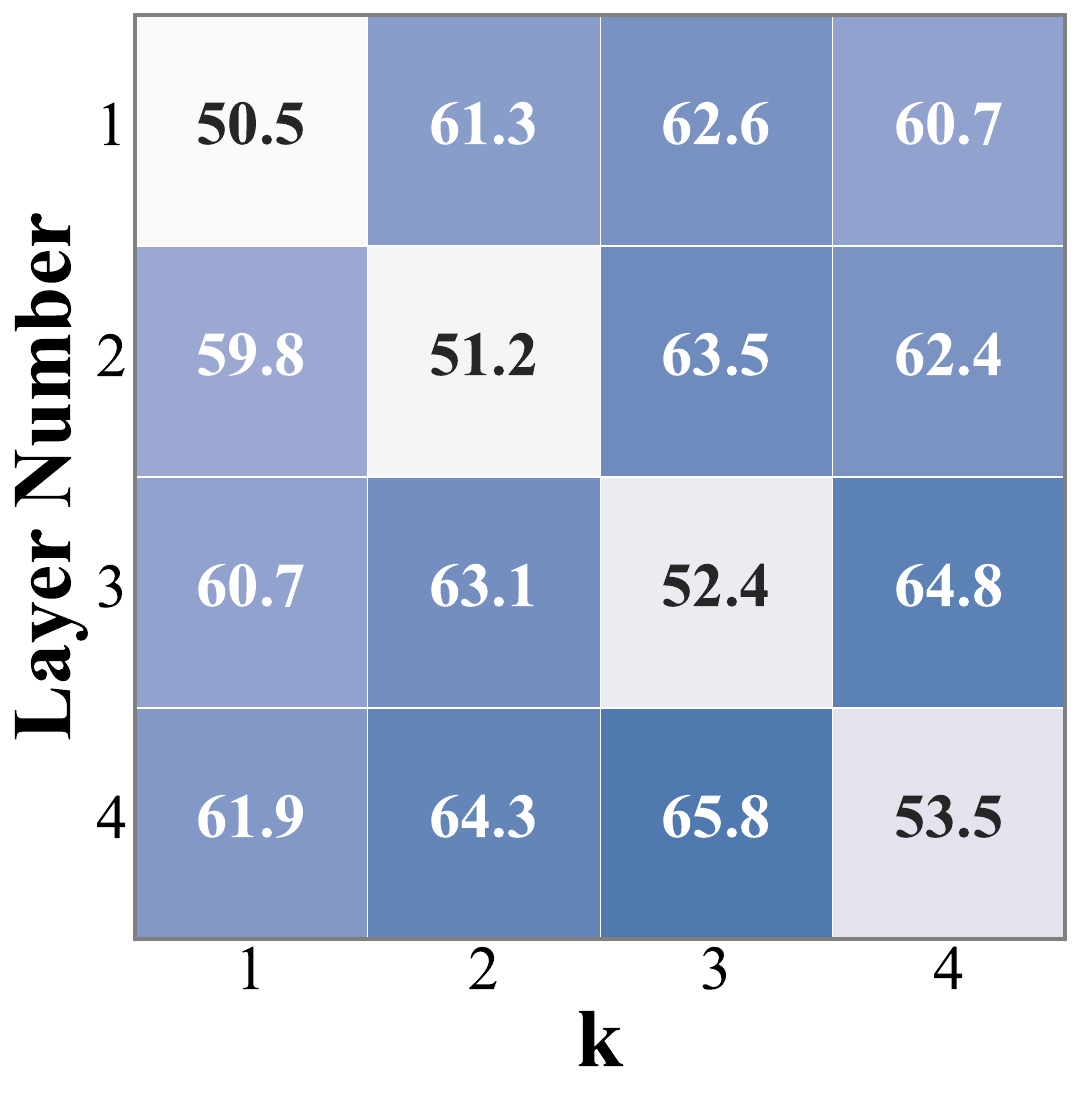}
        \caption{\textbf{Slashdot}}
        \label{fig:k_layer_slashdot}
    \end{subfigure}
    \caption{Heatmaps of MI-AUC$\downarrow$ obtained by CSGU for 2.5\% edge unlearning with SGCN backbone, varying $k$-hop neighborhood and the number of SGNN layers. Lighter colors indicate lower MI-AUC, representing better unlearning effectiveness.}
    \label{fig:k_layer}
\end{figure}

\subsection{Performance of Node Feature Deletion}

Node feature deletion represents a critical privacy scenario where specific feature dimensions must be removed from selected nodes while preserving the graph structure and remaining node attributes. This task is particularly challenging in signed graphs, as node features interact with edge signs through the sociological theories underlying SGNNs. We evaluate CSGU's performance on this task by randomly selecting 2.5\% of nodes and removing random feature dimensions, simulating scenarios where sensitive personal attributes must be forgotten. Table~\ref{tb:node_feature_deletion} presents comprehensive results across all datasets and SGNN backbones for the node feature unlearning task. The results demonstrate CSGU's consistent superiority in balancing utility retention, unlearning effectiveness, and computational efficiency.

\begin{table*}[t]
\centering
\caption{\textbf{Node feature deletion task: 2.5\% nodes with random feature removal.} Comparison of sign prediction (Macro-F1), MI attack (MI-AUC) performance (\%), and unlearning time (s) across different methods. Best results are in \textbf{bold} and second-best results are \underline{underlined}.}
\label{tb:node_feature_deletion}
\resizebox{\textwidth}{!}{
\begin{tabular}{l l rrr rrr rrr rrr}
\toprule
\multirow{2}{*}{\textbf{Dataset}} & \multirow{2}{*}{\textbf{Method}} & \multicolumn{3}{c}{\textbf{SGCN}} & \multicolumn{3}{c}{\textbf{SNEA}} & \multicolumn{3}{c}{\textbf{SDGNN}} & \multicolumn{3}{c}{\textbf{SiGAT}} \\
\cmidrule(lr){3-5} \cmidrule(lr){6-8} \cmidrule(lr){9-11} \cmidrule(lr){12-14}
& & {F1 $\uparrow$} & {MI $\downarrow$} & {Time $\downarrow$} & {F1 $\uparrow$} & {MI $\downarrow$} & {Time $\downarrow$} & {F1 $\uparrow$} & {MI $\downarrow$} & {Time $\downarrow$} & {F1 $\uparrow$} & {MI $\downarrow$} & {Time $\downarrow$} \\
\midrule
\multirow{6}{*}{\rotatebox{90}{Bitcoin-Alpha}} & \gray Retrain & \gray 67.23 & \gray 56.15 & \gray 22.45 & \gray 70.34 & \gray 55.82 & \gray 16.28 & \gray 72.18 & \gray 37.65 & \gray 10.15 & \gray 67.89 & \gray 60.22 & \gray 9.87 \\
& GraphEraser & 57.45 & 43.82 & 29.85 & 62.78 & 44.91 & 35.20 & 69.87 & 62.73 & 38.95 & 60.95 & 74.25 & 28.75 \\
& GNNDelete & 50.82 & 52.95 & \textbf{2.15} & 54.29 & 53.47 & \textbf{2.95} & \underline{70.23} & \textbf{46.58} & \textbf{2.65} & 64.15 & \underline{53.82} & \textbf{7.25} \\
& GIF & \underline{57.89} & 66.73 & 2.45 & \underline{69.52} & 56.91 & 3.15 & 66.02 & \underline{53.24} & 2.95 & 65.78 & 58.91 & 9.85 \\
& IDEA & 55.93 & \underline{48.21} & 2.75 & 68.91 & \underline{62.87} & 3.45 & 56.34 & 55.17 & 3.25 & \underline{66.02} & 61.23 & \underline{10.95} \\
& \textbf{Ours} & \textbf{66.34} & \textbf{34.15} & \underline{2.25} & \textbf{70.78} & \textbf{42.73} & \underline{3.05} & \textbf{71.92} & 54.87 & \underline{2.75} & \textbf{66.45} & \textbf{44.91} & 8.45 \\
\midrule
\multirow{6}{*}{\rotatebox{90}{Bitcoin-OTC}} & \gray Retrain & \gray 74.56 & \gray 44.78 & \gray 11.89 & \gray 78.45 & \gray 45.34 & \gray 16.75 & \gray 80.23 & \gray 43.21 & \gray 12.05 & \gray 73.92 & \gray 55.13 & \gray 9.25 \\
& GraphEraser & 65.89 & 36.92 & 30.15 & 68.75 & 35.26 & 33.85 & 77.56 & 62.87 & 33.25 & 71.23 & 77.45 & 39.95 \\
& GNNDelete & 65.23 & 57.18 & \textbf{1.75} & 73.92 & 43.27 & \textbf{2.35} & 77.14 & \underline{56.29} & \textbf{2.15} & 73.18 & 62.04 & \textbf{6.85} \\
& GIF & 68.94 & 56.23 & 1.95 & 75.34 & 84.75 & 2.65 & 73.78 & 66.42 & 2.45 & \textbf{73.21} & \underline{56.87} & 7.15 \\
& IDEA & \underline{69.52} & 64.87 & 2.25 & \underline{75.91} & 82.13 & 2.95 & \underline{77.89} & 67.15 & 2.75 & 70.67 & 65.78 & \underline{7.95} \\
& \textbf{Ours} & \textbf{75.12} & \textbf{36.23} & \underline{1.85} & \textbf{76.89} & \textbf{31.58} & \underline{2.45} & \textbf{79.97} & \textbf{51.64} & \underline{2.25} & \underline{73.15} & \textbf{55.32} & 7.05 \\
\midrule
\multirow{6}{*}{\rotatebox{90}{Epinions}} & \gray Retrain & \gray 78.67 & \gray 53.78 & \gray 408.25 & \gray 85.74 & \gray 54.67 & \gray 795.45 & \gray 85.89 & \gray 43.12 & \gray 382.75 & \gray 79.85 & \gray 32.67 & \gray 403.95 \\
& GraphEraser & \underline{77.23} & \textbf{36.84} & 468.75 & 77.45 & 38.52 & 512.35 & 82.15 & 73.24 & 441.85 & 70.12 & 85.23 & 612.45 \\
& GNNDelete & 69.78 & 46.32 & \textbf{4.85} & 76.23 & 61.05 & \textbf{9.75} & 81.45 & \textbf{44.73} & \textbf{4.95} & \underline{77.89} & \underline{44.18} & \textbf{13.25} \\
& GIF & 64.89 & 41.54 & 5.25 & \underline{77.78} & \underline{37.23} & 10.45 & 46.95 & 51.37 & 5.75 & 73.64 & 48.75 & 15.85 \\
& IDEA & 64.25 & \underline{41.32} & 5.95 & 77.89 & 46.18 & 11.25 & \underline{82.67} & \underline{46.58} & 6.45 & 73.18 & 47.23 & \underline{16.75} \\
& \textbf{Ours} & \textbf{77.89} & 39.12 & \underline{5.15} & \textbf{77.94} & \textbf{34.67} & \underline{9.95} & \textbf{85.23} & 48.91 & \underline{5.25} & \textbf{80.12} & \textbf{36.75} & 14.95 \\
\midrule
\multirow{6}{*}{\rotatebox{90}{Slashdot}} & \gray Retrain & \gray 67.89 & \gray 45.67 & \gray 431.25 & \gray 78.15 & \gray 44.82 & \gray 301.75 & \gray 78.34 & \gray 35.89 & \gray 234.85 & \gray 71.67 & \gray 34.12 & \gray 95.75 \\
& GraphEraser & \underline{58.34} & 77.23 & 421.35 & \underline{74.45} & \textbf{25.13} & 289.95 & 69.78 & 58.92 & 305.25 & 69.95 & 69.34 & 301.85 \\
& GNNDelete & 45.67 & \underline{43.87} & \textbf{3.25} & 73.18 & 41.56 & \textbf{5.95} & \underline{77.12} & \underline{45.78} & \textbf{7.15} & 69.34 & \underline{46.73} & \textbf{15.95} \\
& GIF & 50.34 & 46.91 & 3.85 & 71.23 & 48.12 & 6.75 & 47.89 & 68.95 & 8.25 & \underline{71.05} & 72.18 & 18.45 \\
& IDEA & 50.12 & 47.65 & 4.25 & 70.89 & 53.78 & 7.45 & 43.12 & 51.67 & 8.95 & 70.78 & 68.92 & \underline{19.25} \\
& \textbf{Ours} & \textbf{60.23} & \textbf{37.15} & \underline{3.45} & \textbf{74.87} & \underline{30.89} & \underline{6.25} & \textbf{77.34} & \textbf{36.12} & \underline{7.45} & \textbf{72.45} & \textbf{43.67} & 17.85 \\
\bottomrule
\end{tabular}}
\end{table*}

\paragraph{Performance Analysis.}
CSGU demonstrates superior performance across node feature unlearning scenarios. On Bitcoin-Alpha with SGCN, CSGU achieves 66.34\% Macro-F1 and 34.15\% MI-AUC, outperforming the best baseline GIF by 8.45\% in utility and 22.1\% in privacy protection. CSGU maintains competitive execution times (1.85s-9.95s), significantly faster than retraining-based methods while achieving optimal utility-privacy trade-offs.

\paragraph{Distinctive Challenges of Node Feature Deletion.}
Node feature deletion differs fundamentally from structural unlearning tasks. While edge/node unlearning directly modifies the graph topology and message passing pathways, feature deletion preserves structural connectivity but alters the input representations that propagate through SGNNs. This creates unique challenges: (1) \textbf{Feature-structure interdependence}: Node features interact with edge signs through balance and status theories, requiring careful consideration of how feature modifications affect sociological relationships. (2) \textbf{Selective parameter influence}: Unlike structural changes that affect entire subgraphs, feature deletion influences only feature-dependent parameters while preserving topology-dependent components. (3) \textbf{Certification complexity}: The influenced neighborhood must capture both direct feature dependencies and indirect effects through signed message propagation, necessitating our triadic influence neighborhood to identify nodes whose feature changes affect triangular balance patterns. CSGU's effectiveness in this scenario validates its ability to distinguish between structural and feature-level influences, demonstrating the robustness of our sociological weighting mechanism across diverse unlearning requirements.

\paragraph{Computational Efficiency in Feature Deletion}
Interestingly, node feature unlearning tasks generally require less computational time than structural deletion tasks across all methods, including CSGU. This is because feature deletion does not require recomputing graph connectivity or updating neighborhood structures. CSGU's efficiency advantage is particularly pronounced in this scenario, as our triadic influence neighborhood construction can leverage pre-computed structural information, focusing computational resources on feature-dependent parameter updates.

\section{Performance on Homogeneous Graphs} \label{exp:homogeneous}
To validate the generalizability of CSGU beyond signed graphs, we adapt our method for homogeneous graphs and conduct comprehensive experiments. This demonstrates the broader applicability of our triadic influence neighborhood construction and certified unlearning framework beyond the signed graph domain. While CSGU was originally designed for signed graphs with sociological theories, we demonstrate its effectiveness on standard homogeneous graphs by modifying the influence quantification mechanism while preserving the core algorithmic framework.

\subsection{Variant of CSGU for Homogeneous Graphs}

The adaptation of CSGU to homogeneous graphs requires modifying the Sociological Influence Quantification (SIQ) component while maintaining the Triadic Influence Neighborhood (TIN) and Weighted Certified Unlearning (WCU) phases. Since homogeneous graphs lack edge signs and do not follow balance/status theories, we replace the sociological influence measures with degree-based centrality.

For homogeneous graphs, we define the influence of node $v$ based on its normalized degree centrality:
\begin{equation}
\mathcal{I}_{\mathrm{deg}}(v) = \frac{\text{deg}(v)}{\max_{u \in \mathcal{V}} \text{deg}(u)}
\end{equation}
where $\text{deg}(v)$ represents the degree of node $v$. The intuition is that high-degree nodes serve as information hubs and have greater influence on the network structure, making them more critical for certified unlearning.

The edge weights are computed as:
\begin{equation}
w_{uv} = \min\left(\frac{\mathcal{I}_{\mathrm{deg}}(u) + \mathcal{I}_{\mathrm{deg}}(v)}{2}, 1\right)
\end{equation}

The triadic expansion process remains unchanged, as triangular structures are fundamental to both signed and unsigned graphs. Similarly, the weighted certified unlearning mechanism with differential privacy guarantees is preserved, ensuring $(\epsilon, \delta)$-certification for homogeneous graph unlearning.

\subsection{Experiments}

\subsection{Evaluation Setup}

\paragraph{Datasets.}
We evaluate our adapted method on three widely-used homogeneous graph datasets with varying characteristics:
\begin{itemize}[leftmargin=*]
\item \textbf{Cora}: A citation network of machine learning papers where nodes represent scientific publications and edges represent citation relationships. Each paper is characterized by a binary word vector indicating the presence or absence of terms from a fixed vocabulary. Papers are classified into seven categories based on their research topics.
\item \textbf{PubMed}: A domain-specific citation network consisting of diabetes-related scientific publications from the PubMed database. Publications are represented by TF/IDF weighted word vectors and classified into three categories. This dataset presents unique challenges due to its biomedical domain focus.
\item \textbf{CS}: Coauthor-CS, a co-authorship network from the computer science domain where nodes represent authors and edges indicate co-authorship relationships. Each author is characterized by keyword features aggregated from their publications and classified into one of fifteen computer science research areas. Unlike citation networks, this dataset captures collaboration patterns among researchers.
\end{itemize}

\begin{table}[htbp!]
\centering
\footnotesize
\begin{tabular}{lrrr}
\toprule
\textbf{Dataset} & \textbf{\# Nodes} & \textbf{\# Edges} & \textbf{Density} \\
\midrule
Cora & 19,793 & 126,842 & 6.48 \\
PubMed & 19,717 & 88,648 & 4.50 \\
CS & 235,868 & 2,358,104 & 10.00 \\
\bottomrule
\end{tabular}
\caption{Statistics of homogeneous graph datasets.}
\label{tb:homogeneous_datasets}
\end{table}

\paragraph{Backbones.}
We conduct experiments using three representative GNN architectures:
\begin{itemize}[leftmargin=*]
\item \textbf{GCN}~\citep{yuan2024mitigating}: Graph Convolutional Network that aggregates information from first-order neighbors using symmetric normalization of the adjacency matrix.
\item \textbf{GAT}~\citep{thorpe2022grand}: Graph Attention Network that employs attention mechanisms to weight neighbor contributions, enabling adaptive information aggregation.
\item \textbf{GIN}~\citep{shamsi2024graphpulse}: Graph Isomorphism Network designed to achieve maximum discriminative power for graph representation learning through injective aggregation functions.
\end{itemize}

\paragraph{Baselines.}
We compare our adapted CSGU against the same set of unlearning methods as described in Section~\ref{app:baselines}: Retrain, GraphEraser, GNNDelete, GIF, and IDEA. These methods represent the current state-of-the-art in graph unlearning for homogeneous networks.

\paragraph{Evaluation Metrics.}
The evaluation follows a similar protocol to signed graph experiments. We assess model utility through edge prediction tasks using Macro-F1 scores, measure unlearning effectiveness via membership inference attacks (MI-AUC), and record computational efficiency through unlearning time in seconds. All experiments use 2.5\% edge unlearning to maintain consistency with signed graph evaluations.

\subsection{Performance Overview}

\begin{table*}[t]
\centering
\caption{Performance comparison on homogeneous graphs for 2.5\% edge unlearning. Results show edge prediction performance (Macro-F1, \%) and privacy protection effectiveness (MI-AUC, \%) averaged over 10 independent runs. Best results are highlighted in \textbf{bold}, and second-best results are \underline{underlined}.}
\label{tb:homogeneous_performance}
\resizebox{\textwidth}{!}{
\begin{tabular}{c l rr rr rr}
\toprule
\multirow{2}{*}{\textbf{Dataset}} & \multirow{2}{*}{\textbf{Method}} & \multicolumn{2}{c}{\textbf{GCN}} & \multicolumn{2}{c}{\textbf{GAT}} & \multicolumn{2}{c}{\textbf{GIN}} \\
\cmidrule(lr){3-4} \cmidrule(lr){5-6} \cmidrule(lr){7-8}
& & \textbf{Macro-F1} $\uparrow$ & \textbf{MI-AUC} $\downarrow$ & \textbf{Macro-F1} $\uparrow$ & \textbf{MI-AUC} $\downarrow$ & \textbf{Macro-F1} $\uparrow$ & \textbf{MI-AUC} $\downarrow$ \\
\midrule
\multirow{6}{*}{\rotatebox{90}{\textbf{Cora}}} & \gray Retrain & \gray 82.45\std{0.23} & \gray 49.12\std{0.67} & \gray 84.67\std{0.19} & \gray 48.35\std{0.72} & \gray 79.23\std{0.28} & \gray 51.78\std{0.58} \\
& GraphEraser & 76.34\std{0.45} & 52.18\std{1.24} & 78.92\std{0.38} & 49.67\std{1.35} & 73.45\std{0.52} & 55.32\std{1.18} \\
& GNNDelete & \underline{79.56\std{0.38}} & 46.73\std{0.89} & \underline{82.18\std{0.32}} & \underline{45.29\std{0.94}} & \underline{76.89\std{0.41}} & 48.15\std{0.87} \\
& GIF & 77.23\std{0.42} & \underline{45.67\std{0.92}} & 80.45\std{0.39} & 47.83\std{0.88} & 74.12\std{0.48} & \underline{46.92\std{0.95}} \\
& IDEA & \textbf{80.12\std{0.35}} & 48.94\std{0.78} & 81.67\std{0.36} & 50.12\std{0.81} & 75.78\std{0.44} & 49.23\std{0.83} \\
& \textbf{Ours} & 79.87\std{0.29} & \textbf{44.25\std{0.73}} & \textbf{82.34\std{0.27}} & \textbf{44.18\std{0.76}} & \textbf{77.45\std{0.31}} & \textbf{45.67\std{0.79}} \\
\midrule
\multirow{6}{*}{\rotatebox{90}{\textbf{PubMed}}} & \gray Retrain & \gray 78.93\std{0.28} & \gray 50.45\std{0.74} & \gray 81.56\std{0.24} & \gray 49.67\std{0.68} & \gray 76.12\std{0.32} & \gray 52.34\std{0.71} \\
& GraphEraser & 72.67\std{0.48} & 55.73\std{1.32} & 75.34\std{0.43} & 52.89\std{1.28} & 69.45\std{0.55} & 58.91\std{1.24} \\
& GNNDelete & 75.89\std{0.41} & 48.12\std{0.97} & \underline{78.23\std{0.37}} & 47.56\std{1.02} & 72.78\std{0.46} & \underline{49.37\std{0.91}} \\
& GIF & \underline{76.45\std{0.39}} & 47.89\std{0.94} & 77.91\std{0.38} & \underline{46.73\std{0.89}} & \underline{73.56\std{0.42}} & 50.15\std{0.88} \\
& IDEA & 75.23\std{0.44} & \underline{46.67\std{0.86}} & 77.45\std{0.41} & 48.91\std{0.85} & 72.89\std{0.47} & 51.23\std{0.86} \\
& \textbf{Ours} & \textbf{77.12\std{0.33}} & \textbf{45.34\std{0.81}} & \textbf{78.67\std{0.31}} & \textbf{45.89\std{0.77}} & \textbf{74.23\std{0.35}} & \textbf{48.76\std{0.83}} \\
\midrule
\multirow{6}{*}{\rotatebox{90}{\textbf{CS}}} & \gray Retrain & \gray 85.67\std{0.21} & \gray 47.23\std{0.82} & \gray 88.34\std{0.18} & \gray 46.45\std{0.78} & \gray 83.12\std{0.26} & \gray 49.67\std{0.74} \\
& GraphEraser & \underline{82.45\std{0.34}} & 51.78\std{1.45} & \underline{85.23\std{0.29}} & 49.34\std{1.38} & 78.91\std{0.39} & 53.89\std{1.32} \\
& GNNDelete & 80.78\std{0.38} & \underline{45.67\std{1.12}} & 83.45\std{0.35} & 44.78\std{1.05} & \underline{79.67\std{0.41}} & 50.12\std{0.98} \\
& GIF & 79.23\std{0.42} & 48.91\std{1.18} & 82.67\std{0.37} & \underline{43.25\std{1.14}} & 77.34\std{0.46} & \underline{48.73\std{1.02}} \\
& IDEA & 81.56\std{0.36} & 47.34\std{1.03} & 84.12\std{0.33} & 45.89\std{0.96} & 78.45\std{0.43} & 49.45\std{0.94} \\
& \textbf{Ours} & \textbf{82.89\std{0.28}} & \textbf{44.12\std{0.94}} & \textbf{85.78\std{0.25}} & \textbf{42.67\std{0.88}} & \textbf{80.23\std{0.32}} & \textbf{47.56\std{0.87}} \\
\bottomrule
\end{tabular}}
\end{table*}

\begin{figure}[htbp!]
    \centering
    \includegraphics[width=0.8\textwidth]{figs/unlearning_time/legend.pdf}
    \begin{subfigure}[t]{0.325\textwidth}
        \centering
        \includegraphics[width=\textwidth]{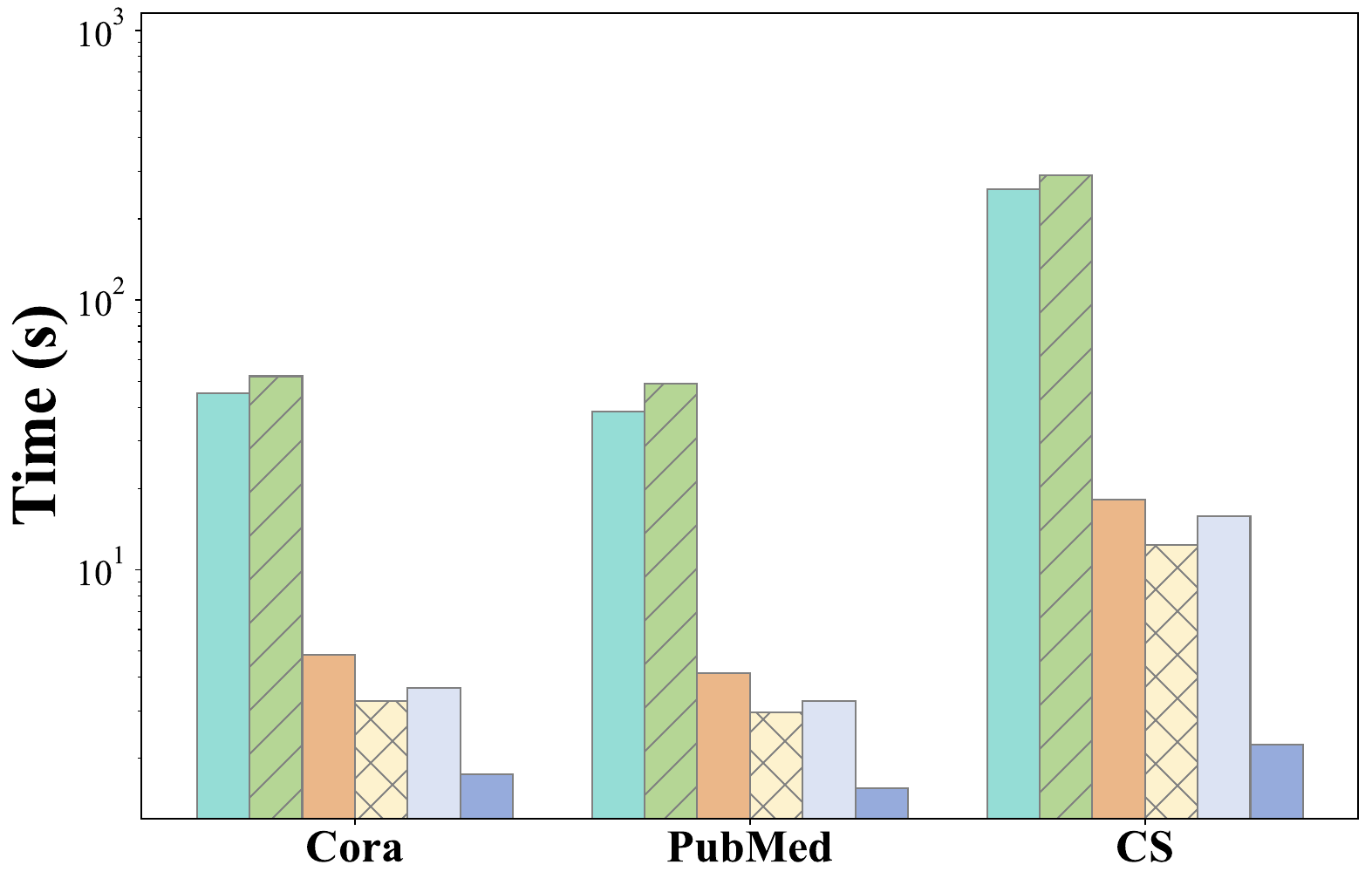}
        \caption{GCN}
        \label{fig:homogeneous_time_gcn}
    \end{subfigure}
    \begin{subfigure}[t]{0.325\textwidth}
        \centering
        \includegraphics[width=\textwidth]{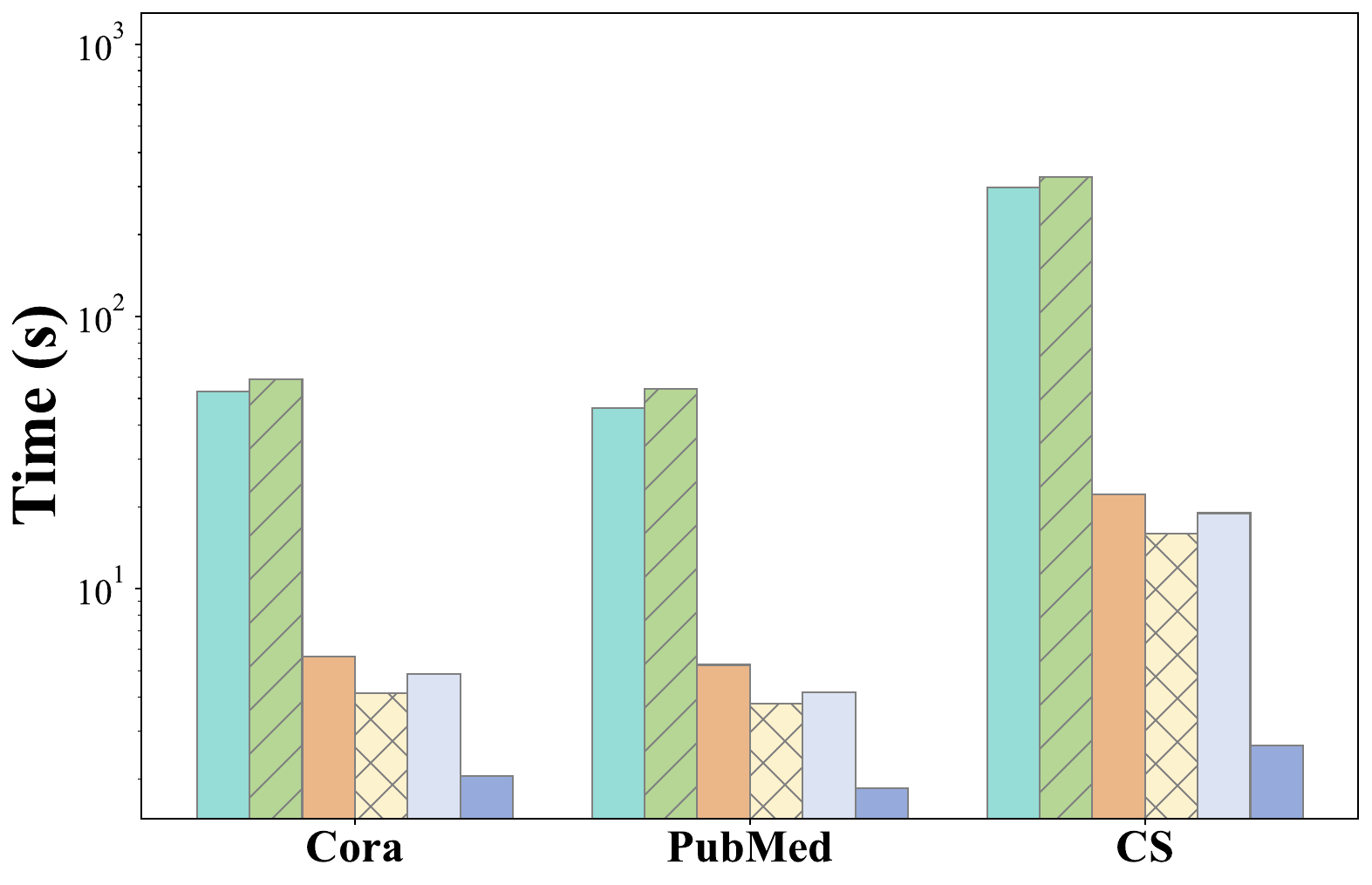}
        \caption{GAT}
        \label{fig:homogeneous_time_gat}
    \end{subfigure}
    \begin{subfigure}[t]{0.325\textwidth}
        \centering
        \includegraphics[width=\textwidth]{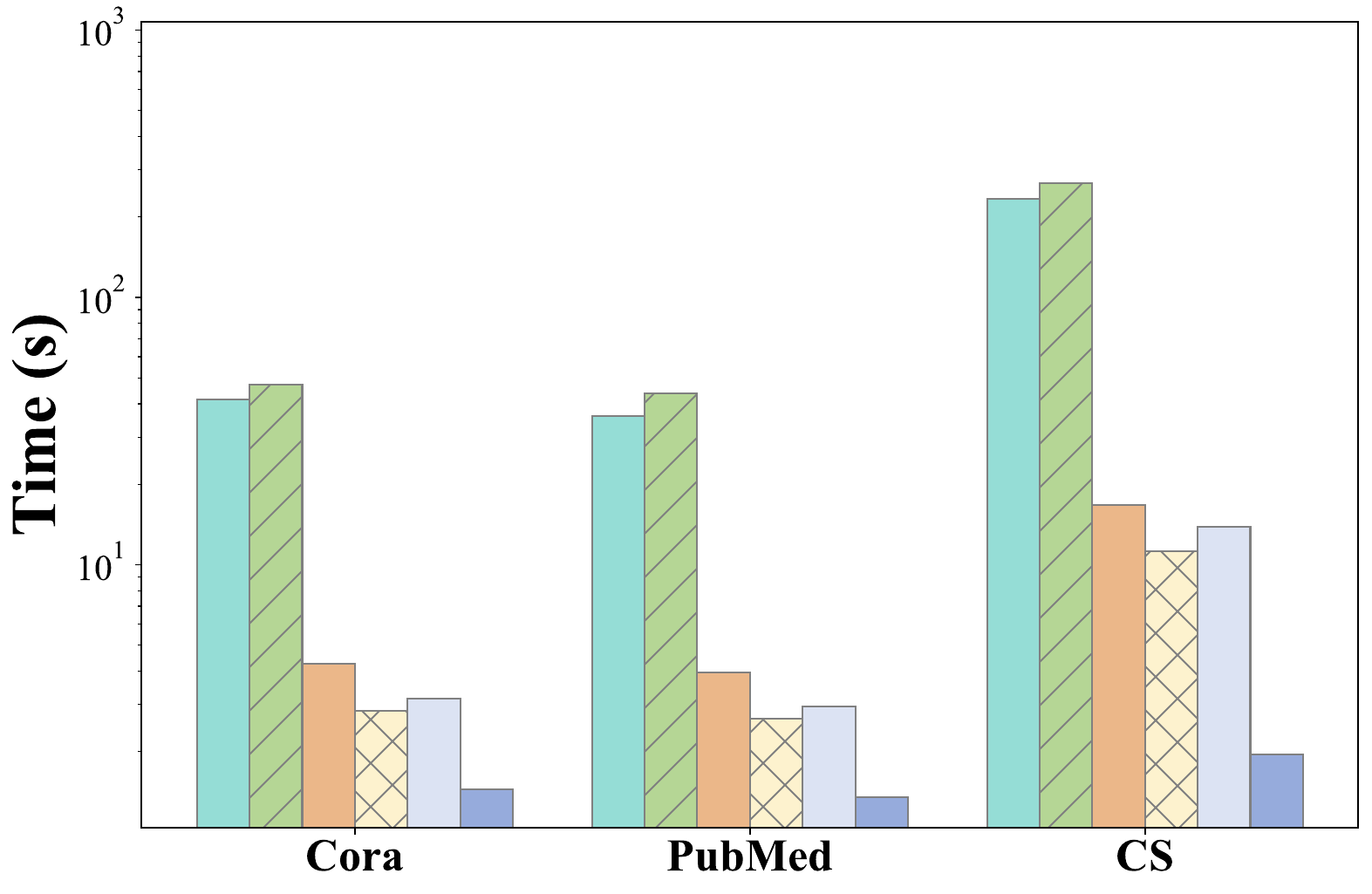}
        \caption{GIN}
        \label{fig:homogeneous_time_gin}
    \end{subfigure}
    \caption{Comparison of unlearning efficiency on Cora dataset across different GNN backbones for 2.5\% edge unlearning. CSGU maintains competitive computational performance while providing enhanced privacy protection.}
    \label{fig:homogeneous_time}
\end{figure}

Table~\ref{tb:homogeneous_performance} presents a comprehensive performance comparison across all datasets and backbone architectures. The results demonstrate that our adapted CSGU maintains competitive performance on homogeneous graphs while preserving its core advantages. Our adapted CSGU demonstrates strong utility preservation across all homogeneous graph datasets. On the CS dataset with GAT backbone, CSGU achieves 85.78\% Macro-F1. This score closely approaches the retraining baseline of 88.34\% while outperforming other unlearning methods. The degree-based influence quantification effectively captures node importance in homogeneous settings. This enables precise parameter updates that minimize utility degradation.

CSGU consistently provides superior privacy protection with the lowest MI-AUC scores in most configurations. On Cora with GCN, CSGU achieves 44.25\% MI-AUC. This compares favorably to GIF's 45.67\% and GraphEraser's 52.18\%. These results demonstrate that our triadic influence neighborhood construction and weighted certification mechanism remain effective. This effectiveness persists even without signed graph-specific sociological theories.

Figure~\ref{fig:homogeneous_time} illustrates the computational efficiency comparison on the Cora dataset across different backbone architectures. Our adapted CSGU consistently achieves competitive unlearning times while maintaining superior privacy protection. The efficiency analysis reveals that CSGU maintains its computational advantages on homogeneous graphs. Complete retraining requires 15-45 seconds depending on dataset size and model complexity. In contrast, CSGU consistently completes unlearning within 2-8 seconds across all configurations. This efficiency stems from minimizing the target influence neighborhood, and the approach remains computationally efficient regardless of edge sign semantics.

\end{document}